\def\eqref#1{equation~\ref{#1}}
\def\1{\bm{1}}
\DeclareMathAlphabet{\mathsfit}{\encodingdefault}{\sfdefault}{m}{sl}
\SetMathAlphabet{\mathsfit}{bold}{\encodingdefault}{\sfdefault}{bx}{n}
\newcommand{\E}{\mathbb{E}}
\newcommand{\R}{\mathbb{R}}
\DeclareMathOperator*{\argmax}{arg\,max}
\def\R{{\mathbb{R}}}
\def\E{{\mathbb{E}}}
\def\L{{\mathcal{L}}}
\def\H{{\mathcal{H}}}
\def\S{{\mathcal{S}}}
\def\D{{\mathcal{D}}}
\def\O{{\mathcal{O}}}
\def\GE{{\mathcal{G}}}
\def\EE{{\mathcal{E}}}
\def\HV{{\boldsymbol{\mathcal{H}}}}
\newcommand{\norm}[1]{\left\|{#1}\right\|}
\newcommand{\Ra}[1]{\mathfrak{R}_m \left( {#1} \right)}
\newcommand{\FH}[2]{{#1} \circ {#2} }
\newcommand{\sh}[2]{ s_{\boldsymbol \epsilon}({#1}, {#2}) }
\newtheorem{theorem}{Theorem}
\newtheorem{proposition}{Proposition}
\newtheorem{lemma}{Lemma}
\newtheorem{assumption}{Assumption}
\newtheorem{defn}{Definition}
\title{Generalization and Estimation Error Bounds for Model-based Neural Networks}
\author{Avner Shultzman$^{1}$, Eyar Azar$^{1}$, Miguel R. D. Rodrigues$^2$ \& Yonina C. Eldar$^1$ \thanks{ This project has received funding from the European Research Council (ERC) under the European Union’s Horizon 2020 research, the innovation programme (grant agreement No. 101000967), and the Israel Science Foundation under Grant 536/22.
Y. C. Eldar and M. R. D. Rodrigues are supported by The Weizmann-UK Making Connections Programme (Ref. 129589). M. R. D. Rodrigues is also supported by the Alan Turing Institute.
The authors wish to thank Dr. Gholamali Aminian from the Alan Turing Institute, UK, for his contribution to the proofs' correctness.} \\
Faculty of Mathematics and Computer Science, Weizmann Institute of Science, Israel$^1$ \\
Faculty of Engineering Science, University College London, UK$^2$\\
\texttt{\{avner.shultzman,eyar.azar,yonina.eldar\}@weizmann.ac.il}\\
\texttt{m.rodrigues@ucl.ac.uk}
}
\begin{document}

\maketitle

\begin{abstract}
  Model-based neural networks provide unparalleled performance for various tasks, such as sparse coding and compressed sensing problems.
  Due to the strong connection with the sensing model, these networks are interpretable and inherit prior structure of the problem.
  In practice, model-based neural networks exhibit higher generalization capability compared to ReLU neural networks.
  However, this phenomenon was not addressed theoretically.
  Here, we leverage complexity measures including the global and local Rademacher complexities, in order to provide upper bounds on the generalization and estimation errors of model-based networks.
  We show that the generalization abilities of model-based networks for sparse recovery outperform those of regular ReLU networks, and derive practical design rules that allow to construct model-based networks with guaranteed high generalization.
  We demonstrate through a series of experiments that our theoretical insights shed light on a few behaviours experienced in practice, including the fact that ISTA and ADMM networks exhibit higher generalization abilities (especially for small number of training samples), compared to ReLU networks.
\end{abstract}

\section{Introduction}

Model-based neural networks provide unprecedented performance gains for solving sparse coding problems, such as the learned iterative shrinkage and thresholding algorithm (ISTA) \citep{GregorLeCun2010LISTA} and learned alternating direction method of multipliers (ADMM) \citep{Boyd2011LADMM}.
In practice, these approaches outperform feed-forward neural networks with ReLU nonlinearities. 

These neural networks are usually obtained from algorithm unrolling (or unfolding) techniques, which were first proposed by Gregor and LeCun \citep{GregorLeCun2010LISTA}, to connect iterative algorithms to neural network architectures.
The trained networks can potentially shed light on the problem being solved.
For ISTA networks, each layer represents an iteration of a gradient-descent procedure.
As a result, the output of each layer is a valid reconstruction of the target vector, and we expect the reconstructions to improve with the network's depth.
These networks capture original problem structure, which translates in practice to a lower number of required training data \citep{Monga2021Unrolling}.
Moreover, the generalization abilities of model-based networks tend to improve over regular feed-forward neural networks \citep{Behboodi2020MB_RC,Schnoor2021MB_RC}.

Understanding the generalization of deep learning algorithms has become an important open question. The generalization error of machine learning models measures the ability of a class of estimators to generalize from training to unseen samples, and avoid overfitting the training \citep{Jakubovitz2019GEinDL}.
Surprisingly, various deep neural networks exhibit high generalization abilities, even for increasing networks' complexities \citep{Neyshabur2015c,Belkin2019BiasVariance}.
Classical machine learning measures such as the Vapnik-Chervonenkis (VC) dimension \citep{Vapnik1991VCdim} and Rademacher complexity (RC) \citep{Bartlett2002Rademacher}, predict an increasing generalization error (GE) with the increase of the models' complexity, and fail to explain the improved generalization observed in experiments.
More advanced measures consider the training process and result in tighter bounds on the estimation error (EE), were proposed to investigate this gap, such as the local Rademacher complexity (LRC) \citep{Bartlett2005LRC}.
To date, the EE of model based networks using these complexity measures has not been investigated to the best of our knowledge.
\subsection{Our Contributions} 
In this work, we leverage existing complexity measures such as the RC and LRC, in order to bound the generalization and estimation errors of learned ISTA and learned ADMM networks.
\begin{itemize}
    \item We provide new bounds on the GE of ISTA and ADMM networks, showing that the GE of model-based networks is lower than that of the common ReLU networks.
    The derivation of the theoretical guarantees combines existing proof techniques for computing the generalization error of multilayer networks with new methodology for bounding the RC of the soft-thresholding operator, that allows a better understanding of the generalization ability of model based networks.
    
    \item The obtained bounds translate to practical design rules for model-based networks which guarantee high generalization.
    In particular, we show that a nonincreasing GE as a function of the network's depth is achievable, by limiting the weights' norm in the network.
    This improves over existing bounds, which exhibit a logarithmic increase of the GE with depth \citep{Schnoor2021MB_RC}.
    The GE bounds of the model-based networks suggest that under similar restrictions, learned ISTA networks generalize better than learned ADMM networks.
    
    \item We also exploit the LRC machinery to derive bounds on the EE of feed-forward networks, such as ReLU, ISTA, and ADMM networks.    The EE bounds depend on the data distribution and training loss.
    We show that the model-based networks achieve lower EE bounds compared to ReLU networks.
    
    \item 
    We focus on the differences between ISTA and ReLU networks, in term of performance and generalization.
    This is done through a series of experiments for sparse vector recovery problems.
    The experiments indicate that the generalization abilities of ISTA networks are controlled by the soft-threshold value.
    For a proper choice of parameters, ISTA achieves lower EE along with more accurate recovery.
    The dependency of the EE as a function of $\lambda$ and the number of training samples can be explained by the derived EE bounds.

\end{itemize}

\subsection{Related Work}
Understanding the GE and EE of general deep learning algorithms is an active area of research.
A few approaches were proposed, which include considering networks of weights matrices with bounded norms (including spectral and $L_{2,1}$ norms) \citep{Bartlett2017Spectral,Sokolic2017NormBoundMiguel}, and analyzing the effect of multiple regularizations employed in deep learning, such as weight decay, early stopping, or  drop-outs, on the generalization abilities \citep{Neyshabur2015Regularization,Gao2016Dropout,Amjad2021Miguel}.
Additional works consider global properties of the networks, such as a bound on the product of all Frobenius norms of the weight matrices in the network \citep{Shamir2018SizeIndep}.
However, these available bounds do not capture the GE behaviour as a function of network depth, where an increase in depth typically results in improved generalization.
This also applies to the bounds on the GE of ReLU networks, detailed in Section \ref{sec:GE of ReLU}.

Recently, a few works focused on bounding the GE specifically for deep iterative recovery algorithms \citep{Behboodi2020MB_RC,Schnoor2021MB_RC}.
They focus on a broad class of unfolded networks for sparse recovery, and provide bounds which scale logarithmically with the number of layers \citep{Schnoor2021MB_RC}. 
However, these bounds still do not capture the behaviours experienced in practice.

Much work has also focused on incorporating the networks' training process into the bounds. 
The LRC framework due to Bartlett, Bousquet, and Mendelson \citep{Bartlett2005LRC} assumes that the training process results in a smaller class of estimation functions, such that the distance between the estimator in the class and the empirical risk minimizer (ERM) is bounded.
An additional related framework is the effective dimensionality due to Zhang \citep{Zhang2002EffectiveDimension}.
These frameworks result in different bounds, which relate the EE to the distance between the estimators.
These local complexity measures were not applied to model-based neural networks.

Throughout the paper we use boldface lowercase and uppercase letters to denote vectors and matrices respectively.
The $L_1$ and $L_2$ norms of a vector $\boldsymbol x$ are written as $\norm{\boldsymbol x}_1$ and $\norm{\boldsymbol x}_2$ respectively, and the $L_{\infty}$ (which corresponds to the maximal $L_1$ norm over the matrix's rows) and spectral norms of a matrix $\boldsymbol X$, are denoted by $\norm{\boldsymbol X}_{\infty}$ and $\norm{\boldsymbol X}_{\sigma}$ respectively.
We denote the transpose operation by $(\cdot)^T$.
For any function $f$ and class of functions $\H$, we define $
    \FH{f} \H= \left\{ x \mapsto f \circ h(x): h \in \H \right\}.
$

\section{Preliminaries}
\label{sec:Preliminaries}


\subsection{Network architecture}
\label{sec:Network_Architecture}

We focus on model-based networks for sparse vector recovery, applicable to the linear inverse problem
\begin{equation}
\begin{aligned}
    \boldsymbol y = \boldsymbol A \boldsymbol x + \boldsymbol e
    \label{eq:linear_inverse_problem}
\end{aligned}
\end{equation}
where $\boldsymbol y \in \R^{n_y}$ is the observation vector with $n_y$ entries, $\boldsymbol x \in \R^{n_x}$ is the target vector with $n_x$ entries, with $n_x > n_y$, $\boldsymbol A \in \R^{n_y \times n_x}$ is the linear operator, and $\boldsymbol e \in \R^{n_y}$ is  additive noise.
The target vectors are sparse with sparsity rate $\rho$, such that at most $\lfloor \rho n_x \rfloor$ entries are nonzero.
The inverse problem consists of recovering the target vector $\boldsymbol x$, from the observation vector $\boldsymbol y$.

Given that the target vector is assumed to be sparse, recovering $\boldsymbol x$ from $\boldsymbol y$ in (\ref{eq:linear_inverse_problem}) can be formulated as an optimization problem,
such as least absolute shrinkage and selection operator (LASSO) \citep{Tibshirani2013LASSO},
that can be solved with well-known iterative methods including ISTA and ADMM. 
To address more complex problems, such as an unknown linear mapping $\boldsymbol A$, and to avoid having to fine tune parameters, these algorithms can be mapped into model-based neural networks using unfolding or unrolling techniques \citep{GregorLeCun2010LISTA,Boyd2011LADMM,Monga2021Unrolling,Yang2018ADMMCSNet}.
The network's architecture imitates the original iterative method's functionality and enables to learn the models' parameters with respect to a set of training examples.

We consider neural networks with $L$ layers (referred to as the network's depth), which corresponds to the number of iterations in the original iterative algorithm.
The layer outputs of an unfolded ISTA network $\boldsymbol h_I^{l},\ l \in [1, L]$, are defined by the following recurrence relation, shown in Fig. \ref{fig:Model-based networks architectures}:
\begin{equation}
\begin{aligned}
    \boldsymbol h_I^{l} = \S_{\lambda} \left( \boldsymbol W^{l } \boldsymbol h_I^{l - 1} + \boldsymbol b\right), \ \ \ \boldsymbol h_I^0 = \S_{\lambda} (\boldsymbol b)
    \label{eq:ISTA_unfolded}
\end{aligned}
\end{equation}
where $\boldsymbol W^{l} \in \R^{n_x \times n_x},\ l \in [1,L]$ are the weights matrices corresponding to each of the layers, with bounded norms $||\boldsymbol W^{l}||_{\infty} \leq B_l$.
We further assume that the $L_2$ norm of $\boldsymbol W^{1}$ is bounded by $B_1$.
The vector $\boldsymbol b = \boldsymbol A^T \boldsymbol y$ is a constant bias term that depends on the observation $\boldsymbol y$, where we assume that the initial values are bounded, such that $||\boldsymbol h_I^{0}||_1 \leq B_0$.
In addition, $\S_{\lambda}(\cdot)$ is the elementwise soft-thresholding operator
\begin{equation}
\begin{aligned}
    \S_{\lambda}(\boldsymbol h) = \text{sign}(\boldsymbol h) \max(|\boldsymbol h| - \lambda, \boldsymbol 0)
    \label{eq:soft-thresholding_operator}
\end{aligned}
\end{equation}
where the functions $\text{sign}(\cdot)$ and $\max(\cdot)$ are applied elementwise, and $\boldsymbol 0$ is a vector of zeros.
As $\S_{\lambda}(\cdot)$  is an elementwise function it preserves the input's dimension, and can be applied on scalar or vector inputs.
The network's prediction is given by the last layer in the network $\boldsymbol{ \hat{x}} = \boldsymbol h_I^L (\boldsymbol y)$.
We note that the estimators are functions mapping $\boldsymbol y$ to $\hat{\boldsymbol{x}}$, $\boldsymbol h_I^L: \mathbb{R}^{n_y} \xrightarrow[]{} \mathbb{R}^{n_x}$, characterized by the weights, i.e. $\boldsymbol h_I^L = \boldsymbol h_I^L ( \{ \boldsymbol W^l \}_{l = 1}^{L } )$.
The class of functions representing the output at depth $L$ in an ISTA network, is $\HV_I^L = \{\boldsymbol h_I^L ( \{ \boldsymbol W^l \}_{l = 1}^{L} ): ||\boldsymbol W^l||_{\infty} \leq B_l\ l \in [1, L ],\ ||\boldsymbol W^1||_{2} \leq B_1 \}$.

Similarly, the $l$th layer of unfolded ADMM is defined by the following recurrence relation
\begin{equation}
\begin{aligned}
    \boldsymbol h_A^l & = \boldsymbol W^l \left( \boldsymbol z^{l - 1} + \boldsymbol u^{l - 1} \right) + \boldsymbol b \\
    \boldsymbol z^l & = \S_{\lambda} \left( \boldsymbol h_A^{l} - \boldsymbol u^{l - 1} \right), & \boldsymbol z^0 = \boldsymbol 0  \\
    \boldsymbol u^l & = \boldsymbol u^{l - 1} - \gamma \left( \boldsymbol h_A^{l} - \boldsymbol z^{l} \right), & \boldsymbol u^0 = \boldsymbol 0 
    \label{eq:ADMM_unfolded}
\end{aligned}
\end{equation}
where $\boldsymbol 0$ is a vector of zeros, $\boldsymbol b = \boldsymbol A^T \boldsymbol y$ is a constant bias term, and $\gamma > 0$ is the step size derived by the original ADMM algorithm, as shown in Fig. \ref{fig:Model-based networks architectures}.
The estimators satisfy $\boldsymbol h_A^L: \mathbb{R}^{n_y} \xrightarrow[]{} \mathbb{R}^{n_x}$, and the class of functions representing the output at depth $L$ in an ADMM network, is $\HV_A^L = \{\boldsymbol h_A^L ( \{ \boldsymbol W^l \}_{l = 1}^{L} ): ||\boldsymbol W^l||_{\infty} \leq B_l\ l \in [1, L ],\ ||\boldsymbol W^1||_{2} \leq B_1 \}$, where we impose the same assumptions on the weights matrices as ISTA networks.

For a depth-$L$ ISTA or ADMM network, the learnable parameters are the weight matrices $\{ \boldsymbol W^{l} \}_{l=1}^L$.
The weights are learnt by minimizing a loss function $\L$ on a set of $m$ training examples $S = \left\{ \left( \boldsymbol x_i, \boldsymbol y_i \right)\right\}_{i = 1}^m$, drawn from an unknown distribution $\D$, consistent with the model in (\ref{eq:linear_inverse_problem}).
We consider the case where the per-example loss function is obtained by averaging over the example per-coordinate losses:
\begin{equation}
\begin{aligned}
    \L\left( \boldsymbol h(\boldsymbol y), \boldsymbol x\right) = \frac{1}{n_x} \sum_{j = 1}^{n_x} \ell \left( \boldsymbol h_j(\boldsymbol y), \boldsymbol x_j \right)
    \label{eq:loss decomposition}
\end{aligned}
\end{equation}
where $h_j(\boldsymbol y)$ and $\boldsymbol x_j$ denote the $j$th coordinate of the estimated and true targets, and $\ell: \mathbb{R} \times \mathbb{R} \xrightarrow[]{} \mathbb{R_+}$ is $1$-Lipschitz in its first argument.
This requirement is satisfied in many practical settings, for example with the $p$-power of $L_p$ norms, and is also required in a related work \citep{XuC2016}.
The loss of an estimator $\boldsymbol h \in \HV$ which measures the difference between the true value $\boldsymbol x$ and the estimation $\boldsymbol h(\boldsymbol y)$, is denoted for convenience by $\L(\boldsymbol h) = \L\left( \boldsymbol h(\boldsymbol y), \boldsymbol x\right)$.

There exists additional forms of learned ISTA and ADMM networks, which include learning an additional set of weight matrices affecting the bias terms \citep{Monga2021Unrolling}.
Also, the optimal value of $\lambda$ generally depends on the target vector sparsity level.
Note however that for learned networks, the value of $\lambda$ at each layer can also be learned.
However, here we focus on a more basic architecture with fixed $\lambda$ in order to draw theoretical conclusions.

\begin{figure}[!h]
        \centering
        \includegraphics[width = 2.6in]{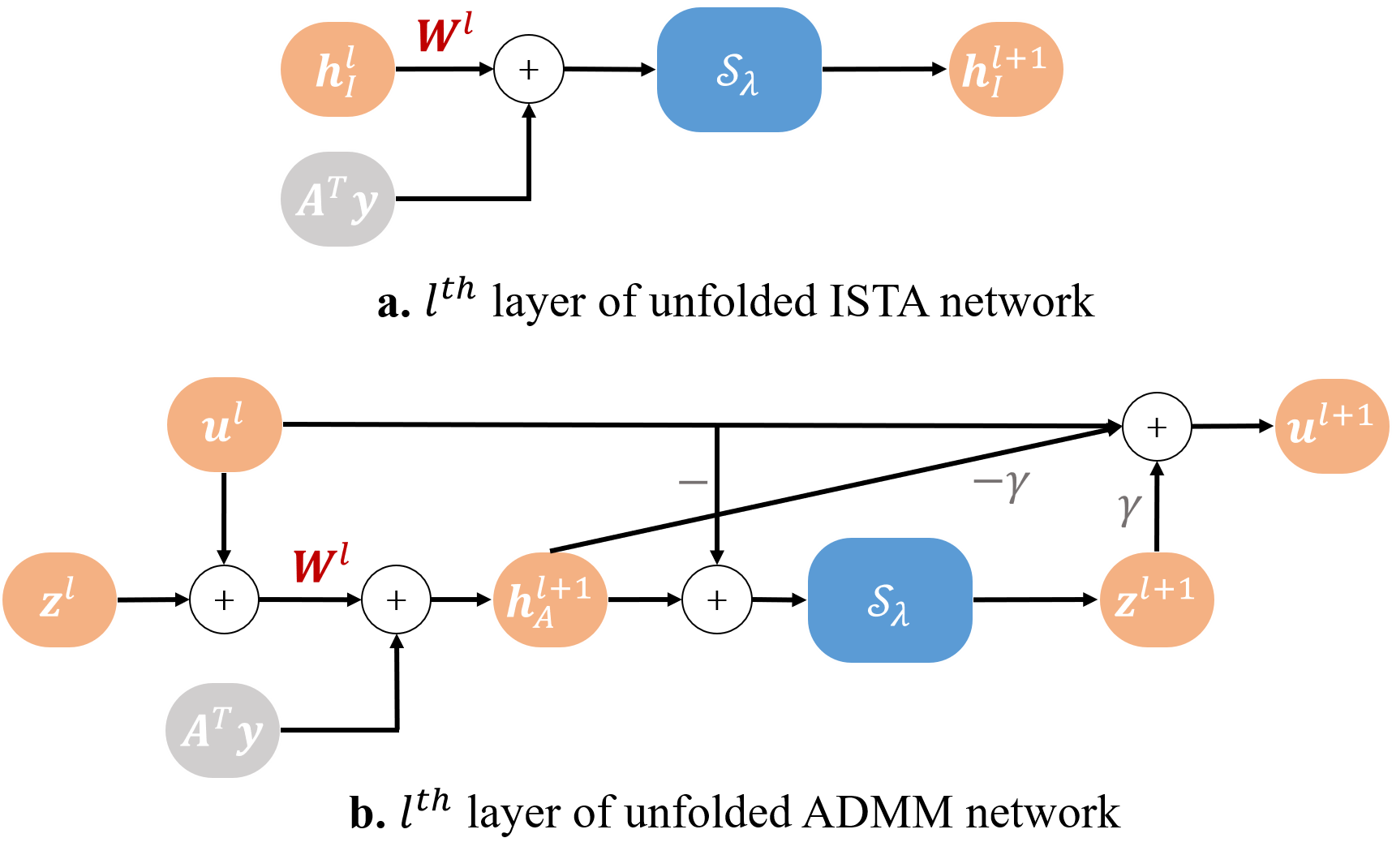}
        \caption{A single layer in the unfolded networks. \textbf{a.} Unfolded ISTA. \textbf{b.} Unfolded ADMM. The learnable parameters are the weight matrices (marked in red).}
    \label{fig:Model-based networks architectures}
\end{figure}

\subsection{Generalization and Estimation Errors}
\label{sec:Generalization_and_Estimation_Errors}

In this work, we focus on upper bounding the GE and EE of the model-based neural networks of Fig. \ref{fig:Model-based networks architectures}.
The GE of a class of estimation functions $\boldsymbol h \in \HV$, such that $\boldsymbol h: \mathbb{R}^{n_y} \xrightarrow[]{} \mathbb{R}^{n_x}$, is defined as
\begin{equation}
\begin{aligned}
    \GE(\HV) = \E_S \sup_{\boldsymbol h \in \HV} L_{\D}(\boldsymbol h) -  L_S(\boldsymbol h)
    \label{eq:Generalization_error_definition}
\end{aligned}
\end{equation}
where $L_{\D}(\boldsymbol h) = \E_{\D} \L(\boldsymbol h)$ is the expected loss with respect to the data distribution $\D$ (the joint probability distribution of the targets and observations), $L_{S}(\boldsymbol h) = \frac{1}{m} \sum_{i = 1}^m \L (\boldsymbol h(\boldsymbol y_i), \boldsymbol x_i) $ is the average empirical loss with respect to the training set $\mathcal{S}$, and $\E_S$ is the expectation over the training datasets.
The GE is a global property of the class of estimators, which captures how the class of estimators $\HV$ is suited to the learning problem.
Large GE implies that there are hypotheses in $\HV$ for which $L_D$ deviates much from $L_S$, on average over $S$.
However, the GE in (\ref{eq:Generalization_error_definition}) does not capture how the learning algorithm chooses the estimator $\boldsymbol h \in \HV$.

In order to capture the effect of the learning algorithm, we consider local properties of the class of estimators, and focus on bounding the estimation error (EE) 
\begin{equation}
\begin{aligned}
    \EE(\HV) = L_{\D}\left(\boldsymbol{\hat{h}} \right) - \inf_{\boldsymbol h \in \HV} L_{\D} \left( \boldsymbol h \right)
    \label{eq:Estimation_error_definition}
\end{aligned}
\end{equation}
where $\boldsymbol{\hat{h}}$ is the ERM satisfying $L_S (\boldsymbol{\hat{h}} ) = \inf_{\boldsymbol h \in \HV} L_{\S} (\boldsymbol h )$.
We note that the ERM approximates the learned estimator $\boldsymbol h_{\textit{learned}}$, which is obtained by training the network on a set of training examples $S$, using algorithms such as SGD.
However, the estimator $\boldsymbol h_{\textit{learned}}$ depends on the optimization algorithm, and can differ from the ERM.
The difference between the empirical loss associated with $\boldsymbol{\hat{ h}}$ and the empirical loss associated with $ \boldsymbol h_{\textit{learned}}$ is usually referred to as the optimization error.

Common deep neural network architectures have large GE compared to the low EE achieved in practice.
This still holds, when the networks are not trained with explicit regularization, such as weight decay, early stopping, or drop-outs \citep{Srivastava2014Dropout,Neyshabur2015c}.
This empirical phenomena is experienced across various architectures and hyper-parameter choices \citep{Liang2019Fisher,Novak2018,Lee2018Gauss,Neyshabur2018Overparametrization}.

\subsection{Rademacher Complexity based Bounds}
\label{sec:Rademacher Complexity Intro}

The RC is a standard tool which captures the ability of a class of functions to approximate noise, where a lower complexity indicates a reduced generalization error.
Formally, the empirical RC of a class of scalar estimators $\H$, such that $h: \mathbb{R}^{n_y} \xrightarrow[]{} \mathbb{R}$ for $h \in \H$, over $m$ samples is
\begin{equation}
\begin{aligned}
    \Ra{\H} = \E_{\{\epsilon_i\}_{i = 1}^m} \sup_{h \in \H} \frac{1}{m} \sum_{i = 1}^m \epsilon_{i} h(\boldsymbol y_i)
    \label{eq:Rademacher_definition}
\end{aligned}
\end{equation}
where $\{\epsilon_i\}_{i = 1}^m$ are independent Rademacher random variables for which $Pr\left(\epsilon_i = 1 \right) = Pr\left(\epsilon_i = -1 \right) = 1 / 2$, the samples $\{ \boldsymbol y_i \}_{i = 1}^m$ are obtained by the model in (\ref{eq:linear_inverse_problem}) from $m$ i.i.d. target vectors $\{ \boldsymbol x_i \}_{i = 1}^m$ drawn from an unknown distribution.
Taking the expectation of the RC of $\FH{\L}{\HV}$ with respect to the set of examples $S$ presented in Section \ref{sec:Network_Architecture}, leads to a bound on the GE
\begin{equation}
\begin{aligned}
    \GE(\HV) \leq 2\E_S \Ra{ \FH{\L}{\HV}}
    \label{eq:GE bound by Rademacher}
\end{aligned}
\end{equation}
where $\L$ is defined in Section \ref{sec:Network_Architecture}
and
$
    \FH{\L}{\HV} = \left\{ (\boldsymbol x, \boldsymbol y) \mapsto \L(h(\boldsymbol y),\boldsymbol x): h \in \HV \right\}
$
\citep{Shalev_Shwartz_book}.
We observe that the class of functions $\FH{\L}{\HV}$ consists of scalar functions, such that $f: \mathbb{R}^{n_x} \times \mathbb{R}^{n_x} \xrightarrow[]{} \mathbb{R}$ for $f \in \FH{\L}{\HV}$.
Therefore, in order to bound the GE of the class of functions defined by ISTA and ADMM networks, we can first bound their RC.

\label{sec:GE of ReLU}

Throughout the paper, we compare the model-based architectures with a feed forward network with ReLU activations, given by $\textit{ReLU}(\boldsymbol h) = \max \left(\boldsymbol h, \boldsymbol 0 \right)$.
In this section, we review existing bounds for the generalization error of these networks.
The layers of a ReLU network $\boldsymbol h_R^{l},\ \forall l \in [1, L]$, are defined by the following recurrence relation
$
    \boldsymbol h_R^{l} = \textit{ReLU} \left( \boldsymbol W^{l} \boldsymbol h_R^{l - 1} + \boldsymbol b\right)$,
and
$\boldsymbol h_R^0 = \textit{ReLU} (\boldsymbol b),
    \label{eq:ReLU_network}
$
where $\boldsymbol W^{l},\ \forall l \in [1,L]$ are the weight matrices which satisfy the same conditions as the weight matrices of ISTA and ADMM networks.
The class of functions representing the output at depth $L$ in a ReLU network, is $\HV_R^L = \{\boldsymbol h_R^L ( \{ \boldsymbol W^l \}_{l = 1}^{L} ): ||\boldsymbol W^l||_{\infty} \leq B_l\ l \in [1, L],\ ||\boldsymbol W^1||_{2} \leq B_1 \}$.
This architecture leads to the following bound on the GE.
\begin{theorem} [Generalization error bound for ReLU networks \citep{Gao2016Dropout}]
\label{theorem: GE of ReLU network}
    Consider the class of feed forward networks of depth-$L$ with ReLU activations, $\HV_R^L$, as described in Section \ref{sec:GE of ReLU}, and $m$ i.i.d. training samples.
    Given a $1$-Lipschitz loss function, its GE satisfies
    $
        \GE{\left( \HV_R^L \right)} \leq 2 G^l_R
    $, where $G^l_R = \frac{B_0 \prod_{l = 1}^L B_l}{\sqrt{m}}$.
\end{theorem}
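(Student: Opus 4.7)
The plan is to start from the Rademacher-based GE bound of equation (\ref{eq:GE bound by Rademacher}), which reduces the task to upper-bounding $\E_S \Ra{\L \circ \HV_R^L}$. Because $\L$ is the per-coordinate average of a 1-Lipschitz scalar loss (equation (\ref{eq:loss decomposition})), the first step is to pull the loss out of the RC. Using the coordinate decomposition together with the standard Ledoux--Talagrand contraction inequality on each scalar output class $\HV_{R,j}^L$, I would bound
\[
\Ra{\FH{\L}{\HV_R^L}} \le \frac{1}{n_x}\sum_{j=1}^{n_x}\Ra{\HV_{R,j}^L},
\]
so that it suffices to control the RC of a single scalar output coordinate of the network.

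The core of the proof is a layer-peeling argument for $\Ra{\HV_{R,j}^L}$. For each layer $l \ge 2$, the $j$-th coordinate has the form $\mathrm{ReLU}\bigl(\langle \boldsymbol w_j^l, \boldsymbol h_R^{l-1} \rangle + b_j\bigr)$ with $\|\boldsymbol w_j^l\|_1 \le \|\boldsymbol W^l\|_\infty \le B_l$. Applying Ledoux--Talagrand to strip the 1-Lipschitz ReLU, and then using the representation
\[
\sup_{\|\boldsymbol w\|_1 \le B_l}\ \frac{1}{m}\sum_{i=1}^m \epsilon_i \langle \boldsymbol w, \boldsymbol h_R^{l-1}(\boldsymbol y_i)\rangle \;=\; B_l\cdot \max_k \Bigl|\tfrac{1}{m}\sum_i \epsilon_i h_{R,k}^{l-1}(\boldsymbol y_i)\Bigr|,
\]
together with symmetrization of Rademacher signs, yields the recursion $\Ra{\HV_{R,j}^L} \le B_L\cdot \max_k \Ra{\HV_{R,k}^{L-1}}$. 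Iterating from layer $L$ down to layer $2$ gives a multiplicative factor of $\prod_{l=2}^L B_l$.

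For the base case I would treat the first layer separately, because the weight matrix $\boldsymbol W^1$ is controlled in spectral norm rather than $\infty$-norm. After one more ReLU contraction, the class reduces to linear forms $\langle \boldsymbol w_j^1, \boldsymbol h_R^0(\boldsymbol y_i)\rangle$, and since $\|\boldsymbol h_R^0\|_1 \le B_0$ one has $\|\boldsymbol h_R^0\|_2 \le B_0$ as well. Using $\|\boldsymbol w_j^1\|_2 \le \|\boldsymbol W^1\|_\sigma \le B_1$ and Jensen's inequality,
\[
\E_{\boldsymbol\epsilon}\sup_{\|\boldsymbol w\|_2 \le B_1}\frac{1}{m}\sum_{i=1}^m \epsilon_i\langle \boldsymbol w,\boldsymbol h_R^0(\boldsymbol y_i)\rangle \;\le\; \frac{B_1}{m}\sqrt{\E\Bigl\|\sum_i \epsilon_i \boldsymbol h_R^0(\boldsymbol y_i)\Bigr\|_2^2} \;\le\; \frac{B_0 B_1}{\sqrt{m}},
\]
which combined with the recursion gives exactly $\Ra{\HV_{R,j}^L}\le B_0 \prod_{l=1}^L B_l/\sqrt m$, and hence (after averaging over $j$ and applying (\ref{eq:GE bound by Rademacher})) the claimed bound $2G_R^L$.

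The main obstacle I anticipate is the careful bookkeeping in the peeling step: the ReLU contraction must be applied to a class of scalar Lipschitz functions, not a vector class, so one has to invoke it coordinate-wise and then re-aggregate using the $\|\boldsymbol w_j^l\|_1 \le B_l$ bound without losing an extra dimension factor. Making sure the absolute value introduced by $\max_k |\cdot|$ can be removed (using the symmetry of Rademacher variables and closing the class under negation) is the delicate technical point that keeps the final constant free of logarithmic or dimension-dependent terms.
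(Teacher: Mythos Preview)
Your proposal is correct and matches the paper's approach: the paper's proof is literally a one-line citation (invoke the Rademacher bound of \citep{Gao2016Dropout} and plug into (\ref{eq:GE bound by Rademacher})), and what you have sketched---coordinate-wise Ledoux--Talagrand contraction, the $\ell_1$-norm peeling step, and the linear base case under the $\|\boldsymbol w\|_2\le B_1$, $\|\boldsymbol h_R^0\|_2\le B_0$ constraints---is precisely the standard derivation of that cited bound, and is also exactly parallel to the argument the paper spells out in its supplementary material for the ISTA case (Lemmas~\ref{lemma:Talagrand}--\ref{lemma:norm B} and the proof of Theorem~\ref{theorem: GE of ISTA network}). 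The only cosmetic difference is that the paper handles the $\|\boldsymbol w\|_1$ peeling via the $\mathrm{absconv}$ identity (Lemma~\ref{lemma: absconv}) rather than your dual-norm/$\max_k|\cdot|$ representation, which sidesteps the sign-symmetrization issue you flagged; both routes yield the same recursion without extra constants.
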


\begin{proof}
    Follows from applying the bound of the RC of ReLU neural networks from \citep{Gao2016Dropout} and combining it with (\ref{eq:GE bound by Rademacher}).
\end{proof}

The bound in Theorem \ref{theorem: GE of ReLU network} is satisfied for any feed forward network with $1$-Lipschitz nonlinear activations (including ReLU), and can be generalized for networks with activations with different Lipshcitz constants.
We show in Theorem \ref{theorem: Lower GE of ReLU network}, that the bound presented in Theorem \ref{theorem: GE of ReLU network} cannot be substantially improved for ReLU networks with the RC framework.
\begin{theorem} [Lower Rademacher complexity bound for ReLU networks \citep{Bartlett2017Spectral}]
\label{theorem: Lower GE of ReLU network}
    Consider the class of feed forward networks of depth-$L$ with ReLU activations, where the weight matrices have bounded spectral norm $||\boldsymbol W^l||_{\sigma} \leq B^{\prime}_l,\ l \in [1, L]$. 
    The dimension of the output layer is $1$, and the dimension of each non-output layer is at least $2$.
    Given $m$ i.i.d. training samples, there exists a $c$ such that
    $
        \Ra{\H^{\prime,L}_R} \geq c B_0 \prod_{l = 1}^L B^{\prime}_l
    $, 
    where $\H_R^{\prime,L} = \{ h_R^L ( \{ \boldsymbol W^l \}_{l = 1}^{L} ): ||\boldsymbol W^l||_{\sigma} \leq B^{\prime}_l\ \, l \in [1, L] \}$.
\end{theorem}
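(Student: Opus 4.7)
The plan is to exhibit a sub-class of $\H_R^{\prime,L}$ whose members act as linear functions on a carefully chosen dataset, thereby reducing the problem to a standard lower bound for the Rademacher complexity of a linear class. The two structural facts about ReLU that I would exploit are positive homogeneity, $\text{ReLU}(\alpha t)=\alpha\,\text{ReLU}(t)$ for $\alpha\ge 0$, and the signed decomposition $t=\text{ReLU}(t)-\text{ReLU}(-t)$. The latter forces one to carry two signed copies of each scalar signal through every intermediate layer, which is precisely why the theorem assumes that each non-output layer has dimension at least $2$.

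First I would fix a unit vector $\boldsymbol v_0$ and pick inputs $\boldsymbol y_i$ collinear with $\boldsymbol v_0$ of norm $\|\boldsymbol y_i\|_2 = B_0$. I would then construct weight matrices $\boldsymbol W^1,\ldots,\boldsymbol W^L$ saturating the spectral-norm budgets as follows. Take $\boldsymbol W^1 = B_1'\,\boldsymbol u_1\boldsymbol v_0^T$ rank-one, with $\boldsymbol u_1$ supported on two coordinates holding $\pm 1/\sqrt{2}$, so that after the first ReLU both $\text{ReLU}(\langle\boldsymbol v_0,\boldsymbol y_i\rangle)$ and $\text{ReLU}(-\langle\boldsymbol v_0,\boldsymbol y_i\rangle)$ are simultaneously retained in two dedicated channels. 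Each intermediate $\boldsymbol W^l$ is supported on those two coordinates, has spectral norm exactly $B_l'$, and merely rescales and possibly permutes the two signed channels while keeping both outputs non-negative. Finally, $\boldsymbol W^L$ is a row vector of spectral norm $B_L'$ that subtracts the two channels, producing the scalar output $\pm\bigl(\prod_{l=1}^L B_l'\bigr)\langle\boldsymbol v_0,\boldsymbol y_i\rangle$. Letting $\boldsymbol v_0$ vary inside $\boldsymbol W^1$ over the unit sphere, the resulting sub-class realises every linear functional $\boldsymbol y\mapsto\langle\boldsymbol w,\boldsymbol y\rangle$ with $\|\boldsymbol w\|_2\le\prod_{l=1}^L B_l'$.

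Having reduced to a linear class, I would close the argument by invoking the identity
\begin{equation*}
    \E_{\boldsymbol\epsilon}\sup_{\|\boldsymbol w\|_2\le B}\frac{1}{m}\sum_{i=1}^m \epsilon_i\langle\boldsymbol w,\boldsymbol y_i\rangle \;=\; \frac{B}{m}\,\E_{\boldsymbol\epsilon}\Big\|\sum_{i=1}^m \epsilon_i\boldsymbol y_i\Big\|_2,
\end{equation*}
with $B=\prod_{l=1}^L B_l'$, and then applying Khintchine's inequality (or a direct one-dimensional computation) on the right-hand side for the collinear $\boldsymbol y_i$, which yields a lower bound of order $B_0\prod_{l=1}^L B_l'$, with the absolute constant $c$ absorbing the resulting $1/\sqrt{m}$ factor.

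The hard part is the step-two construction: a naive rank-one chain collapses immediately after the first ReLU because one of the two signed copies is annihilated, destroying the $\prod_l B_l'$ gain. Channelling the signal through two dedicated neurons per layer, with each $\boldsymbol W^l$ simultaneously preserving both active channels, keeping the pre-activations non-negative on the chosen inputs, and saturating $\|\boldsymbol W^l\|_\sigma=B_l'$ exactly, is the technical core of the argument, and is precisely where the width-$\ge 2$ hypothesis enters essentially.
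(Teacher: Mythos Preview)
The paper does not supply its own proof of this statement; Theorem~\ref{theorem: Lower GE of ReLU network} is quoted directly from \citep{Bartlett2017Spectral}. Your sketch follows essentially the same argument as in that reference: embed the linear class $\{\boldsymbol y\mapsto\langle\boldsymbol w,\boldsymbol y\rangle:\|\boldsymbol w\|_2\le\prod_l B_l'\}$ inside $\H_R^{\prime,L}$ by routing the positive and negative parts of the scalar signal through two dedicated neurons (the identity $t=\text{ReLU}(t)-\text{ReLU}(-t)$ is exactly where the width-$\ge 2$ hypothesis enters), and then lower-bound the Rademacher complexity of the linear class. So the approach is correct and coincides with the cited source.

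Two minor remarks. First, the phrase ``the absolute constant $c$ absorbing the resulting $1/\sqrt{m}$ factor'' is a little loose: the bound you actually obtain is of order $B_0\prod_l B_l'/\sqrt{m}$, which matches the upper bound in Theorem~\ref{theorem: GE of ReLU network}; the paper's statement simply does not display the $m$-dependence explicitly in $c$. Second, the restriction to collinear data is unnecessary once the linear embedding is in place: for arbitrary samples, the identity
\[
\E_{\boldsymbol\epsilon}\sup_{\|\boldsymbol w\|_2\le B}\frac{1}{m}\sum_{i=1}^m\epsilon_i\langle\boldsymbol w,\boldsymbol y_i\rangle
=\frac{B}{m}\,\E_{\boldsymbol\epsilon}\Bigl\|\sum_{i=1}^m\epsilon_i\boldsymbol y_i\Bigr\|_2
\]
combined with the lower Kahane--Khintchine inequality already gives a bound proportional to $B\,\|\boldsymbol Y\|_F/m$, which specialises to your expression when $\|\boldsymbol y_i\|_2=B_0$ for all $i$.
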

This result shows that using the RC framework the GE of ReLU networks behaves as the product of the weight matrices' norms $\prod_{l = 1}^L B_l$, as captured in Theorem \ref{theorem: GE of ReLU network}.
Theorem \ref{theorem: Lower GE of ReLU network}, implies that the dependence on the weight matrices' norms, cannot be substantially improved with the RC framework for ReLU networks.

\section{Generalization Error Bounds: Global Properties}
\label{sec:Generalization Error Bounds}

In this section, we derive theoretical bounds on the GE of learned ISTA and ADMM networks.
From these bounds we deduce design rules to construct ISTA and ADMM networks with a GE which does not increase exponentially with the number of layers.
We start by presenting theoretical guarantees on the RC of any class of functions, after applying the soft-thresholding operation.

Soft-thresholding is a basic block that appears in multiple iterative algorithms, and therefore is used as the nonlinear activation in many model-based networks.
It results from the proximal gradient of the $L_1$ norm \citep{Yonina2010ConvexOpt}.
We therefore start by presenting the following lemma which expresses how the RC of a class of functions is affected by applying soft-thresholding to each function in the class.
The proof is provided in the supplementary material.


\begin{lemma}[Rademacher complexity of soft-thresholding]
\label{lemma:ST}
    Given any class of scalar functions $\H$ where $h:\mathbb{R}^{n} \xrightarrow[]{} \mathbb{R},\ h \in \H$ for any integer $n \geq 1$, and $m$ i.i.d. training samples,
    \begin{equation}
    \begin{aligned}
        \label{eq:result ST}
        \Ra{\FH{S_{\lambda}}{\H} } \leq \Ra{\H} - \frac{\lambda T}{m},
    \end{aligned}
    \end{equation}
    where $T = \sum_{i = 1}^m T_i$ and  $T_j =  \E_{\{\epsilon_i\}_{i = 2}^m} \left(\mathbbm{1}_{h^{\star}(\boldsymbol y_j) > \lambda \land h'^{\star}(\boldsymbol y_j) < - \lambda}\right)$ such that
    $
         h^*, \, h'^* = \argmax_{h, h' \in \H} \left(h(\boldsymbol y_1)  - h'(\boldsymbol y_1) - 2 \lambda \mathbbm{1}_{h(\boldsymbol y_1) > \lambda \land h'(\boldsymbol y_1) < - \lambda} + \sum_{i = 2}^m \epsilon_i (\S_{\lambda}(h(\boldsymbol y_i)) + \S_{\lambda}(h'(\boldsymbol y_i)))\right). \nonumber
    $
\end{lemma}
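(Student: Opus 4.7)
The plan is to refine Talagrand's contraction lemma by using a threshold-aware property of $\S_\lambda$ that is stronger than mere $1$-Lipschitzness: whenever its two arguments lie on opposite sides of the interval $[-\lambda,\lambda]$, $\S_\lambda$ shrinks their gap by an additional $2\lambda$ compared to the identity. This extra contraction is precisely what yields the negative correction in the claim.

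First I would apply standard Rademacher symmetrization at coordinate $\boldsymbol y_1$: using $\E_{\epsilon_1}\sup_h f(\epsilon_1,h)=\tfrac12\sup_{h,h'}[f(+1,h)+f(-1,h')]$, rewrite
\[
\Ra{\FH{\S_\lambda}{\H}}\;=\;\tfrac{1}{2m}\,\E_{\epsilon_{2:m}}\sup_{h,h'\in\H}\Bigl[\S_\lambda(h(\boldsymbol y_1))-\S_\lambda(h'(\boldsymbol y_1))+\sum_{i=2}^{m}\epsilon_i\bigl(\S_\lambda(h(\boldsymbol y_i))+\S_\lambda(h'(\boldsymbol y_i))\bigr)\Bigr].
\]
The next step, which is the core of the argument, is the pointwise inequality
\[
\S_\lambda(a)-\S_\lambda(b)\;\le\;a-b-2\lambda\,\mathbbm{1}_{a>\lambda\,\land\,b<-\lambda}\qquad\text{for all }a\ge b,
\]
proved by case analysis on the locations of $a$ and $b$ relative to $\pm\lambda$ (equality in the ``opposite sides'' case, strict otherwise). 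Since the bracketed sum over $i\ge 2$ is symmetric under $h\leftrightarrow h'$, the supremum is attained at a pair with $h(\boldsymbol y_1)\ge h'(\boldsymbol y_1)$, so this inequality applies at the maximizer. Substituting yields an expression whose argmax is exactly the pair $(h^\star,h'^\star)$ defined in the lemma, and extracting the indicator contribution produces a leading correction of $-\tfrac{\lambda}{m}T_1$.

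To promote $T_1$ to the full sum $T=\sum_{j=1}^{m}T_j$, I would invoke the exchangeability of the samples and of the Rademacher variables: by the same symmetrization-plus-contraction argument applied with $\epsilon_j$ peeled in place of $\epsilon_1$, one obtains the analogous bound with $T_1$ replaced by $T_j$ for each $j$. Iterating the peeling across all coordinates (or, equivalently, averaging over the choice of peeled coordinate) and using the pointwise inequality again to dominate the remaining $\S_\lambda$-terms at each argmax, accumulates one $-2\lambda\,T_j$ correction per coordinate; reversing the symmetrization on the residual $\S_\lambda$-free expression identifies it as $m\,\Ra{\H}$, and dividing by $2m$ gives the stated bound.

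The main obstacle is the last step: reconciling the lemma's use of a \emph{single} argmax pair $(h^\star,h'^\star)$ across every $T_j$ with a per-coordinate peeling procedure, which naively would produce a fresh argmax at each step. The resolution must either perform the peeling jointly in a single supremum (so that one pair simultaneously controls every coordinate), or exploit a permutation-symmetry argument showing that the expected correction averages correctly across positions. The combinatorial bookkeeping of which pair realizes the supremum at which coordinate is where the proof is most likely to demand careful justification.
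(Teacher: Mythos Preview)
Your proposal is essentially the paper's proof. The symmetrization on $\epsilon_1$, the reduction of $\S_\lambda(h(\boldsymbol y_1))-\S_\lambda(h'(\boldsymbol y_1))$ to $h(\boldsymbol y_1)-h'(\boldsymbol y_1)-2\lambda\,\mathbbm{1}_{h(\boldsymbol y_1)>\lambda\land h'(\boldsymbol y_1)<-\lambda}$, the extraction of $T_1$ at the argmax, and the reversal of the symmetrization before iterating on the next coordinate are exactly the steps the paper carries out. Your pointwise inequality $\S_\lambda(a)-\S_\lambda(b)\le a-b-2\lambda\,\mathbbm{1}_{a>\lambda\land b<-\lambda}$ for $a\ge b$ (together with the swap symmetry $s_{\boldsymbol\epsilon}(h,h')=s_{\boldsymbol\epsilon}(h',h)$ to force $a\ge b$ at the maximizer) is a clean compression of what the paper does via an exhaustive nine-region case analysis over $(\alpha,\beta)\in\{-1,0,1\}^2$, eliminating dominated cases one by one; both arrive at the same bound.

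Regarding the obstacle you flag: the paper does \emph{not} use a single argmax pair for all $T_j$. It proceeds exactly by the iterative peeling you describe, producing a fresh argmax at each step. Concretely, after extracting $T_1$ the paper relaxes back to a supremum, reverses the symmetrization on coordinate $1$ (now with $h(\boldsymbol y_1)$ in place of $\S_\lambda(h(\boldsymbol y_1))$), and then symmetrizes on $\epsilon_2$; the argmax defining $T_2$ is taken over an objective in which coordinate $1$ is already ``de-thresholded'' and coordinates $3,\dots,m$ still carry $\S_\lambda$. The lemma statement's description of $h^\star,h'^\star$ is literally only the one for $T_1$; the paper's proof simply says the remaining $T_j$ are ``obtained by repeating the process.'' So your worry about reconciling all $T_j$ with one pair is unnecessary: the per-coordinate peeling with per-step argmaxes is the intended (and valid) argument, and your plan already contains it.
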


The quantity $T$ is a non-negative value obtained during the proof, which depends on the networks' number of layers, underlying data distribution $\D$ and soft-threshold value $\lambda$.
As seen from (\ref{eq:result ST}), the value of $T$ dictates the reduction in RC due to soft-thresholding, where a reduction in the RC can also be expected. 
The value of $T$ increases as $\lambda$ decreases.
In the case that $\lambda T$ increases with $\lambda$, higher values of $\lambda$ further reduce the RC of the class of functions $\H$, due to the soft-thresholding.

We now focus on the class of functions representing the output of a neuron at depth $L$ in an ISTA network, $\HV_I^L$.
In the following theorem, we bound its GE using the RC framework and Lemma \ref{lemma:ST}.
The proof is provided in the supplementary material.
\begin{theorem} [Generalization error bound for ISTA networks]
\label{theorem: GE of ISTA network}
    Consider the class of learned ISTA networks of depth $L$ as described in (\ref{eq:ISTA_unfolded}), and $m$ i.i.d. training samples.
    Then there exist $T^{(l)}$ for $\ l \in [1, L]$ in the range
    $
        T^{(l)} \in \left[ 0, \min \left\{ \frac{m B_{l} G_I^{l - 1}}{\lambda}, m \right\} \right]
    $
    such that $
        \GE{\left( \HV_I^L \right)} \leq 2 \E_{S} G_I^L
    $,
    where
    \begin{equation}
    \begin{aligned}
        G_I^l = \left(\frac{B_0 \prod_{l' = 1}^l B_{l'}}{\sqrt{m}} - \frac{\lambda}{m} \sum_{l' = 1}^{l-1} T^{(l')} \prod_{j = l'+1}^{l} B_j  - \frac{\lambda T^{(l)}}{m} \right).
    \end{aligned}
    \end{equation}
    
\end{theorem}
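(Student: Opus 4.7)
The plan is to bound $\Ra{\FH{\L}{\HV_I^L}}$ by induction on depth, peeling off one ISTA layer at a time. First I reduce to scalar coordinate classes: combining the loss decomposition (\ref{eq:loss decomposition}) with the $1$-Lipschitz property of $\ell$, Talagrand's contraction inequality gives
\begin{equation*}
    \Ra{\FH{\L}{\HV_I^L}} \;\le\; \frac{1}{n_x}\sum_{j=1}^{n_x}\Ra{\H_{I,j}^L}\;\le\; R_L,
\end{equation*}
where $\H_{I,j}^L$ is the class of $j$-th coordinate outputs and $R_l$ denotes a uniform upper bound on the RC of any single-coordinate output class at depth $l$ (all coordinates inherit the same weight-norm constraints, so they admit a common bound). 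Inserting this into (\ref{eq:GE bound by Rademacher}) reduces the task to showing $R_L \le G_I^L$.

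For the inductive step $l \ge 2$, each coordinate of $\boldsymbol h_I^l$ has the form $\S_\lambda(\boldsymbol w^\top \boldsymbol h_I^{l-1} + b)$ for a row $\boldsymbol w$ of $\boldsymbol W^l$ with $\|\boldsymbol w\|_1 \le \|\boldsymbol W^l\|_\infty \le B_l$. I apply Lemma \ref{lemma:ST} to peel off the soft-threshold, obtaining $R_l \le \tilde R_l - \lambda T^{(l)}/m$ where $\tilde R_l$ is the RC of the affine pre-activation class and $T^{(l)} \ge 0$. The constant bias disappears under the Rademacher expectation, and applying Hölder's inequality to $\boldsymbol w^\top\bigl(\tfrac{1}{m}\sum_i \epsilon_i \boldsymbol h(\boldsymbol y_i)\bigr)$ with $\|\boldsymbol w\|_1 \le B_l$ produces $\tilde R_l \le B_l R_{l-1}$, since the $L_\infty$ norm of the bracketed vector is bounded by the coordinate-wise supremum RC over $\HV_I^{l-1}$. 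This yields the one-step recursion $R_l \le B_l R_{l-1} - \lambda T^{(l)}/m$.

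For the base case $l = 1$, I use $\|\boldsymbol W^1\|_2 \le B_1$ instead of the $L_\infty$ bound: after peeling the soft-threshold via Lemma \ref{lemma:ST}, Cauchy--Schwarz followed by Jensen's inequality gives
\begin{equation*}
    \tilde R_1 \;\le\; \frac{B_1}{m}\,\E_{\boldsymbol\epsilon}\Big\|\sum_{i=1}^m \epsilon_i\,\boldsymbol h_I^0(\boldsymbol y_i)\Big\|_2 \;\le\; \frac{B_1}{m}\sqrt{\sum_{i=1}^m \|\boldsymbol h_I^0(\boldsymbol y_i)\|_2^2} \;\le\; \frac{B_0 B_1}{\sqrt m},
\end{equation*}
where I used $\|\boldsymbol h_I^0\|_2 \le \|\boldsymbol h_I^0\|_1 \le B_0$. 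Hence $R_1 \le B_0 B_1/\sqrt m - \lambda T^{(1)}/m = G_I^1$. Unrolling the recursion from $l = 1$ to $L$ produces the telescoping expression $G_I^L$ stated in the theorem. The stated range $T^{(l)} \in [0,\min\{m B_l G_I^{l-1}/\lambda,\,m\}]$ follows from Lemma \ref{lemma:ST} (which already gives $T^{(l)} \le m$ because each summand is the expectation of an indicator) together with the non-negativity $R_l \ge 0$ (the sup defining the RC is at least the value at any fixed $h_0 \in \HV_I^l$, which has zero Rademacher expectation), which forces $\lambda T^{(l)}/m \le B_l G_I^{l-1}$.

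The main obstacle I anticipate is the clean interchange of operations at the inductive step: Lemma \ref{lemma:ST} is stated for an abstract scalar class $\H$, but at depth $l$ it must be invoked on the composite class $\{\boldsymbol w^\top \boldsymbol h + b : \|\boldsymbol w\|_1 \le B_l,\, \boldsymbol h \in \HV_I^{l-1}\}$, and one must verify that the soft-threshold peeling commutes with the subsequent supremum over $\boldsymbol w$ that produces the $B_l$ factor. The resulting $T^{(l)}$ then depends implicitly on $B_l$ and on the coordinate supremum over $\HV_I^{l-1}$, but the existential quantification over $T^{(l)}$ in the statement absorbs this dependence; the remaining steps are standard RC algebra.
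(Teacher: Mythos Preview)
Your proposal is correct and mirrors the paper's argument: reduce to a scalar coordinate class via the loss decomposition and Talagrand contraction, then establish the recursion $R_l\le B_l R_{l-1}-\lambda T^{(l)}/m$ by first applying Lemma~\ref{lemma:ST} to the full composite pre-activation class (so your stated ``commutation'' worry never arises---the lemma is applied before any splitting over $\boldsymbol w$) and then peeling the weight row, initializing with the linear-predictor bound $B_0 B_1/\sqrt m$. The one place to tighten is the weight-peeling step: your H\"older/duality argument literally yields $\tilde R_l = B_l\,\E_{\boldsymbol\epsilon}\sup_{g\in\H_{l-1}}\bigl|\tfrac1m\sum_i\epsilon_i g(\boldsymbol y_i)\bigr|=B_l\,\Ra{\H_{l-1}\cup(-\H_{l-1})}$, and identifying this with $B_l R_{l-1}$ requires the fact $\Ra{\textit{absconv}(\H)}=\Ra{\H}$---which is exactly the lemma the paper invokes at this step in place of H\"older.
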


\label{remark: value of T}
Next, we show that for a specific distribution, the expected value of $T^{(l)}$ is greater than $0$.
Under an additional bound on the expectation value of the estimators (specified in the supplementary material)
    \begin{equation}
    \begin{aligned}
    \label{eq:bound Tl2}
        \mathbb{E}_{S} \left( T^{(l)} \right) &\geq \max \left\{ m \left(1 - 2e^{-(c B_l b^{(l)} - \lambda)} \right), 0 \right\}
    \end{aligned}
    \end{equation}
where $b^{(l + 1)} = B_l b^{(l)} - \lambda$, $b^{(1)} = B_0$, and $c \in (0, 1]$.
Increasing $\lambda$ or decreasing $B$, will decrease the bound in (\ref{eq:bound Tl2}), since crossing the threshold is less probable.
Depending on $\lambda$ and $\{ B_l \}_{l=1}^{L}$ (specifically that $\lambda \leq c B_l b^{(l)} + \ln 2$), the bound in (\ref{eq:bound Tl2}) is positive, and enforces a non-zero reduction in the GE.
Along with Theorem \ref{theorem: GE of ISTA network}, this shows the expected reduction in GE of ISTA networks compared to ReLU netowrks. 
The reduction is controlled by the value of the soft threshold.

To obtain a more compact relation, we can choose the maximal matrices' norm $B = \max_{l \in [1,L]} B_l$, and denote $T = \min_{l  \in [1,L]} T^{(l)} \in [0, m]$ which leads to 
$
    \GE{\left( \HV_I^L \right)} \leq 2 \left( \frac{B_0 B^L}{\sqrt{m}} - \frac{\lambda \E_{S}(T)}{m} \frac{B^L - 1}{B - 1} \right).
    \label{eq:Rademacher complexity of ISTA networks Simplified}
$

Comparing this bound with the GE bound for ReLU networks presented in Theorem \ref{theorem: GE of ReLU network}, shows the expected reduction due to the soft thresholding activation.
This result also implies practical rules for designing low generalization error ISTA networks. 
We note that the network's parameters such as the soft-threshold value $\lambda$ and number of samples $m$, are predefined by the model being solved 
(for example, in ISTA, the value of $\lambda$ is chosen according to the singular values of $\boldsymbol A$).

We derive an implicit design rule from (\ref{eq:Rademacher complexity of ISTA networks Simplified}), for a nonincreasing GE, as detailed in Section \ref{sec: appendix A2 GE}.
This is done by restricting the matrices' norm to satisfy $
    B \leq 1 + \frac{\lambda \E_S(T)}{\sqrt{m} B_0}
$.
Moreover, these results can be extended to convolutional neural networks.
As convolution operations can be expressed via multiplication with a convolution matrix, the presented results are also satisfied in that case.

Similarly, we bound the GE of the class of functions representing the output at depth $L$ in an ADMM network, $\H_A^L$. 
The proof and discussion are provided in the supplementary material.
\begin{theorem} [Generalization error bound for ADMM networks]
    \label{theorem: GE of ADMM network}
    Consider the class of learned ADMM networks of depth $L$ as described in (\ref{eq:ADMM_unfolded}), and $m$ i.i.d. training samples.
    Then there exist $T^{(l)}$ for $l \in [1, L-1]$ in the interval
    $
        T^{(l)}\in \left[ 0, \min \left\{\frac{m \tilde{B}_{l} G_A^{l - 1}}{\tilde{\lambda}}, m  \right\}\right]
    $
    where
    $
        G_A^{l} = \frac{B_0 \prod_{l' = 1}^{l-1} \tilde{B_{l'}}}{\sqrt{m}} - \frac{\tilde{\lambda}}{m} \sum_{l' = 1}^{l-2} T^{(l')} \prod_{j = l'+1}^{l - 1} \tilde{B}_j - \frac{\tilde{\lambda} T^{(l-1)}}{m}
    $,
    and $\tilde{\lambda} = (1 + \gamma) \lambda$, $\tilde{B}_l = (1 + 2 \gamma) (B_l + 2), \ l \in [1,L]$, such that $
        \GE\left( \HV_A^L \right)\leq 2 \tilde{B}_L \E_{S} G_A^{L - 1}.
        \label{eq:Rademacher complexity of ADMM networks}
    $
\end{theorem}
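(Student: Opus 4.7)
The plan is to mirror the proof of Theorem~3 (ISTA), but first recast the ADMM recursion into a form where each layer is essentially a soft-thresholding applied to an affine function of the previous layer's state, so that Lemma~1 can be invoked iteratively. Starting from (\ref{eq:GE bound by Rademacher}), the per-coordinate decomposition (\ref{eq:loss decomposition}) together with the $1$-Lipschitz assumption on $\ell$ lets me pass to the scalar class of output coordinates of $\HV_A^L$, so it suffices to bound the empirical Rademacher complexity of scalar coordinates of $\boldsymbol h_A^L$ and then invoke (\ref{eq:GE bound by Rademacher}) again.

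Next, I would introduce the auxiliary sequence $\boldsymbol v^l = \boldsymbol z^l + \boldsymbol u^l$, since the ADMM update rewrites as $\boldsymbol h_A^l = \boldsymbol W^l \boldsymbol v^{l-1} + \boldsymbol b$, while the combined update for $\boldsymbol v^l$ takes the form $\boldsymbol v^l = (1+\gamma)\S_{\lambda}\!\left(\boldsymbol W^l \boldsymbol v^{l-1} + \boldsymbol b - \boldsymbol u^{l-1}\right) + \boldsymbol u^{l-1} - \gamma(\boldsymbol W^l \boldsymbol v^{l-1} + \boldsymbol b)$. This exposes a single soft-thresholding per layer, and the $(1+\gamma)$ factor together with the residual $\boldsymbol u^{l-1}$-dependent term will ultimately produce the rescaled threshold $\tilde{\lambda}=(1+\gamma)\lambda$ and the rescaled norm bound $\tilde{B}_l=(1+2\gamma)(B_l+2)$ (where the $B_l+2$ and $1+2\gamma$ arise from jointly controlling the magnitude of $\boldsymbol z^l$ and $\boldsymbol u^l$, using the bound on $\|\boldsymbol W^l\|_\infty$ and the Lipschitz property of $\S_{\lambda}$).

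With this reformulation in hand, I would apply a peeling argument from the output backwards. The outermost step bounds $\boldsymbol h_A^L = \boldsymbol W^L \boldsymbol v^{L-1}+\boldsymbol b$ by pulling $\boldsymbol W^L$ out of the Rademacher supremum, contributing the leading factor $\tilde{B}_L$ in front of $\E_S G_A^{L-1}$. Each subsequent layer $l=L-1,\dots,1$ peels off a soft-thresholding using Lemma~1, producing a subtractive term $\tilde{\lambda}T^{(l)}/m$ with
\begin{equation*}
T^{(l)} \in \left[0,\, \min\!\left\{\tfrac{m\tilde{B}_l G_A^{l-1}}{\tilde{\lambda}},\, m\right\}\right],
\end{equation*}
followed by pulling out the affine map $\boldsymbol v^{l-1}\mapsto \boldsymbol W^l \boldsymbol v^{l-1}+\boldsymbol b-\boldsymbol u^{l-1}$, which multiplies the residual bound by $\tilde{B}_l$. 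The base case is $\boldsymbol v^0=\boldsymbol 0$ and the initial bias term is controlled by $B_0$ via the $L_2$ bound on $\boldsymbol W^1$, yielding the $B_0\prod_{l'=1}^{l-1}\tilde{B}_{l'}/\sqrt{m}$ leading term of $G_A^l$. Accumulating the subtractive terms across layers reproduces exactly the summation in the statement of $G_A^l$.

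The main obstacle, relative to the ISTA proof, is the coupled evolution of $\boldsymbol z^l$ and $\boldsymbol u^l$: Lemma~1 as stated applies to a single soft-thresholding of a scalar class, so I need to verify (i) that the substitution $\boldsymbol v^l=\boldsymbol z^l+\boldsymbol u^l$ genuinely reduces each layer to one soft-thresholding, (ii) that the residual linear term $\boldsymbol u^{l-1}-\gamma(\boldsymbol W^l\boldsymbol v^{l-1}+\boldsymbol b)$ can be absorbed into the constants $\tilde{B}_l,\tilde{\lambda}$ without increasing the Rademacher complexity beyond the claimed factors, and (iii) that the range of $T^{(l)}$ is correctly tracked through the rescaled constants. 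Everything else -- the Lipschitz contraction step, the $L_1/L_\infty$ duality used when pulling out $\boldsymbol W^l$, and the application of Khintchine-type estimates to the initial bias -- follows the ISTA template essentially verbatim.
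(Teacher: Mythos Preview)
Your overall strategy --- reduce to scalar output coordinates, peel layers using Lemma~\ref{lemma:ST} together with the $L_1/L_\infty$ duality, and identify effective constants $\tilde\lambda,\tilde B_l$ --- matches the paper. The one place where your plan diverges is the auxiliary state $\boldsymbol v^l=\boldsymbol z^l+\boldsymbol u^l$. As your own formula shows, $\boldsymbol v^l$ still depends on $\boldsymbol u^{l-1}$ separately (both inside the soft-threshold argument and in the residual linear term), so the substitution does \emph{not} yield a closed one-variable recursion, and the peeling cannot proceed purely in $\boldsymbol v^l$; your item (i) therefore fails as stated.

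The paper handles the coupling more directly: it keeps both coordinates, defines $R_z^l$ and $R_u^l$ as the Rademacher complexities of a single entry of $\boldsymbol z^l$ and $\boldsymbol u^l$, and bounds each separately via Lemma~\ref{lemma:ST} and the norm-$B$ lemma to get
\[
R_z^l \le B_l R_z^{l-1} + (1+B_l) R_u^{l-1} - \tfrac{\lambda T^{(l)}}{m},\qquad
R_u^l \le \bigl(1+\gamma(2B_l+1)\bigr) R_u^{l-1} + 2\gamma B_l R_z^{l-1} - \tfrac{\gamma\lambda T^{(l)}}{m}.
\]
Adding these two inequalities is the key step: the \emph{sum} $R_z^l+R_u^l$ then satisfies exactly the ISTA-type recurrence
\[
R_z^l+R_u^l \le \tilde B_l\,(R_z^{l-1}+R_u^{l-1}) - \tfrac{\tilde\lambda\,T^{(l)}}{m},
\]
and this is precisely where the specific constants $\tilde B_l=(1+2\gamma)(B_l+2)$ and $\tilde\lambda=(1+\gamma)\lambda$ emerge --- by summing two coupled scalar inequalities, not by collapsing to a single $\boldsymbol v$-state. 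The final output step uses $R_A^L\le \tilde B_L(R_z^{L-1}+R_u^{L-1})$, which furnishes the leading $\tilde B_L$ factor. Everything else in your plan (Lipschitz contraction, cancellation of the constant bias, the linear-predictor bound $B_0/\sqrt m$ at the base) is as in the paper.
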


We compare the GE bounds for ISTA and ADMM networks, to the bound on the GE of ReLU networks presented in Theorem \ref{theorem: GE of ReLU network}.
We observe that both model-based networks achieve a reduction in the GE, which depends on the soft-threshold, the underlying data distribution, and the bound on the norm of the weight matrices.
Following the bound, we observe that the soft-thresholding nonlinearity is most valuable in the case of small number of training samples.
The soft-thresholding nonlinearity is the key that enables reducing the GE of the ISTA and ADMM networks compared to ReLU networks.
Next, we focus on bounding the EE of feed-forward networks based on the LRC framework.

\section{Estimation Error Bounds: Local Properties}
\label{sec:Estimation Error Bounds}

To investigate the model-based networks' EE, we use the LRC framework \citep{Bartlett2005LRC}.
Instead of considering the entire class of functions $\HV$, the LRC considers only estimators which are close to the optimal estimator
$
    \HV_r = \left\{ \boldsymbol h \in \HV: \E_{\D} \norm{ \boldsymbol h(\boldsymbol y) - \boldsymbol h^*(\boldsymbol y) }^2_2 \leq r \right\}
    \label{def:Ar}
$,
where $\boldsymbol h^*$ is such that $L_D ( \boldsymbol h^* ) = \inf_{\boldsymbol h \in \HV} L_{\D}(\boldsymbol h)$.
It is interesting to note that the class of estimators $\HV_r$ only restricts the distance between the estimators themselves, and not between their corresponding losses.
Following the LRC framework, we consider target vectors ranging in $[-1, 1]^{n_x}$.
Therefore, we adapt the networks' estimations by clipping them to lie in the interval $[-1, 1]^{n_x}$.
In our case we consider the restricted classes of functions representing the output of a neuron at depth $L$ in ISTA, ADMM, and ReLU networks.
Moreover, we denote by $\boldsymbol W^l$ and $\boldsymbol W^{l,*}, \ l \in [1, L]$ the weight matrices corresponding to $\boldsymbol h \in \HV_r$ and $\boldsymbol h^*$, respectively. 
Based on these restricted class of functions, we present the following assumption and theorem (the proof is provided in the supplementary material).

\begin{assumption}
    \label{assumption: loss}
        
    There exists a constant $C \geq 1$ such that for every probability distribution $\D$, and estimator $\boldsymbol h \in \HV$,  $
            \E_{\D} \sum_{j = 1}^{n_x} (\boldsymbol h_j - \boldsymbol h^*_j )^2 \leq C \E_{\D} \sum_{j = 1}^{n_x} \left( \ell(\boldsymbol h_j) - \ell (\boldsymbol h^*_j ) \right)$,
    where $\boldsymbol h_j$ and $\boldsymbol h^*_j$ denote the $j$th coordinate of the estimators.
\end{assumption}
As pointed out in \citep{Bartlett2002Rademacher}, this condition usually follows from a uniform convexity condition on the loss function $\ell$.  For instance, if $\left|h(\boldsymbol y) - \boldsymbol x \right| \leq 1$ for any $h \in \H, \boldsymbol y \in \R^{n_y}$ and $\boldsymbol x \in \R^{n_x}$,  then the condition is satisfied with $C=1$ \citep{Yousefi2018LRCMTL}.
\begin{theorem} [Estimation error bound of ISTA, ADMM, and ReLU  networks]
\label{theorem: EE of ISTA network}
Consider the class of functions represented by depth-$L$ ISTA networks $\HV_I^L$ as detailed in Section \ref{sec:Network_Architecture}, $m$ i.i.d training samples, and a per-coordinate loss satisfying Assumption \ref{assumption: loss} with a constant $C$.
    Let $|| \boldsymbol W^{l} -\boldsymbol W^{l,*} ||_{\infty} \leq \alpha \sqrt{r}$ for some $\alpha > 0$.
    Moreover, $B \geq \max\{\alpha \sqrt{r}, 1\}$.
    Then there exists $T$ in the interval
    $
        T \in \left[ 0, \min \left\{ \frac{\sqrt{m} B_0 B^{L - 1} 2^L }{\lambda \eta}, m\right\} \right]
    $,
    where $\eta = \frac{L B^{L-1} (B-1) - B^L + 1}{(B-1)^2}$, such that for any $s > 0$ with probability at least $1 - e^{-s}$,
    \begin{equation}
    \begin{aligned}
        \label{eq:EE bound ISTA network}
        \EE{\left( \HV_I^L \right)} \leq 41 r^* + \frac{17 C^2 + 48 C}{m n_x} s
    \end{aligned}
    \end{equation}
    where $
        r^* = C^2 \alpha^2 \left(\frac{B_0 B^{L-1}2^L}{\sqrt{m}} - \frac{\lambda T}{m} \eta \right)^2$.
    The bound is also satisfied for the class of functions represented by depth-$L$ ADMM networks $\HV_A^L$, with $
        r^* = C^2 \alpha^2 \left(\frac{B_0  \tilde{B}^{L-2}2^{L-1}}{\sqrt{m}} - \frac{\tilde{\lambda} T}{m} \tilde{\eta} \right)^2
    $,
    where $\tilde{\lambda} = (1 + \gamma) \lambda$, $\tilde{B} = (1 + 2 \gamma) (B + 2)$, and $\tilde{\eta} = \frac{(L-1) \tilde{B}^{L-2} (\tilde{B}-1) - \tilde{B}^{L-1} + 1}{(\tilde{B}-1)^2}$,
    and for the class of functions represented by depth-$L$ ReLU networks $\HV_R^L$, with $
        r^* =  C^2 \alpha^2 \, \left(\frac{B_0 B^{L-1}2^L}{\sqrt{m}}\right)^2$.

\end{theorem}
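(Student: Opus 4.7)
The plan is to apply the local Rademacher complexity (LRC) framework of \citep{Bartlett2005LRC}. In outline, the three steps are: (i) exhibit a sub-root function $\psi$ upper-bounding the empirical Rademacher complexity of the loss class restricted to $\HV_r$; (ii) compute its fixed point $r^\star$; (iii) invoke the BBM oracle inequality to obtain $\EE \leq c_1 r^\star + c_2 s/(m n_x)$ with probability at least $1 - e^{-s}$, which yields the numerical constants $41$ and $17C^2 + 48C$ once the per-coordinate averaging of the loss (with factor $1/n_x$) is accounted for. Assumption~\ref{assumption: loss} combined with the $1$-Lipschitz property of $\ell$ supplies the Bernstein-type variance condition that LRC requires on the loss class, converting the constraint $\E_\D \|\boldsymbol h - \boldsymbol h^\ast\|_2^2 \leq r$ into the needed second-moment control on loss differences.

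The core step is bounding $\Ra{\HV_{I,r}^L}$, where $\HV_{I,r}^L$ denotes the localised ISTA class. I would propagate the perturbation $\|\boldsymbol W^l - \boldsymbol W^{l,\ast}\|_\infty \leq \alpha \sqrt{r}$ through the unrolled recursion using the splitting
\[
\boldsymbol W^l \boldsymbol h_I^{l-1} - \boldsymbol W^{l,\ast} \boldsymbol h_I^{l-1,\ast} = (\boldsymbol W^l - \boldsymbol W^{l,\ast}) \boldsymbol h_I^{l-1} + \boldsymbol W^{l,\ast} (\boldsymbol h_I^{l-1} - \boldsymbol h_I^{l-1,\ast})
\]
together with the $1$-Lipschitz property of $\S_\lambda$. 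Iterating this recurrence, bounding the intermediate weight norms by $B$ (using $B \geq \alpha \sqrt{r}$) and the intermediate outputs via the clipping to $[-1,1]^{n_x}$ (which is what produces the $2^L$ factor), collapses the resulting sum into the geometric weight $\eta = (L B^{L-1}(B-1) - B^L + 1)/(B-1)^2$ in the leading complexity term. Applying Lemma~\ref{lemma:ST} once per soft-thresholding activation then subtracts $\lambda T/m$ at each layer, and these subtractions accumulate with the same geometric weights $\eta$ to give
\[
\Ra{\HV_{I,r}^L} \leq \alpha \sqrt{r} \left( \frac{B_0 B^{L-1} 2^L}{\sqrt{m}} - \frac{\lambda T \eta}{m} \right).
\]
Since $\sqrt{r}$ is concave, the right-hand side is sub-root in $r$; taking $\psi(r) = C \alpha \sqrt{r}\, A$ with $A$ the bracketed quantity yields the fixed point $r^\star = C^2 \alpha^2 A^2$ stated in the theorem. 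The admissible range for $T$ arises by requiring the bracketed quantity $A$ to remain non-negative.

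The ADMM case is structurally identical with $(B, \lambda)$ replaced by $(\tilde B, \tilde \lambda) = ((1+2\gamma)(B+2),\, (1+\gamma)\lambda)$ inherited from Theorem~\ref{theorem: GE of ADMM network}, and with $\eta$ replaced by the shifted geometric sum $\tilde \eta$; the $L-1$ exponent and the $2^{L-1}$ factor reflect that the ADMM recursion produces its soft-thresholding reduction one layer shallower. The ReLU case runs the same perturbation recursion to produce the $B_0 B^{L-1} 2^L/\sqrt{m}$ term, but Lemma~\ref{lemma:ST} does not apply so the subtractive term is absent and $r^\star$ collapses to $C^2 \alpha^2 (B_0 B^{L-1} 2^L/\sqrt{m})^2$.

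The main obstacle is the perturbation recursion: both $\boldsymbol W^l$ and $\boldsymbol h_I^{l-1}$ differ between the two networks at each layer, so the induction must cleanly separate these two contributions while simultaneously invoking Lemma~\ref{lemma:ST} in the localised setting and verifying that the soft-threshold reductions indeed compound with the geometric weights $\eta$ rather than being lost in crude bounding. Once the recursion is set up correctly, the sub-root verification, the fixed-point computation, and the final plug-in to the BBM oracle inequality are routine.
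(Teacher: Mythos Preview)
Your high-level plan---LRC framework, exhibit a sub-root upper bound on the localised Rademacher complexity, compute its fixed point, then invoke the oracle inequality of \citep{Yousefi2018LRCMTL} with $K=41/40$---matches the paper exactly. The gap is in the core technical step, where your explanation of how the constants $2^L$ and $\eta$ arise is incorrect, and your perturbation-splitting route would not produce them.

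The paper does \emph{not} track $\boldsymbol h_I^{l} - \boldsymbol h_I^{l,\ast}$ through the recursion via your splitting. Instead it bounds the RC of the localised coordinate class $\H_{I,r}^{L,j}$ directly: writing $\boldsymbol W^{l} = \boldsymbol W^{l,\ast} + \boldsymbol{\Delta W}^{l}$ with $\|\boldsymbol{\Delta W}^{l}\|_\infty \leq \Delta B := \alpha\sqrt{r}$, one runs the same layer-peeling as in Theorem~\ref{theorem: GE of ISTA network} with effective norm bound $B+\Delta B$, obtaining a first term $\frac{B_0(B+\Delta B)^L}{\sqrt m}$. The single fixed configuration $\boldsymbol W^{l}=\boldsymbol W^{l,\ast}$ for all $l$ contributes zero to the RC, so one subtracts $\frac{B_0 B^L}{\sqrt m}$. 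The factor $2^L$ then comes purely from the binomial bound
\[
(B+\Delta B)^L - B^L \;=\; \Delta B \sum_{k=0}^{L-1}\binom{L}{k} B^k \Delta B^{L-1-k} \;\leq\; \Delta B \cdot B^{L-1}\cdot 2^L,
\]
using $\Delta B \leq B$. It has nothing to do with clipping to $[-1,1]^{n_x}$. Likewise $\eta$ appears \emph{only} in the subtractive term: the soft-threshold reductions give $-\frac{\lambda T}{m}\sum_{l=0}^{L-1}(B+\Delta B)^l$, and retaining only the $\Delta B$-linear part of each summand yields $-\frac{\lambda T}{m}\,\Delta B\sum_{l=1}^{L-1} l B^{l-1} = -\frac{\lambda T}{m}\,\Delta B\,\eta$. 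Your text places $\eta$ in the leading term as well, which is inconsistent with the displayed bound you then write down.

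If you actually iterate your splitting $\boldsymbol W^l \boldsymbol h^{l-1} - \boldsymbol W^{l,\ast}\boldsymbol h^{l-1,\ast} = (\boldsymbol W^l-\boldsymbol W^{l,\ast})\boldsymbol h^{l-1} + \boldsymbol W^{l,\ast}(\boldsymbol h^{l-1}-\boldsymbol h^{l-1,\ast})$ with $\|\boldsymbol h^{l-1}\|$ bounded either via clipping or via $B_0 B^{l-1}$, the leading term becomes $\Delta B\cdot B_0\cdot\tfrac{B^L-1}{B-1}$ or $\Delta B\cdot B_0\cdot L B^{L-1}$, not $\Delta B\cdot B_0\cdot B^{L-1}2^L$. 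So while a perturbation recursion is a reasonable instinct, it does not reproduce the theorem's stated constants; those are artifacts of the paper's binomial-expansion route.
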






From Theorem \ref{theorem: EE of ISTA network}, we observe that the EE decreases by a factor of $\O \left(1 / m \right)$, instead of a factor of $\O \left(1 / \sqrt{m}\right)$ obtained for the GE.
This result complies with previous local bounds which yield faster convergence rates compared to global bounds \citep{Blanchard2007localVSglobal,Bartlett2005LRC}. 
Also, the value of $\alpha$ relates the maximal distance between estimators in $\H_r$ denoted by $r$, to the distance between their corresponding weight matrices $|| \boldsymbol W^{l} -\boldsymbol W^{l,*} ||_{\infty} \leq \alpha \sqrt{r},\ l \in [1, L]$.
Tighter bounds on the distance between the weight matrices allow us to choose a smaller value for $\alpha$, resulting in smaller values $r^*$ which improve the EE bounds.
The value of $\alpha$ could depend on the network's nonlinearity, underlying data distribution $\D$, and number of layers.

Note that the bounds of the model-based architectures depend on the soft-thresholding through the value of $\lambda \E_S(T)$.
As $\lambda \E_S(T)$ increases, the bound on the EE decreases, which emphasizes the nonlinearity's role in the network's generalization abilities.
Due to the soft-thresholding, ISTA and ADMM networks result in lower EE bounds compared to the bound for ReLU networks.
It is interesting to note, that as the number of training samples $m$ increases, the difference between the bounds on the model-based and ReLU networks is less significant.
In the EE bounds of model-based networks, the parameter $B_0$ relates the bound to the sparsity level $\rho$, of the target vectors.
Lower values of $\rho$ result in lower EE bounds, as demonstrated in the supplementary.

\section{Numerical Experiments}
\label{sec:Simulations}

In this section, we present a series of experiments that concentrate on how a particular model-based network (ISTA network) compares to a ReLU network, 
and showcase the merits of model-based networks.
We focus on networks with $10$ layers (similar to previous works \citep{GregorLeCun2010LISTA}), to represent realistic model-based network architectures.
The networks are trained on a simulated dataset to solve the problem in (\ref{eq:linear_inverse_problem}), with target vectors uniformly distributed in $[-1, 1]$.
The linear mapping $\boldsymbol A$ is constructed from the real part of discrete Fourier transform (DFT) matrix rows \citep{Ong2019ISTA_DFT}, where the rows are randomly chosen.
The sparsity rate is $\rho = 0.15$, and the noise's standard deviation is $0.1$.
To train the networks we used the SGD optimizer with the $L_1$ loss over all neurons of the last layer.
The target and noise vectors are generated as element wise independently from a uniform distribution ranging in $[-1, 1]$ .
All results are reproducible through \citep{CodeExperimentsGit} which provides the complete code to execute the experiments presented in this section.

We concentrate on comparing the networks' EE, since in practice, the networks are trained with a finite number of examples.
In order to empirically approximate the EE of a class of networks $\H$, we use an empirical approximation of $h^*$ (which satisfy $\L_{\D} (h^*) = \inf_{h \in \H} \L_{\D}(h)$) and the ERM $\hat{h}$, denoted by $h^*_{emp}$ and $\hat{h}_{emp}$ respectively.
The estimator $h^*_{emp}$ results from the trained network with $10^4$ samples, and the ERM is approximated by a network trained using SGD with $m$ training samples (where $m \leq 10^4$).
The empirical EE is given by their difference $\L_{\D}(\hat{h}_{emp}) - \L_{\D}(h^*_{emp})$.

In Fig. \ref{fig:ISTA vs ReLU depth 10}, we compare between the ISTA and ReLU networks, in terms of EE and $L_1$ loss, for networks trained with different number of samples (between $10$ and $10^4$ samples).
We observe that for small number of training samples, the ISTA network substantially reduces the EE compared to the ReLU network.
This can be understood from Theorem \ref{theorem: EE of ISTA network}, which results in lower EE bounds on the ISTA networks compared to ReLU networks, due to the term $\lambda \E_S(T)$. 
However, for large number of samples the EE of both networks decreases to zero, which is also expected from Theorem \ref{theorem: EE of ISTA network}.
This highlights that the contribution of the soft-thresholding nonlinearity to the generalization abilities of the network is more significant for small number of training samples.
Throughout the paper, we considered networks with constant bias terms.
In this section, we also consider learned bias terms, as detailed in the supplementary material.
In Fig. \ref{fig:ISTA vs ReLU depth 10}, we present the experimental results for networks with constant and learned biases. 
The experiments indicate that the choice of constant or learned bias is less significant to the EE or the accuracy, compared to the choice of nonlinearity, emphasizing the relevance of the theoretical guarantees.
The cases of learned and constant biases have different optimal estimators, as the networks with learned biases have more learned parameters.
As a result, it is plausible a network with more learnable parameters (the learnable bias) exhibits a lower estimation error since the corresponding optimal estimator also exhibits a lower estimation error.
\begin{figure}[!h]
        \centering
        \includegraphics[width = 3.0in]{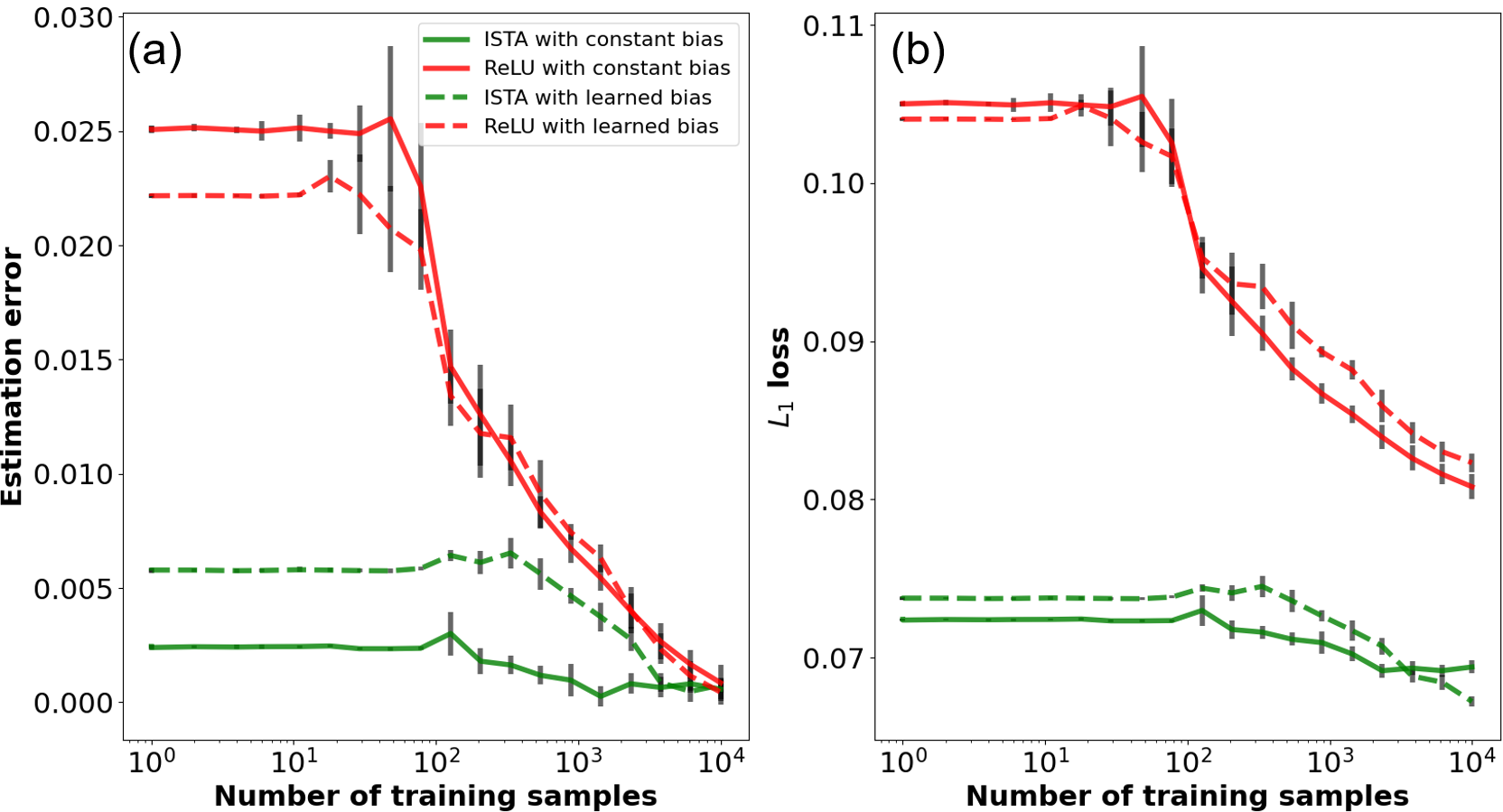}
        \caption{Comparing the EE of ISTA and ReLU networks with $10$ layers. \textbf{a.} The EE of the networks as a function of the training samples. \textbf{b.} The $L_1$ of the networks as a function of the training samples.
        The ISTA network achieves lower EE compared to the ReLU network, along with lower losses.}
    \label{fig:ISTA vs ReLU depth 10}
\end{figure}
To analyze the effect of the soft-thresholding value on the generalization abilities of the ISTA network, we show in Fig. \ref{fig:ISTA  lambda}, the empirical EE for multiple values of $\lambda$.
The experimental results demonstrate that for small number of samples, increasing $\lambda$ reduces the EE.
As expected from the EE bounds, for a large number of training samples, this dependency on the nonlinearity vanishes, and the EE is similar for all values of $\lambda$.
In Fig. \ref{fig:ISTA lambda}b, we show the $L_1$ loss of the ISTA networks for different values of $\lambda$.
We observe that low estimation error does not necessarily lead to low loss value. For $m \lesssim 100 $, increasing $\lambda$ reduced the EE.
These results suggest that given ISTA networks with different values of $\lambda$ such that all networks achieve similar accuracy, the networks with higher values of $\lambda$ provide lower EE.
These results are also valid for additional networks' depths.
In Section \ref{sec:Additional simulation results},  we compare the EE for networks with different number of layers, and show that they exhibit a similar behaviour.
\begin{figure}[!h]
        \centering
        \includegraphics[width = 3.0in]{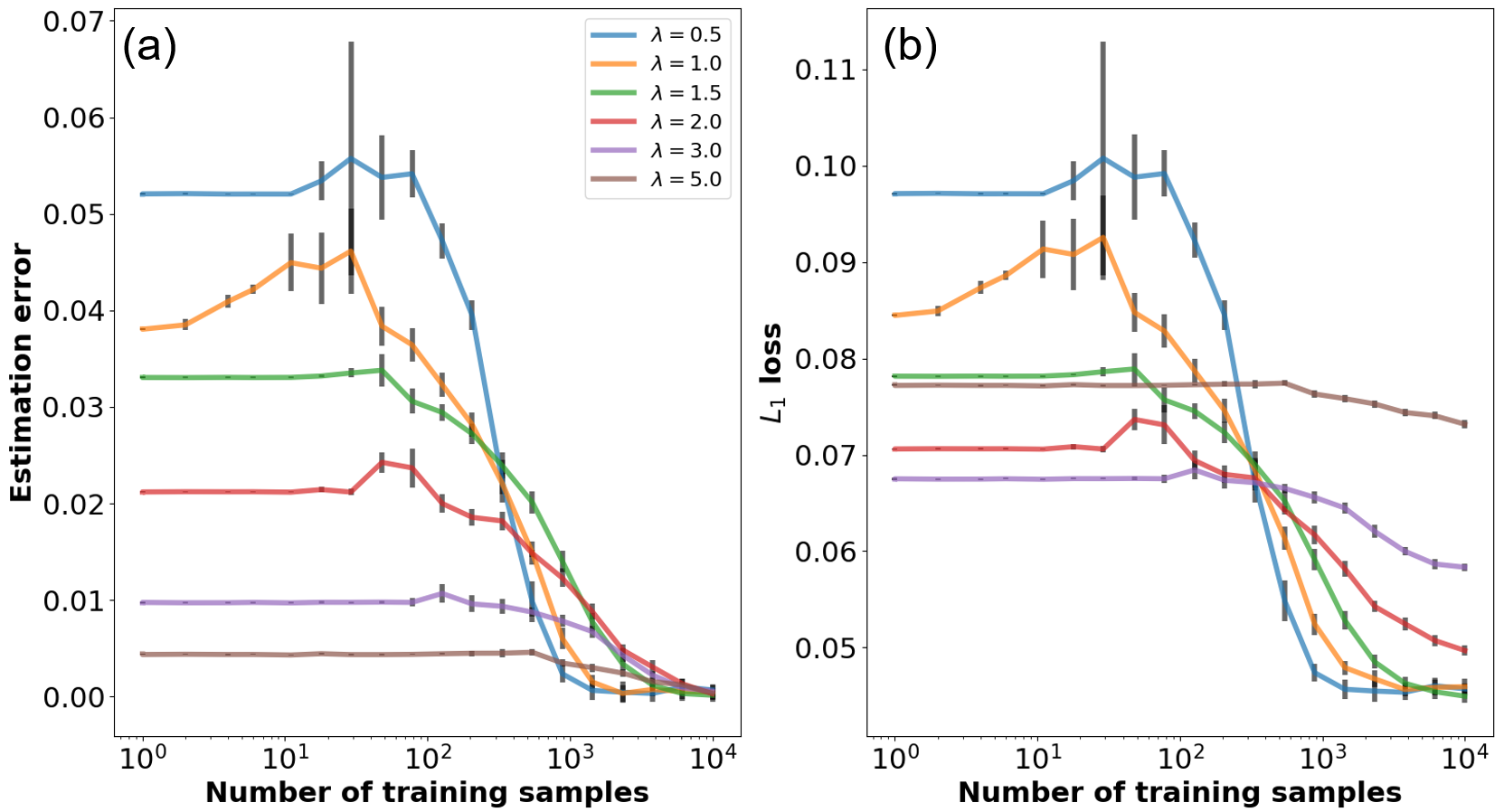}
        \caption{Estimation error and loss of ISTA networks with $10$ layers, as a function of the soft-threshold's value $\lambda$.
        The experiments indicate that the estimation error decreases as $\lambda$ increases, which demonstrates a way to control the networks' generalization abilities through $\lambda$.}
    \label{fig:ISTA lambda}
\end{figure}

\section{Conclusion}
\label{sec:conclusions}

We derived new GE and EE bounds for ISTA and ADMM networks, based on the RC and LRC frameworks.
Under suitable conditions, the model-based networks' GE is nonincreasing with depth, resulting in a substantial improvement compared to the GE bound of ReLU networks.
The EE bounds explain EE behaviours experienced in practice, such as ISTA networks demonstrating higher estimation abilities, compared to ReLU networks, especially for small number of training samples.
Through a series of experiments, we show that the generalization abilities of ISTA networks are controlled by the soft-threshold value, and achieve lower EE along with a more accurate recovery compared to ReLU networks which increase the GE and EE with the networks' depths.

It is interesting to consider how the theoretical insights can be harnessed to enforce neural networks with high generalization abilities.
One approach is to introduce an additional regularizer during the training process that is rooted in the LRC, penalizing networks with high EE \citep{Yang2019}.  

\bibliography{iclr2023_conference}
\bibliographystyle{iclr2023_conference}

\newpage

\appendix
\section{Supplementary Material}

In this supplementary material we provide the proofs of the presented lemmas and theorems.

\subsection{Rademacher complexity of the soft-thresholding operator}
\label{sec: appendix lemma ST}
In this section, we prove Lemma \ref{lemma:ST} which examines how the RC is affected by the soft-thresholding operator. 

\textbf{Lemma \ref{lemma:ST}} (Rademacher complexity of soft-thresholding): \\
    Given any class of scalar functions $\H$ where $h:\mathbb{R}^{n} \xrightarrow[]{} \mathbb{R},\ h \in \H$ for any integer $n \geq 1$, there exists a value $T$ in the interval
    \begin{equation}
    \begin{aligned}
        T \in \left(0, \min \left\{ \frac{m \Ra{\H }}{\lambda}, m \right\} \right]
    \end{aligned}
    \end{equation}
    such that
    \begin{equation}
    \begin{aligned}
        \Ra{\FH{S_{\lambda}}{\H} } \leq \Ra{\H} - \frac{\lambda T}{m}.
    \end{aligned}
    \end{equation}
    
\begin{proof}
    We start by denoting 
    \begin{equation}
    \begin{aligned}
        R = m \Ra{\FH{S_{\lambda}}{\H} } = \E_{\{\epsilon_i\}_{i = 1}^m} \sup_{h \in \H} \sum_{i = 1}^m  \epsilon_i \S_{\lambda}( h(\boldsymbol y_i))
    \end{aligned}
    \end{equation}
    and we explicitly compute the expectation on $\epsilon_1$:
    \begin{equation}
    \begin{aligned}
        \label{eq:R_helper}
        R = \E_{\{\epsilon_i\}_{i = 1}^m} \ \ \ & \sup_{h \in \H} \sum_{i = 1}^m  \epsilon_i \S_{\lambda}( h(\boldsymbol y_i)) \\
        = \E_{\{\epsilon_i\}_{i = 1}^m} \ \ \ & \sup_{h \in \H} \epsilon_1 \S_{\lambda}(h(\boldsymbol y_1)) + \sum_{i = 2}^m \epsilon_i \S_{\lambda}(h(\boldsymbol y_i)) \\
        = \E_{\{\epsilon_i\}_{i = 2}^m} \ \ \ & \left[ \frac{1}{2} \sup_{h \in \H} \left(\S_{\lambda}(h(\boldsymbol y_1)) + \sum_{i = 2}^m \epsilon_i \S_{\lambda}(h(\boldsymbol y_i)) \right) + \frac{1}{2} \sup_{h' \in \H} \left(-\S_{\lambda}(h'(\boldsymbol y_1)) + \sum_{i = 2}^m \epsilon_i \S_{\lambda}(h'(\boldsymbol y_i)) \right) \right]\\
        = \frac{1}{2} \E_{\{\epsilon_i\}_{i = 2}^m} & \sup_{h \in \H} \left(\S_{\lambda}(h(\boldsymbol y_1)) + \sum_{i = 2}^m \epsilon_i \S_{\lambda}(h(\boldsymbol y_i)) \right) + \sup_{h' \in \H} \left(-\S_{\lambda}(h'(\boldsymbol y_1)) + \sum_{i = 2}^m \epsilon_i \S_{\lambda}(h'(\boldsymbol y_i)) \right) \\
        = \frac{1}{2} \E_{\{\epsilon_i\}_{i = 2}^m} & \sup_{h, h' \in \H} \left(\S_{\lambda}(h(\boldsymbol y_1)) - \S_{\lambda}(h'(\boldsymbol y_1)) + \sum_{i = 2}^m \epsilon_i (\S_{\lambda}(h(\boldsymbol y_i)) + \S_{\lambda}(h'(\boldsymbol y_i)))\right).
    \end{aligned}
    \end{equation}
    To simplify the notations, we denote
    \begin{equation}
    \begin{aligned}
        \sh{h}{h'} & = \sum_{i = 2}^m \epsilon_i (\S_{\lambda}(h(\boldsymbol y_i)) + \S_{\lambda}(h'(\boldsymbol y_i)))
    \end{aligned}
    \end{equation}
    and observe that $\sh{h}{h'} = \sh{h'}{h}$.
    
    Next, we show that the soft-thresholding on the estimations of the first sample, reduces the RC, and results in
    \begin{equation}
    \begin{aligned}
    \label{eq:r helper}
        R \leq \frac{1}{2} \E_{\{\epsilon_i\}_{i = 2}^m} & \sup_{h, h' \in \H} \left(h(\boldsymbol y_1)  - h'(\boldsymbol y_1) - 2 \lambda \mathbbm{1}_{h(\boldsymbol y_1) > \lambda \land h'(\boldsymbol y_1) < - \lambda} + \sh{h}{h'}\right).
    \end{aligned}
    \end{equation}

    The soft-thresholding function is piecewise linear with three pieces, and can be written as
    \begin{equation}
    \begin{aligned}
        \label{eq:ST_regions}
        \S_{\lambda}(x) = \begin{cases}
			x + \lambda, & x < -\lambda \\
            0, & x \in [-\lambda, \lambda] \\
            x - \lambda, & x > \lambda
		 \end{cases}.
    \end{aligned}
    \end{equation}
    We focus on the terms $h(\boldsymbol y_1)$ and $h'(\boldsymbol y_1)$, in the different regions determined by (\ref{eq:ST_regions}).
    To capture these regions, we define the following classes of functions
    \begin{equation}
    \begin{aligned}
			\H^{-1} &= \left\{ h \in \H: h(\boldsymbol y_1) < - \lambda \right\} \\
            \H^0 &= \left\{ h \in \H: h(\boldsymbol y_1) \in [-\lambda, \lambda] \right\} \\
            \H^1 &= \left\{ h \in \H: h(\boldsymbol y_1) > \lambda \right\}.
    \end{aligned}
    \end{equation}
    We can then write
    \begin{equation}
    \begin{aligned}
        \label{eq: r alpha beta}
        r = \max_{\alpha, \beta \in \{ -1, 0, 1\}} r_{\alpha, \beta}
    \end{aligned}
    \end{equation}
    where
    \begin{equation}
    \begin{aligned}
        r_{\alpha, \beta} = \sup_{h \in \H^{\alpha} , h' \in \H^{\beta}} \left(\S_{\lambda}(h(\boldsymbol y_1)) - \S_{\lambda}(h'(\boldsymbol y_1)) + \sh{h}{h'} \right).
    \end{aligned}
    \end{equation}
    
    We now divide into cases for the different values of $\alpha$ and $\beta$:
    \begin{enumerate}
        \item For $\alpha, \beta = 1$ or $\alpha, \beta = -1$, we obtain that
        $\S_{\lambda} ( h(\boldsymbol y_1)) - \S_{\lambda} (h'(\boldsymbol y_1) ) = h(\boldsymbol y_1) - h'(\boldsymbol y_1)$. 
        As a result,
        \begin{equation}
        \begin{aligned}
        r_{1, 1} = \sup_{h \in \H^{1} , h' \in \H^{1}} \left( h(\boldsymbol y_1) - h'(\boldsymbol y_1) + \sh{h}{h'} \right)
        \end{aligned}
        \end{equation}
        and
        \begin{equation}
        \begin{aligned}
        r_{-1, -1} = \sup_{h \in \H^{-1} , h' \in \H^{-1}} \left( h(\boldsymbol y_1) - h'(\boldsymbol y_1) + \sh{h}{h'} \right).
        \end{aligned}
        \end{equation}
        
        \item For $\alpha = 1$ and $\beta = 0$, we obtain that $\S_{\lambda} ( h(\boldsymbol y_1)) - \S_{\lambda} (h'(\boldsymbol y_1) ) = h(\boldsymbol y_1) - \lambda$.
        In this case, $h'(\boldsymbol y_1) \leq \lambda$, which results in
        \begin{equation}
        \begin{aligned}
            r_{1, 0} & = \sup_{h \in \H^{1} , h' \in \H^{0}} \left( h(\boldsymbol y_1) - \lambda + \sh{h}{h'} \right) \\
            & \leq \sup_{h \in \H^{1} , h' \in \H^{0}} \left( h(\boldsymbol y_1) - h'(\boldsymbol y_1) + \sh{h}{h'} \right).
        \end{aligned}
        \end{equation}
        
        \item For $\alpha = 0$ and $\beta = -1$, we obtain that $\S_{\lambda} ( h(\boldsymbol y_1)) - \S_{\lambda} (h'(\boldsymbol y_1) ) = -h'(\boldsymbol y_1) - \lambda$.
        In this case, $h(\boldsymbol y_1) \geq -\lambda$, which results in
        \begin{equation}
        \begin{aligned}
            r_{0, -1} & = \sup_{h \in \H^{0} , h' \in \H^{-1}} \left( -h'(\boldsymbol y_1) - \lambda + \sh{h}{h'} \right) \\
            & \leq \sup_{h \in \H^{0} , h' \in \H^{-1}} \left( h(\boldsymbol y_1) - h'(\boldsymbol y_1) + \sh{h}{h'} \right).
        \end{aligned}
        \end{equation}

        \item For $\alpha = 1$ and $\beta = -1$, we obtain that $\S_{\lambda} ( h(\boldsymbol y_1)) - \S_{\lambda} (h'(\boldsymbol y_1) ) = h(\boldsymbol y_1) - h'(\boldsymbol y_1) - 2\lambda$, which results in
        \begin{equation}
        \begin{aligned}
            r_{1, -1} & = \sup_{h \in \H^{1} , h' \in \H^{-1}} \left( h(\boldsymbol y_1) - h'(\boldsymbol y_1) - 2 \lambda + \sh{h}{h'} \right).
        \end{aligned}
        \end{equation}
        In this case, we obtain a reduction by $2 \lambda$ in the complexity.
        
        \item For $\alpha, \beta = 0$, we obtain that
        $\S_{\lambda} ( h(\boldsymbol y_1)) - \S_{\lambda} (h'(\boldsymbol y_1) ) = 0$, which leads to
        \begin{equation}
        \begin{aligned}
        r_{0, 0} = \sup_{h \in \H^{0} , h' \in \H^{0}} \sh{h}{h'}.
        \end{aligned}
        \end{equation}
        We bound this term to obtain a similar expression to the previous cases.
        Any pair of estimators $h,h' \in \H^{0}$ satisfies that $h(\boldsymbol y_1) \geq h'(\boldsymbol y_1)$ or $h(\boldsymbol y_1) \leq h'(\boldsymbol y_1)$.
        Using the fact that $\sh{h}{h'} = \sh{h'}{h}$, there exists a pair $h,h' \in \H^{0}$ satisfying $h(\boldsymbol y_1) - h'(\boldsymbol y_1) + \sh{h}{h'} \geq \sh{h}{h'}$ (otherwise, interchanging between the estimators satisfies this requirement since $h'(\boldsymbol y_1) - h(\boldsymbol y_1) + \sh{h}{h'} \geq \sh{h}{h'} = \sh{h'}{h}$, where the estimators satisfy $h',h \in \H^0$).
        This means that the term $r_{0,0}$ is bounded by
        \begin{equation}
        \begin{aligned}
            r_{0, 0} \leq \sup_{h \in \H^{0} , h' \in \H^{0}} \left( h(\boldsymbol y_1) - h'(\boldsymbol y_1) + \sh{h}{h'} \right).
        \end{aligned}
        \end{equation}
        
    \end{enumerate}
    
    We will now show that the additional cases that were not presented above, are dominated by other cases (i.e. upper bounded), and therefore will not be considered in (\ref{eq: r alpha beta}):
    \begin{enumerate}
        \item For $\alpha = -1$ and $\beta = 1$, we obtain that $\S_{\lambda} ( h(\boldsymbol y_1)) - \S_{\lambda} (h'(\boldsymbol y_1) ) = h(\boldsymbol y_1) - h'(\boldsymbol y_1) + 2\lambda$, which results in:
        \begin{equation}
        \begin{aligned}
            r_{-1, 1} & = \sup_{h \in \H^{-1} , h' \in \H^{1}} \left( h(\boldsymbol y_1) - h'(\boldsymbol y_1) + 2 \lambda + \sh{h}{h'} \right).
        \end{aligned}
        \end{equation}
        We now show that $r_{-1, 1} \leq r_{1, -1}$, and therefore the case of $\alpha = -1$ and $\beta = 1$ is upper bounded by the case of $\alpha = 1$ and $\beta = -1$.
        For any pair of estimators in $h \in \H^{-1}$ and $h' \in \H^{1}$, there exists a pair of estimators in $h \in \H^{1}$ and $h' \in \H^{-1}$, which achieve a higher value (showing that $r_{-1, 1} \leq r_{1, -1}$).
        Since we consider the case where $\alpha = -1$ and $\beta = 1$, there exist $\delta(h), \delta'(h') > 0$ such that $h(\boldsymbol y_1) = -\delta(h) - \lambda$ and $h'(\boldsymbol y_1) = \delta'(h') + \lambda$, resulting in:
        \begin{equation}
        \begin{aligned}
            r_{-1,1} = \sup_{h \in \H^{-1} , h' \in \H^{1}} c_{-1,1}(h,h')
        \end{aligned}
        \end{equation}        
        where
        \begin{equation}
        \begin{aligned}
            c_{-1,1}(h,h') & = -\delta(h) -\delta'(h') + \sh{h}{h'}.
        \end{aligned}
        \end{equation}
        Similarly, for the case of $\alpha = 1$ and $\beta = -1$, we can write
        \begin{equation}
        \begin{aligned}
            r_{1,-1} = \sup_{h \in \H^{1} , h' \in \H^{-1}} c_{1,-1}(h,h')
        \end{aligned}
        \end{equation}        
        such that
        \begin{equation}
        \begin{aligned}
            c_{1,-1}(h,h') & = \lambda + \delta(h) + \lambda + \delta'(h') -2 \lambda + \sh{h}{h'}  \\
            & = \delta(h) + \delta'(h') + \sh{h}{h'}.
        \end{aligned}
        \end{equation}
        Since $\sh{h'}{h} = \sh{h}{h'}$, for any estimators $h_{-1} = -\delta_{-1}(h_{-1}) - \lambda \in \H^{-1}$ and $h_{1} = \delta_1( h_1) + \lambda \in \H^{1}$
        \begin{equation}
        \begin{aligned}
             c_{1,-1}(h_1,h_{-1}) &= \delta_1(h_1) + \delta_2(h_{-1}) + \sh{h_1}{h_{-1}}\\
             &\geq -\delta_1(h_1) - \delta_2(h_{-1}) + \sh{h_{-1}}{h_1}\\
             &= c_{-1,1}(h_{-1},h_1)
        \end{aligned}
        \end{equation}                
        implying that
        \begin{equation}
        \begin{aligned}
             r_{-1,1} = \sup_{h \in \H^{-1} , h' \in \H^{1}} c_{-1,1}(h,h') \leq \sup_{h' \in \H^{1} , h \in \H^{-1}} c_{1,-1}(h',h) = r_{1,-1}.
        \end{aligned}
        \end{equation}

        \item For $\alpha = -1$ and $\beta = 0$, we obtain that $\S_{\lambda} ( h(\boldsymbol y_1)) - \S_{\lambda} (h'(\boldsymbol y_1) ) = h(\boldsymbol y_1) + \lambda$, which results in:
        \begin{equation}
        \begin{aligned}
            r_{-1, 0} = \sup_{h \in \H^{-1} , h' \in \H^{1}} c_{-1,0}(h, h')
        \end{aligned}
        \end{equation}
        where
        \begin{equation}
        \begin{aligned}
            c_{-1, 0}(h,h') & = h(\boldsymbol y_1) + \lambda + \sh{h}{h'}.
        \end{aligned}
        \end{equation}
        We will show that $r_{-1, 0} \leq r_{0, -1}$, and therefore the case of $\alpha = -1$ and $\beta = 0$ is dominated (upper bounded) by the case of $\alpha = 0$ and $\beta = -1$.
        Since we consider the case where $\alpha = -1$ and $\beta = 0$, there exist $\delta(h) > 0$ such that $h(\boldsymbol y_1) = -\delta(h) - \lambda$, resulting in:
        \begin{equation}
        \begin{aligned}
            c_{-1,0}(h,h') & = -\delta(h) + \sh{h}{h'}.
        \end{aligned}
        \end{equation}
        Similarly, for $\alpha = 0$ and $\beta = -1$, we can write
        \begin{equation}
        \begin{aligned}
            r_{0, -1} = \sup_{h \in \H^{0} , h' \in \H^{-1}} c_{0,-1}(h, h')
        \end{aligned}
        \end{equation}
        where
        \begin{equation}
        \begin{aligned}
            c_{0,-1}(h,h') &= \lambda + \delta(h) - \lambda + \sh{h'}{h} \\
            & = \delta(h) + \sh{h}{h'}.
        \end{aligned}
        \end{equation}
        Again, we consider any estimators $h_{-1} \in H^{-1}$ and $h_0 \in \H^0$
        \begin{equation}
        \begin{aligned}
            c_{0,-1}(h_0,h_{-1}) &= \delta_{-1}(h_{-1}) + \sh{h_0}{h_{-1}} \\
            & \geq -\delta_{-1}(h_{-1}) + \sh{h_{-1}}{h_0}\\
            & =  c_{-1,0}(h_{-1},h_0).
        \end{aligned}
        \end{equation}
        This implies that for any pair of estimators in $\H^{-1}$ and $\H^{0}$, there exists a pair of estimators in $\H^{0}$ and $\H^{-1}$, which achieve a higher value, proving that
        \begin{equation}
        \begin{aligned}
            r_{-1, 0} = \sup_{h \in \H^{-1} , h' \in \H^{0}} c_{-1,0}(h, h') \leq \sup_{h' \in \H^{0} , h \in \H^{-1}} c_{0,-1}(h', h) = r_{0, -1}.
        \end{aligned}
        \end{equation}
        
        Following the same chain of thought, one can prove that $r_{0, 1} \leq r_{1, 0}$, which makes the case of $\alpha = 0$ and $\beta = 1$ dominated by that of $\alpha = 1$ and $\beta = 0$.
    \end{enumerate}
    
    Taking into account the dominant cases, we can re-write (\ref{eq: r alpha beta}) as
    \begin{equation}
    \begin{aligned}
        \label{eq: r alpha beta2}
        r & = \max_{\alpha, \beta \in \{ -1, 0, 1\}} r_{\alpha, \beta} 
        & = \max_{( \alpha, \beta ) \in \mathcal{I}} r_{\alpha, \beta}
    \end{aligned}
    \end{equation}
    where 
    \begin{equation}
    \begin{aligned}
        \mathcal{I} = \left\{ \begin{pmatrix} 1, 1 \end{pmatrix},
        \begin{pmatrix} 1, 0 \end{pmatrix},
        \begin{pmatrix} 1, -1 \end{pmatrix},
        \begin{pmatrix} 0, 0 \end{pmatrix},
        \begin{pmatrix} 0, -1 \end{pmatrix}\right\}
    \end{aligned}
    \end{equation}
    is the set of dominant cases.
    We observe, that all the dominant cases, $\begin{pmatrix} \alpha, \beta \end{pmatrix} \in \mathcal{I}$, are bounded by
    \begin{equation}
    \begin{aligned}
        r_{\alpha, \beta} \leq \sup_{h \in \H^{\alpha} , h' \in \H^{\beta}} \left( h(\boldsymbol y_1) - h'(\boldsymbol y_1) - 2 \lambda \mathbbm{1}_{h(\boldsymbol y_1) > \lambda \land h'(\boldsymbol y_1) < - \lambda} + \sh{h}{h'} \right)
    \end{aligned}
    \end{equation}
    where the indicator takes into account the reduction in complexity obtained for the case of $\alpha = 1$ and $\beta = -1$.
    Since $\H^{\alpha} \subseteq \H, \alpha \in \{ -1, 0, 1\}$, the above is bounded by
    \begin{equation}
    \begin{aligned}
        r_{\alpha, \beta} \leq \sup_{h \in \H, h' \in \H} \left( h(\boldsymbol y_1) - h'(\boldsymbol y_1) - 2 \lambda \mathbbm{1}_{h(\boldsymbol y_1) > \lambda \land h'(\boldsymbol y_1) < - \lambda} + \sh{h}{h'} \right)
    \end{aligned}
    \end{equation}
    which leads to
    \begin{equation}
    \begin{aligned}
        r = \max_{(\alpha, \beta) \in \mathcal{I}} r_{\alpha, \beta} \leq \sup_{h \in \H, h' \in \H} \left( h(\boldsymbol y_1) - h'(\boldsymbol y_1) - 2 \lambda \mathbbm{1}_{h(\boldsymbol y_1) > \lambda \land h'(\boldsymbol y_1) < - \lambda} + \sh{h}{h'} \right)
    \end{aligned}
    \end{equation}
    proving the bound in (\ref{eq:r helper}).

    Next, we aim to separate the indicator in (\ref{eq:r helper}) from the rest of the expression.
    Let us denote by $h^{\star}$ and $h'^{\star}$ the estimators that achieve the supremum in (\ref{eq:r helper}), which depend on the value of the Rademacher random variables $\epsilon_i,\ i \in [2, m]$.
    Then,
    \begin{equation}
    \begin{aligned}
        \label{eq:T_1_def}
        R & \leq \frac{1}{2} \E_{\{\epsilon_i\}_{i = 2}^m} \left(h^{\star}(\boldsymbol y_1) - h'^{\star}(\boldsymbol y_1) - 2 \lambda \mathbbm{1}_{h^{\star}(\boldsymbol y_1) > \lambda \land h'^{\star}(\boldsymbol y_1) < - \lambda} + \sh{h^{\star}}{h'^{\star}} \right) \\
        & = \frac{1}{2} \E_{\{\epsilon_i\}_{i = 2}^m} \left(h^{\star}(\boldsymbol y_1) - h'^{\star}(\boldsymbol y_1) + \sh{h^{\star}}{h'^{\star}}\right) - \lambda \E_{\{\epsilon_i\}_{i = 2}^m} \mathbbm{1}_{h^{\star}(\boldsymbol y_1) > \lambda \land h'^{\star}(\boldsymbol y_1) < - \lambda}\\
        & = \frac{1}{2} \E_{\{\epsilon_i\}_{i = 2}^m} \left(h^{\star}(\boldsymbol y_1) - h'^{\star}(\boldsymbol y_1) + \sh{h^{\star}}{h'^{\star}} \right) - \lambda T_1
    \end{aligned}
    \end{equation}
    where we denoted $T_1 =  \E_{\{\epsilon_i\}_{i = 2}^m} \mathbbm{1}_{h^{\star}(\boldsymbol y_1) > \lambda \land h'^{\star}(\boldsymbol y_1) < - \lambda} \in [0, 1]$.
    The RC is a nonnegative quantity, which translates to restrictions on the value of $T_1$.
    Note, that the supremum over the first term in (\ref{eq:T_1_def}), increases its value, leading to
    \begin{equation}
    \begin{aligned}
        \label{eq:ST lemma helper 1}
        R & \leq \frac{1}{2} \E_{\{\epsilon_i\}_{i = 2}^m} \sup_{h, h' \in \H} \left(h(\boldsymbol y_1) - h'(\boldsymbol y_1) + \sh{h}{h'}\right) - \lambda T_1.
    \end{aligned}
    \end{equation}
    
    Following the same chain of reasoning, we introduce a Rademacher random variable into (\ref{eq:ST lemma helper 1}) and express the above as
    \begin{equation}
    \begin{aligned}
        \label{eq:T_1}
         R & \leq \E_{\{\epsilon_i\}_{i = 1}^m} \sup_{h \in \H} \left(\epsilon_1 h(\boldsymbol y_1) + \sum_{i = 2}^m \epsilon_i \S_{\lambda}(h(\boldsymbol y_i))\right) - \lambda T_1.
    \end{aligned}
    \end{equation}
    Applying the same chain of thoughts on each sample, results in similar reduction in the RC by quantities $T_i \in [0, 1],\ i \in [2,m]$, which are obtained by repeating the process in (\ref{eq:T_1}) for all $m$ samples, similarly to (\ref{eq:T_1_def}).
    Repeating the process for all $i \in [2, m]$, leads to
    \begin{equation}
    \begin{aligned}
        R & \leq \E_{\{\epsilon_i\}_{i = 1}^m} \sup_{h \in \H} \sum_{i = 1}^m \epsilon_i h(\boldsymbol y_i) - \lambda T
    \end{aligned}
    \end{equation}
    where $T = \sum_{i = 1}^m T_i \ \in [0, m]$.
    Substituting the definition of the RC in (\ref{eq:Rademacher_definition}), results in
    \begin{equation}
    \begin{aligned}
    \label{eq:ub_Rad}
        \Ra{\FH{S_{\lambda}}{\H}} = \frac{R}{m} & \leq \Ra{\H } - \frac{\lambda T}{m}.
    \end{aligned}
    \end{equation}
    The RC is nonnegative, leading to
    \begin{equation}
    \begin{aligned}
        T \leq \min \left\{ \frac{m \Ra{\H }}{\lambda}, m \right\},
    \end{aligned}
    \end{equation}
which completes the proof.
\end{proof}

\subsection{Generalization error bounds}
\label{sec: appendix A2 GE}

In this section we bound the GE of ISTA and ADMM networks.
We start by presenting a few useful lemmas, starting by Talagrand’s contraction lemma.

\begin{lemma}[Talagrand’s contraction lemma \citep{Talgrand1991ContractionLemma}]
    Let $\phi$ be a $G$-Lipschitz function. Then
    \begin{equation}
        \E_{\{\epsilon_i\}_{i = 1}^m} \sup_{h \in \H} \frac{1}{m} \sum_{i = 1}^m \epsilon_i \phi \left( h(\boldsymbol y_i) \right)
        \leq G \E_{\{\epsilon_i\}_{i = 1}^m} \sup_{h \in \H} \frac{1}{m} \sum_{i = 1}^m \epsilon_i h(\boldsymbol y_i).
    \label{eq:ratio_classes_function}
\end{equation}
\label{lemma:Talagrand}
\end{lemma}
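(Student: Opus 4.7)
The plan is to establish the lemma by the classical "peeling" argument: remove the Rademacher variables $\epsilon_1, \dots, \epsilon_m$ one at a time, picking up a factor of $G$ at each step by replacing $\phi(h(\boldsymbol y_i))$ by $h(\boldsymbol y_i)$. The heart of the proof is a single-index contraction: for any index $k$ and any functional $A:\H \to \R$ that does not depend on $\epsilon_k$,
\begin{equation*}
    \E_{\epsilon_k} \sup_{h \in \H} \left[ \epsilon_k \phi(h(\boldsymbol y_k)) + A(h) \right] \leq \E_{\epsilon_k} \sup_{h \in \H} \left[ G \epsilon_k h(\boldsymbol y_k) + A(h) \right].
\end{equation*}
Iterating this inequality over $k=1,\dots,m$ and dividing by $m$ will produce the stated bound.

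To establish the one-step contraction, I would first apply the symmetrization trick that already appears in \eqref{eq:R_helper}, expanding the expectation over $\epsilon_k \in \{-1,+1\}$ to write
\begin{equation*}
    \E_{\epsilon_k} \sup_{h \in \H} \left[ \epsilon_k \phi(h(\boldsymbol y_k)) + A(h) \right] = \frac{1}{2} \sup_{h, h' \in \H} \left[ \phi(h(\boldsymbol y_k)) - \phi(h'(\boldsymbol y_k)) + A(h) + A(h') \right].
\end{equation*}
The term $A(h)+A(h')$ is symmetric in the pair $(h,h')$, while the $\phi$-difference flips sign when $h$ and $h'$ are swapped. Taking the better of the two orderings for every candidate pair shows that the supremum is unchanged if the signed difference is replaced by $|\phi(h(\boldsymbol y_k)) - \phi(h'(\boldsymbol y_k))|$. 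The $G$-Lipschitz property then gives $|\phi(h(\boldsymbol y_k)) - \phi(h'(\boldsymbol y_k))| \leq G|h(\boldsymbol y_k) - h'(\boldsymbol y_k)|$, and the same swap-symmetry argument in reverse removes the absolute value on the right, yielding $G(h(\boldsymbol y_k) - h'(\boldsymbol y_k))$ inside the supremum.

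Undoing the symmetrization identity then re-introduces a fresh Rademacher variable and produces exactly $\E_{\epsilon_k} \sup_h [G \epsilon_k h(\boldsymbol y_k) + A(h)]$, which is the desired one-step bound. Pulling $G \geq 0$ outside the supremum and iterating across all $m$ coordinates gives the final inequality after dividing by $m$.

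The main delicate point is the double use of the $(h,h')$-swap symmetry: once to drop the sign before applying Lipschitzness, and once afterwards to convert $|h(\boldsymbol y_k)-h'(\boldsymbol y_k)|$ back into the Rademacher-compatible form $h(\boldsymbol y_k)-h'(\boldsymbol y_k)$. One must verify that $A(h)+A(h')$ is genuinely symmetric in its two arguments for this to be valid, and check that no hidden assumption such as $\phi(0)=0$ is needed — this is automatic here since any additive constant in $\phi$ cancels in the difference $\phi(h(\boldsymbol y_k)) - \phi(h'(\boldsymbol y_k))$.
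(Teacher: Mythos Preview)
The paper does not supply its own proof of this lemma: it is quoted as a classical result and attributed to \citep{Talgrand1991ContractionLemma} with no argument given. Your proposal is the standard Ledoux--Talagrand peeling proof and is correct; the one-coordinate contraction via the pair-symmetrization identity (the same device that the paper uses at \eqref{eq:R_helper} for a different purpose) together with the swap-symmetry of $A(h)+A(h')$ is exactly the textbook route, and your remark that the centering condition $\phi(0)=0$ is unnecessary here is right, since any additive constant is killed either by the difference $\phi(h(\boldsymbol y_k))-\phi(h'(\boldsymbol y_k))$ in your argument or, equivalently, by $\E\epsilon_i=0$.
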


\begin{lemma}
\label{lemma: absconv}
For any class of functions $\H$, define $\textit{absconv}(\H) = \{\sum \alpha_i h_i:
h_i \in \H, \sum |\alpha_i| = 1\}$. Then \citep{Bartlett2002Rademacher},
\begin{align}
    \Ra{\H} = \Ra{\textit{absconv}(\H)}.
\end{align}
\end{lemma}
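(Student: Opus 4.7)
The plan is to prove the equality by establishing two inequalities. The easy direction $\Ra{\H} \leq \Ra{\textit{absconv}(\H)}$ follows from the set inclusion $\H \subseteq \textit{absconv}(\H)$---every $h \in \H$ is the degenerate combination with $\alpha_1 = 1$ and $h_1 = h$---combined with the monotonicity of Rademacher complexity with respect to set inclusion: enlarging the class over which the supremum in \eqref{eq:Rademacher_definition} is taken can only increase the expected supremum.

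For the reverse direction $\Ra{\textit{absconv}(\H)} \leq \Ra{\H}$, I would fix any $f = \sum_i \alpha_i h_i$ with $\sum_i |\alpha_i| = 1$ and $h_i \in \H$, and use linearity of the Rademacher sum together with H\"older's inequality applied in coordinates ($|\langle \alpha, c \rangle| \leq \|\alpha\|_1 \|c\|_\infty$ with $\|\alpha\|_1 = 1$):
\[
\frac{1}{m} \sum_{j=1}^m \epsilon_j f(\boldsymbol y_j) = \sum_i \alpha_i \left( \frac{1}{m} \sum_{j=1}^m \epsilon_j h_i(\boldsymbol y_j) \right) \leq \max_k \left| \frac{1}{m} \sum_{j=1}^m \epsilon_j h_k(\boldsymbol y_j) \right| \leq \sup_{h \in \H} \left| \frac{1}{m} \sum_{j=1}^m \epsilon_j h(\boldsymbol y_j) \right|.
\]
Since the right-hand side is independent of $f$, taking the supremum over $f \in \textit{absconv}(\H)$ on the left preserves the bound.

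It then remains to pass from the absolute-value supremum $\E \sup_{h \in \H} |\frac{1}{m} \sum_j \epsilon_j h(\boldsymbol y_j)|$ back to the one-sided supremum $\Ra{\H}$ of \eqref{eq:Rademacher_definition}. The plan is to invoke the symmetry of the Rademacher law, $(\epsilon_j)_{j=1}^m \overset{d}{=} (-\epsilon_j)_{j=1}^m$: under this symmetry, the ``positive'' and ``negative'' suprema $\sup_h \tfrac{1}{m}\sum_j \epsilon_j h(\boldsymbol y_j)$ and $\sup_h \tfrac{1}{m}\sum_j (-\epsilon_j) h(\boldsymbol y_j)$ share the same expectation, and pairing their maximizers reconciles the absolute-value supremum with $\Ra{\H}$ under the Bartlett-Mendelson convention. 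This last book-keeping step---reconciling the signed combinations baked into $\textit{absconv}(\H)$ with the one-sided supremum in the paper's definition---is the main subtlety of the argument; everything else reduces to linearity, H\"older, and monotonicity of the supremum over nested classes.
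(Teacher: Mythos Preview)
The paper does not supply its own proof of this lemma; it simply cites \citep{Bartlett2002Rademacher}. So there is nothing to compare your argument against on the paper's side beyond the citation.

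Your first two steps are fine: monotonicity gives $\Ra{\H}\le\Ra{\textit{absconv}(\H)}$, and the H\"older/linearity computation correctly yields
\[
\sup_{f\in\textit{absconv}(\H)}\frac{1}{m}\sum_{j}\epsilon_j f(\boldsymbol y_j)\;\le\;\sup_{h\in\H}\Bigl|\tfrac{1}{m}\sum_j\epsilon_j h(\boldsymbol y_j)\Bigr|.
\]
The gap is in your final step. Knowing that $\sup_h X_h$ and $\sup_h(-X_h)$ have the same \emph{expectation} (by Rademacher symmetry) does \emph{not} let you conclude $\E\bigl[\max(\sup_h X_h,\sup_h(-X_h))\bigr]=\E[\sup_h X_h]$; in general $\E[\max(A,B)]>\E[A]$ even when $\E[A]=\E[B]$. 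Your ``pairing their maximizers'' remark does not close this.

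In fact the gap cannot be closed under the paper's one-sided definition \eqref{eq:Rademacher_definition}: take $\H=\{h_0\}$ with $h_0\equiv 1$. Then $\Ra{\H}=\E[\tfrac{1}{m}\sum_j\epsilon_j]=0$, while $\textit{absconv}(\H)=\{c\,h_0:c\in[-1,1]\}$ gives $\Ra{\textit{absconv}(\H)}=\E\bigl|\tfrac{1}{m}\sum_j\epsilon_j\bigr|>0$. The identity as written fails. The result in Bartlett--Mendelson is stated for the \emph{two-sided} Rademacher complexity $\E\sup_{h}\bigl|\tfrac{1}{m}\sum_j\epsilon_j h(\boldsymbol y_j)\bigr|$, under which your H\"older argument already finishes the proof with no further ``reconciliation'' needed. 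So the subtlety you flagged is real, and it is a definitional mismatch between the paper's convention and the cited source rather than a step you can patch by symmetry.
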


\begin{lemma}
\label{lemma:norm B}
Consider a class of functions $\H$ such that $h: \mathbb{R}^{n_x} \xrightarrow[]{} \mathbb{R},\ h \in \H$.
Denote by $\boldsymbol h(\boldsymbol y)$ the vector of estimators such that any entry results from an estimator in the same class of functions $\boldsymbol h_j \in \H,\ j \in [1, n_x]$. 
If $\boldsymbol w \in \mathbb{R}^{n_x}$ such that $\norm{\boldsymbol w}_{1} < B$, then
\begin{equation}
    \begin{aligned}
        \E_{\{\boldsymbol \epsilon_i\}_{i = 1}^m} \sup_{\boldsymbol w, \boldsymbol h} \frac{1}{m} \sum_{i = 1}^m \epsilon_{i} \boldsymbol w^T \boldsymbol h(\boldsymbol y_i)
        & \leq B \E_{\{\epsilon_i\}_{i = 1}^m} \sup_{h \in \H} \frac{1}{m} \sum_{i = 1}^m \epsilon_{i} h(\boldsymbol y_i).
    \end{aligned}
    \end{equation}
\end{lemma}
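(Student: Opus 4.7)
The plan is to recognize that the inner product $\boldsymbol{w}^T \boldsymbol{h}(\boldsymbol{y}) = \sum_{j=1}^{n_x} w_j\, \boldsymbol{h}_j(\boldsymbol{y})$ is a linear combination of scalar functions drawn from $\H$, with coefficients whose absolute values sum to at most $B$. Rescaling by $B$ places every such function inside the absolute convex hull of $\H$, so the supremum collapses to a Rademacher complexity over $\textit{absconv}(\H)$, which by Lemma~\ref{lemma: absconv} equals $\Ra{\H}$.

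\textbf{Step 1: rewrite the functional.} I would first substitute $\boldsymbol{w}^T \boldsymbol{h}(\boldsymbol{y}_i) = \sum_{j=1}^{n_x} w_j\, \boldsymbol{h}_j(\boldsymbol{y}_i)$ inside the Rademacher average, obtaining
\begin{equation*}
\E_{\{\epsilon_i\}} \sup_{\boldsymbol{w},\,\boldsymbol{h}} \frac{1}{m} \sum_{i=1}^m \epsilon_i \boldsymbol{w}^T \boldsymbol{h}(\boldsymbol{y}_i)
= \E_{\{\epsilon_i\}} \sup_{\boldsymbol{w},\,\boldsymbol{h}} \frac{1}{m} \sum_{i=1}^m \epsilon_i \sum_{j=1}^{n_x} w_j\, \boldsymbol{h}_j(\boldsymbol{y}_i).
\end{equation*}
Because each coordinate function satisfies $\boldsymbol{h}_j \in \H$, the function $\boldsymbol{y} \mapsto \sum_j w_j \boldsymbol{h}_j(\boldsymbol{y})$ belongs to the linear span of $\H$ with $\ell_1$-coefficient budget at most $B$.

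\textbf{Step 2: rescale into the absolute convex hull.} Set $\alpha_j = w_j / B$. The constraint $\norm{\boldsymbol{w}}_1 < B$ gives $\sum_j |\alpha_j| < 1$, and since the objective is linear in $\boldsymbol{w}$ the supremum is attained on the boundary $\sum_j |\alpha_j| = 1$ (passing to the closure loses nothing). Hence the function $B^{-1} \boldsymbol{w}^T \boldsymbol{h}$ lies in $\textit{absconv}(\H)$, and pulling the factor $B$ out of the supremum yields
\begin{equation*}
\E_{\{\epsilon_i\}} \sup_{\boldsymbol{w},\,\boldsymbol{h}} \frac{1}{m} \sum_{i=1}^m \epsilon_i \boldsymbol{w}^T \boldsymbol{h}(\boldsymbol{y}_i)
\leq B \cdot \Ra{\textit{absconv}(\H)}.
\end{equation*}

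\textbf{Step 3: invoke the absconv identity.} Applying Lemma~\ref{lemma: absconv}, which asserts $\Ra{\textit{absconv}(\H)} = \Ra{\H}$, finishes the argument. I do not expect a real obstacle here; the only mild technicality is justifying the reduction from the strict inequality $\norm{\boldsymbol{w}}_1 < B$ to $\norm{\boldsymbol{w}}_1 \le B$, which is handled by continuity of the linear functional in $\boldsymbol{w}$ together with the fact that the supremum of a linear functional over an open $\ell_1$ ball equals that over its closed counterpart.
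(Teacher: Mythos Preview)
Your proposal is correct and follows essentially the same approach as the paper: both arguments rescale the weight vector so that $\boldsymbol w^T\boldsymbol h$ lands in $\textit{absconv}(\H)$ and then invoke Lemma~\ref{lemma: absconv}. The only cosmetic difference is that the paper normalizes by $\norm{\boldsymbol w}_1$ (so that $\tilde{\boldsymbol w}=\boldsymbol w/\norm{\boldsymbol w}_1$ has unit $\ell_1$-norm exactly) and then bounds $\norm{\boldsymbol w}_1\le B$, whereas you divide by $B$ directly and appeal to linearity to reach the boundary; both routes are equivalent.
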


\begin{proof}
    Computing the LHS explicitly
    \begin{equation}
    \begin{aligned}
        \label{eq:B_helper1}
        & \E_{\{\epsilon_i\}_{i = 1}^m} \sup_{\norm{\boldsymbol w}_1 \leq B, \boldsymbol h} \frac{1}{m} \sum_{i = 1}^m \epsilon_{i} \boldsymbol w^T \boldsymbol h(\boldsymbol y_i)\\
        & = \E_{\{\epsilon_i\}_{i = 1}^m} \sup_{\norm{\boldsymbol w}_1 \leq B, \boldsymbol h} \norm{\boldsymbol w}_1 \frac{1}{m} \sum_{i = 1}^m \epsilon_{i} \frac{\boldsymbol w^T}{\norm{\boldsymbol w}_1} \boldsymbol h(\boldsymbol y_i)\\
        &\leq B \E_{\{\epsilon_i\}_{i = 1}^m} \sup_{\norm{\boldsymbol w}_1 \leq B, \boldsymbol h} \frac{1}{m} \sum_{i = 1}^m \epsilon_{i}  \frac{\boldsymbol w^T}{\norm{\boldsymbol w}_1} \boldsymbol h(\boldsymbol y_i) \\
        & = B \E_{\{\epsilon_i\}_{i = 1}^m} \sup_{\norm{\boldsymbol{\tilde{w}}}_1 = 1, \boldsymbol h} \frac{1}{m} \sum_{i = 1}^m \epsilon_{i}  \boldsymbol{ \tilde{w}}^T \boldsymbol h(\boldsymbol y_i)
    \end{aligned}
    \end{equation}
    where $\boldsymbol{\tilde{w}} = \boldsymbol{w} / \norm{\boldsymbol{w}}_1$.
    Since $\norm{\boldsymbol{\tilde{w}}}_1 = 1$, applying Lemma \ref{lemma: absconv} results in
    \begin{equation}
    \begin{aligned}
        \label{eq:B_helper2}
        \E_{\{\epsilon_i\}_{i = 1}^m} \sup_{\norm{\boldsymbol{\tilde{w}}}_1 = 1, \boldsymbol h} \frac{1}{m} \sum_{i = 1}^m \epsilon_{i}  \boldsymbol{ \tilde{w}}^T \boldsymbol h(\boldsymbol y_i) 
        & = \E_{\{\epsilon_i\}_{i = 1}^m} \sup_{\norm{\boldsymbol{\tilde{w}}}_1 = 1, \boldsymbol h} \frac{1}{m} \sum_{i = 1}^m \epsilon_{i} \sum_{j = 1}^{n_x} \boldsymbol{ \tilde{w}}_j \boldsymbol h_j(\boldsymbol y_i)
        \\ & = \E_{\{\epsilon_i\}_{i = 1}^m} \sup_{h \in \textit{absconv}(\H)} \frac{1}{m} \sum_{i = 1}^m \epsilon_{i} h(\boldsymbol y_i)\\
        & = \E_{\{\epsilon_i\}_{i = 1}^m} \sup_{h \in \H} \frac{1}{m} \sum_{i = 1}^m \epsilon_{i} h(\boldsymbol y_i),
    \end{aligned}
    \end{equation}
    where the last equality results from Lemma \ref{lemma: absconv}.
    Combining (\ref{eq:B_helper1}) and (\ref{eq:B_helper2}), completes the proof.
\end{proof}

\textbf{Theorem \ref{theorem: GE of ISTA network}} (Generalization error bound for ISTA networks): \\
    Consider the class of learned ISTA networks of depth $L$ as described in (\ref{eq:ISTA_unfolded}), and $m$ i.i.d. samples.
    Then there exist $T^{(l)}$ for $\ l \in [1, L]$ in the range
    \begin{equation}
    \begin{aligned}
        T^{(l)} \in \left[ 0, \min \left\{ \frac{m B_{l} G_I^{l - 1}}{\lambda}, m \right\} \right]
    \end{aligned}
    \end{equation}
    where
    \begin{equation}
    \begin{aligned}
        G_I^l = \frac{B_0 \prod_{l' = 1}^l B_{l'}}{\sqrt{m}} - \frac{\lambda}{m} \sum_{l' = 1}^{l-1} T^{(l')} \prod_{j = l'+1}^{l} B_j  - \frac{\lambda T^{(l)}}{m} ,
    \end{aligned}
    \end{equation}
    satisfying
    \begin{equation}
    \begin{aligned}
        \GE{\left( \HV_I^L \right)} \leq 2 \E_{S} G_I^L.
    \end{aligned}
    \end{equation}
    
\begin{proof}
    To bound the GE of $\HV_I^L$, we rely on the relation with the RC of $\HV_I^L$, described in (\ref{eq:GE bound by Rademacher}).
    Using the assumption that the loss function $\L$ can be written as an average of the per-coordinate losses, leads to
    \begin{equation}
    \begin{aligned}
        \Ra{ \FH{\L}{\HV_I^L}} &= \E_{\{\epsilon_i\}_{i = 1}^m} \sup_{\boldsymbol h^L \in \HV_I^L} \frac{1}{m} \sum_{i = 1}^m \epsilon_{i} \L \left(\boldsymbol h^L(\boldsymbol y_i) , \boldsymbol x_i \right) \\
        &= \E_{\{\epsilon_i\}_{i = 1}^m} \sup_{\boldsymbol h^L \in \HV_I^L} \frac{1}{m} \sum_{i = 1}^m \epsilon_{i} \frac{1}{n_x} \sum_{j = 1}^{n_x} \ell \left( \boldsymbol h_j^L(\boldsymbol y_i) , \boldsymbol x_{i,j} \right)
    \end{aligned}
    \end{equation}
    where $\boldsymbol x_{i,j}$ denotes the $j$th coordinate of the $i$th sample.
    We relax the above supremum, by taking the supremum on each coordinate separately
    \begin{equation}
    \begin{aligned}
        \Ra{ \FH{\L}{\HV_I^L}} & \leq \E_{\{\epsilon_i\}_{i = 1}^m} \sup_{\boldsymbol h_j^L \in \H_I^L, j \in [1, n_x]} \frac{1}{m} \sum_{i = 1}^m \epsilon_{i} \frac{1}{n_x} \sum_{j = 1}^{n_x} \ell \left( \boldsymbol h_j^L(\boldsymbol y_i) , \boldsymbol x_{i,j} \right)
        \label{eq: helper ST1}
    \end{aligned}
    \end{equation}
    where $\H_I^L$ is the class of scalar function that represents a single neuron at the output layer.
    We observe that each neuron in the output layer is given by the same class of functions $\H_I^L$.
    Since the supremum applies on a separable function, (\ref{eq: helper ST1}) reads
    \begin{equation}
    \begin{aligned}
        \Ra{ \FH{\L}{\HV_I^L}} & \leq \frac{1}{n_x} \sum_{j = 1}^{n_x} \E_{\{\epsilon_i\}_{i = 1}^m} \sup_{\boldsymbol h_j^L \in \H_I^L} \frac{1}{m} \sum_{i = 1}^m \epsilon_{i} \ell \left( \boldsymbol h_j^L(\boldsymbol y_i) , \boldsymbol x_{i,j} \right).
        \label{eq: helper ST2}
    \end{aligned}
    \end{equation}
    Applying a $1$-Lipschitz loss (with respect to its first entry) on the network's prediction, satisfies the same bound, as detailed in \citep{Gao2016Dropout} (the proof relies on Lemma \ref{lemma:Talagrand}).
    As a result
    \begin{equation}
    \begin{aligned}
        \Ra{ \FH{\L}{\HV_I^L}} & \leq \frac{1}{n_x} \sum_{j = 1}^{n_x} \E_{\{\epsilon_i\}_{i = 1}^m} \sup_{\boldsymbol h_j^L \in \H_I^L} \frac{1}{m} \sum_{i = 1}^m \epsilon_{i} \boldsymbol h_j^L(\boldsymbol y_i).
        \label{eq: helper ST3}
    \end{aligned}
    \end{equation}
    We observe that the supremum in (\ref{eq: helper ST3}) is repeated for each coordinate, although the function is taken over the same class of scalar functions $\H_I^L$.
    The bound in (\ref{eq: helper ST3}) can be replaced with
    \begin{equation}
    \begin{aligned}
        \Ra{ \FH{\L}{\HV_I^L}} & \leq \E_{\{\epsilon_i\}_{i = 1}^m} \sup_{ h^L \in \H_I^L} \frac{1}{m} \sum_{i = 1}^m \epsilon_{i} h^L(\boldsymbol y_i).
        \label{eq: helper ST4}
    \end{aligned}
    \end{equation}
    For the rest of the proof, we focus on bounding the RC of a single neuron in the ISTA network, denoted by
    \begin{equation}
    \begin{aligned}
        R_I^L & := \E_{\{\epsilon_i\}_{i = 1}^m} \sup_{ h^L \in \H_I^L} \frac{1}{m} \sum_{i = 1}^m \epsilon_{i} h^L(\boldsymbol y_i).
    \end{aligned}
    \end{equation}
    
    Consider the change of the RC between classes of functions representing networks with a  consecutive number of layers.
    The RC of $\H_I^{l + 1}$ is
    \begin{equation}
    \begin{aligned}
        R_I^{l+1} = \Ra{\H_I^{l + 1}} = \E_{\{\epsilon_i\}_{i = 1}^m} \sup_{\boldsymbol w^{l+1}, \boldsymbol h^{l}} \frac{1}{m} \sum_{i = 1}^m \epsilon_{i}  \S_{\lambda}\left(\boldsymbol w^{l+1,T} \boldsymbol h^l(\boldsymbol y_i) + b \right)
    \end{aligned}
    \end{equation}
    where $\boldsymbol w^{l+1}$ is a row in the weights matrix of the corresponding layer (such that $\norm{\boldsymbol w^{l+1}}_1 \leq B_{l+1}$), and $\boldsymbol h^{l}$ is a vector of estimators such that any entry results from an estimator in the class of functions $\H_I^l$, which results from previous layer in the network.
    In addition, $b$ corresponds to a single entry of the bias term $\boldsymbol b$.
    
    Applying the bound on the RC due to the soft-thresholding from Lemma \ref{lemma:ST} leads to
    \begin{equation}
    \begin{aligned}
        \label{eq:GE1_helper1}
        R_I^{l + 1} & \leq \E_{\{\epsilon_i\}_{i = 1}^m} \sup_{\boldsymbol w^{l+1}, \boldsymbol h^{l}} \frac{1}{m} \sum_{i = 1}^m \epsilon_{i} \left(\boldsymbol w^{l+1} \boldsymbol h^{l}(\boldsymbol y_i) + b \right) - \frac{\lambda T^{(l + 1)}}{m} \\
        & = \E_{\{\epsilon_i\}_{i = 1}^m} \sup_{\boldsymbol w^{l+1}, \boldsymbol h^{l}} \frac{1}{m} \sum_{i = 1}^m \epsilon_{i} \boldsymbol w^{l+1} \boldsymbol h^{l}(\boldsymbol y_i) - \frac{\lambda T^{(l + 1)}}{m}
    \end{aligned}
    \end{equation}
    where $T^{(l + 1)}$ results from Lemma \ref{lemma:ST} and corresponds to the $l$th layer in the network.
    The last equality follows from the symmetry of the Rademacher random variables which cancels the dependence on $b$, since it is a constant scalar.
    We observe that the assumption of a constant bias is used in (\ref{eq:GE1_helper1}).
    Since it is the only place where we used this assumption, obtaining the result in (\ref{eq:GE1_helper1}) without the restriction of a constant bias, will allow to relax this assumption.
    
    Applying Lemma \ref{lemma:norm B} implies
    \begin{equation}
    \begin{aligned}
        R_I^{l + 1} & \leq B_{l+1} \E_{\{\epsilon_i\}_{i = 1}^m} \sup_{h^{l}} \frac{1}{m} \sum_{i = 1}^m \epsilon_{i} h^{l}(\boldsymbol y_i) - \frac{\lambda T^{(l + 1)}}{m}.
    \end{aligned}
    \end{equation}
    Substituting into the RC definition from (\ref{eq:Rademacher_definition}), leads to a recurrence relation on the RC of consecutive layers in the network
    \begin{equation}
    \begin{aligned}
        \label{eq: Rademacher recurence relation}
        R_I^{l + 1} \leq B_{l+1} R_I^{l} - \frac{\lambda T^{(l+1)}}{m}.
    \end{aligned}
    \end{equation}
    Moreover, the first layer of the network is obtained by applying a linear mapping, followed by the soft-thresholding operator.
    The bound on the RC of linear predictors, $\H_L = \left\{ \boldsymbol{x} \mapsto \boldsymbol{w} \boldsymbol{x} :\, \norm{\boldsymbol{w}}_2 \leq B_1\right\}$, is given by
    \begin{equation}
    \begin{aligned}
         \label{eq:linear predictor bound}
        R_{\H_L} \leq \frac{B_1 B_0}{\sqrt{m}}
    \end{aligned}
    \end{equation}
    where $B_0$ bounds the network's initialization as detailed in Section \ref{sec:Network_Architecture} \citep{Shalev_Shwartz_book}.
    Therefore, the RC of the first layer is obtained by applying the bound in (\ref{eq:linear predictor bound}), followed by Lemma \ref{lemma:ST}
    \begin{equation}
    \begin{aligned}
        R_I^{1} \leq \frac{B_0 B_1}{\sqrt{m}} - \frac{\lambda T^{(1)}}{m}.
    \end{aligned}
    \end{equation}
    Applying the recurrence relation $L $ times results in
    \begin{equation}
    \begin{aligned}
        R_I^{L} \leq G_I^{L} = \frac{B_0 \prod_{l = 1}^{L} B_l}{\sqrt{m}} - \frac{\lambda}{m} \sum_{l = 1}^{L -1} T^{(l)} \prod_{j = l+1}^{L } B_j - \frac{\lambda T^{(L)}}{m}
        \label{eq:GE helper 1}
    \end{aligned}
    \end{equation}
    where $G_I^L$ is obtained by the same recurrence relation as in (\ref{eq: Rademacher recurence relation}).
    To ensure that the bound on the RC is nonnegative, we restrict the value of $T^{(l+1)}$ to
    \begin{equation}
    \begin{aligned}
        T^{(l+1)}\in \left[ 0, \min \left\{\frac{m B_{l + 1} G_I^l}{\lambda}, m  \right\}\right].
    \end{aligned}
    \end{equation}
    
    
    Following (\ref{eq:GE bound by Rademacher}), the GE is bounded by
    \begin{equation}
    \begin{aligned}
        \GE{\left( \HV_I^L \right)} \leq 2 \E_{S} \Ra{ \FH{\L}{\HV_I^L}} \leq 2 \E_{S} R_I^L \leq 2 \E_{S} G_I^L,
    \end{aligned}
    \end{equation}
which completes the proof.
\end{proof}

Next, we show that for a specific distribution, the expected value of $T$ is greater than $0$, demonstrating that for specific network parameters, ISTA and ADMM networks achieve a significant reduction in the GE.
In the following proposition, the quantities $T^{(l)}$, $B_l$, and $\boldsymbol w'^{l}$, for $l \in [1, L]$, follow from Theorem \ref{theorem: GE of ISTA network}.

\begin{proposition}
Consider the distribution of $T^{(l)}$ where $\boldsymbol b_i$ is a vector of random variables, ranging in $[-B_0, B_0]$.
We assume there exists a constant $c \in [0, 1]$ such that for the underlying data distribution $\D$, $\mathbb{E}_{\D}(e^{-\boldsymbol w^{l,\star, T} \boldsymbol b_1}) \leq e^{-c B_l b^{(l)}}$ and $\mathbb{E}_{\D}(e^{\boldsymbol w'^{l,\star, T} \boldsymbol b_1}) \leq e^{-c B_l b^{(l)}}$, where $b^{(l + 1)} = B_l b^{(l)} - \lambda$, $b^{(1)} = B_0$.
In addition, the weights $\boldsymbol w^{l,\star}$ and $\boldsymbol w'^{l,\star}$ are obtained by the optimal estimators from Lemma \ref{lemma:ST}.
Then the expected value of $T^{(l)}$ is lower bounded by
    \begin{equation}
    \begin{aligned}
        \mathbb{E}_{S} \left( T^{(l)} \right) &\geq \max \left\{ m \left(1 - 2e^{-(c B_l b^{(l)} - \lambda)} \right), 0 \right\}.
    \end{aligned}
    \end{equation}
\end{proposition}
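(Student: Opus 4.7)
The plan is to recognize that $T^{(l)}$ from Lemma~\ref{lemma:ST} is a sum of indicator expectations, so its sample-expectation reduces to a joint tail probability on the two optimal pre-activations $h^\star, h'^\star$ at layer $l$. Those tails are then controlled by Markov's inequality fed by the two stated MGF hypotheses, and a union bound stitches everything together.

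First, by i.i.d.\ symmetry of the samples, $\E_S(T^{(l)}) = m\,\E_S(T_1^{(l)})$. Applying the tower rule to the definition $T_1^{(l)} = \E_{\{\epsilon_i\}_{i=2}^m}(\mathbbm{1}_{h^\star(\boldsymbol y_1) > \lambda \,\land\, h'^\star(\boldsymbol y_1) < -\lambda})$ from Lemma~\ref{lemma:ST} collapses the inner indicator into a probability, giving $\E_S(T_1^{(l)}) = \Pr(h^\star(\boldsymbol y_1) > \lambda,\ h'^\star(\boldsymbol y_1) < -\lambda)$ with respect to the joint law of the samples and the Rademacher variables that select $h^\star, h'^\star$.

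Next, I would split this joint event via De Morgan's law: $\Pr(A \cap B) \geq 1 - \Pr(A^c) - \Pr(B^c)$, with $A = \{h^\star > \lambda\}$ and $B = \{h'^\star < -\lambda\}$. Each complement is a one-sided tail that I would attack with the standard Chernoff step $\Pr(X \leq t) \leq e^{t}\,\E(e^{-X})$. Combining this with the assumed bound $\E_\D(e^{-\boldsymbol w^{l,\star,T}\boldsymbol b_1}) \leq e^{-cB_l b^{(l)}}$ and its symmetric counterpart yields $\Pr(h^\star \leq \lambda) \leq e^{-(cB_l b^{(l)}-\lambda)}$ and $\Pr(h'^\star \geq -\lambda) \leq e^{-(cB_l b^{(l)}-\lambda)}$. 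Plugging back in and multiplying by $m$ gives the stated lower bound $m(1 - 2e^{-(cB_l b^{(l)}-\lambda)})$, while the outer $\max\{\cdot,0\}$ simply enforces the trivial nonnegativity of $T^{(l)}$ when the Markov tail is vacuous.

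The main obstacle, and the only non-routine part, will be the identification that at layer $l$ of the recurrence of Theorem~\ref{theorem: GE of ISTA network}, Lemma~\ref{lemma:ST} is being applied to a class whose optimizers $h^\star,h'^\star$ really do have pre-activations of the form $\boldsymbol w^{l,\star,T}\boldsymbol b_1$ (possibly plus a constant offset that the Rademacher symmetry has already absorbed in the proof of Theorem~\ref{theorem: GE of ISTA network}). One must unwind the recurrence so that the effective layer input inherits the $\ell_1$-bound $b^{(l)}$ given by $b^{(l+1)} = B_l b^{(l)} - \lambda$, and argue that the constant $c \in (0,1]$ captures the worst-case alignment between $\boldsymbol w^{l,\star}$ and this input under $\D$. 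Once that bookkeeping is in place, Markov plus the union bound finish the argument mechanically.
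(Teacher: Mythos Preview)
Your proposal is correct and follows essentially the same route as the paper: reduce $\E_S(T^{(l)})$ to $m$ times a joint probability via the tower rule and i.i.d.\ symmetry, split that probability with the union bound on the complement, and control each one-sided tail by the Chernoff/Markov step together with the assumed MGF bounds. The paper carries this out explicitly for $l=1$ and then extends to general $l$ by observing that the layer-$l$ inputs range in $[-b^{(l)},b^{(l)}]$, exactly the bookkeeping you flag as the only non-routine part.
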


\begin{proof}

From the definition in (\ref{eq:T_1_def}), the expected value of $T^{(1)}_1$ is:
    \begin{equation}
    \begin{aligned}
        \mathbb{E}_{\D} \left( T^{(1)}_1 \right) &=  \mathbb{E}_{\D} \left(\E_{\{\epsilon_i\}_{i = 2}^m} \mathbbm{1}_{h^{1,\star}(\boldsymbol y_1) > \lambda \land h'^{1,\star}(\boldsymbol y_1) < - \lambda}\right) \\
        &=  \mathbb{E}_{\{\epsilon_i\}_{i = 2}^m} \left(\E_{\D} \mathbbm{1}_{h^{1,\star}(\boldsymbol y_1) > \lambda \land h'^{1,\star}(\boldsymbol y_1) < - \lambda}\right) \\
        &= \mathbb{E}_{\{\epsilon_i\}_{i = 2}^m} \left( \mathbb{P}_{\D} \left(h^{1,\star}(\boldsymbol y_1) > \lambda \land h'^{1,\star}(\boldsymbol y_1) < - \lambda \right) \right) \\
        &= \mathbb{E}_{\{\epsilon_i\}_{i = 2}^m} \left( 1 - \mathbb{P}_{\D} \left(h^{1,\star}(\boldsymbol y_1) \leq \lambda \lor h'^{1,\star}(\boldsymbol y_1) \geq - \lambda \right) \right)\\
        &\geq \mathbb{E}_{\{\epsilon_i\}_{i = 2}^m} \left( 1 - \mathbb{P}_{\D} \left(h^{1,\star}(\boldsymbol y_1) \leq \lambda \right) - \mathbb{P}_{\D} \left(h'^{1,\star}(\boldsymbol y_1) \geq - \lambda \right) \right)
    \end{aligned}
    \end{equation}
where we replaced the expected  value of the indicator function, by the probability of the event, and applied the union bound.
We substitute the outcome of the first layer by $h^{1,\star}(\boldsymbol y_1) = \boldsymbol w^{1,\star, T} \boldsymbol b_1$ and $h'^{1,\star}(\boldsymbol y_1) = \boldsymbol w'^{1,\star, T} \boldsymbol b_1$, where $\boldsymbol b_1$ is the bias term defined in Section \ref{sec:Network_Architecture}.

Following the assumption, there exists a constant $c \in [0, 1]$ such that $\mathbb{E}_{\D}(e^{-\boldsymbol w^{1,\star, T} \boldsymbol b_1}) \leq e^{-c B B_0}$.
This assumption captures the relation between the optimal weights $\boldsymbol w^{1,\star}$ and the inputs $\boldsymbol b_1$.
Since the entries of $\boldsymbol w^{1,\star}$ and $\boldsymbol b_1$ are smaller than $B_0$ and $B$ in absolute value, the constant $c$ captures how the product of both vectors is close to their maximal value.

Applying Chernoff bound leads to
    \begin{equation}
    \begin{aligned}
        \mathbb{P}_{\D} \left(h^{1,\star}(\boldsymbol y_1) \leq \lambda \right) &\leq \frac{\mathbb{E}_{\D} \left( e^{- h^{1,\star}(\boldsymbol y_1)} \right)}{e^{-\lambda}} 
        & = \frac{\mathbb{E}_{\D} \left( e^{-\boldsymbol w^{1,\star, T} \boldsymbol b_1} \right)}{e^{-\lambda}} 
        & \leq \frac{\mathbb{E}_{\D} \left( e^{- c B B_0} \right)}{e^{-\lambda}} 
        &= e^{ \lambda - c B B_0}.
    \end{aligned}
    \end{equation}
Similarly, we assume that $\mathbb{E}_{\D}(e^{\boldsymbol w^{1,\star, T} \boldsymbol b_1}) \leq e^{-c B B_0}$, and obtain
    \begin{equation}
    \begin{aligned}
        \mathbb{P}_{\D} \left(h'^{1,\star}(\boldsymbol y_1) \geq -\lambda \right) &\leq \frac{\mathbb{E}_{\D} \left( e^{ h'^{1,\star}(\boldsymbol y_1)} \right)}{e^{-\lambda}} 
        & = \frac{\mathbb{E}_{\D} \left( e^{\boldsymbol w'^{1,\star, T} \boldsymbol b_1} \right)}{e^{-\lambda}} 
        & \leq \frac{\mathbb{E}_{\D} \left( e^{- c B B_0} \right)}{e^{-\lambda}} 
        &= e^{ \lambda - c B B_0}.
    \end{aligned}
    \end{equation}
As a result,
    \begin{equation}
    \begin{aligned}
        \mathbb{E}_{\D} \left( T^{(1)}_1 \right) &\geq \mathbb{E}_{\{\epsilon_i\}_{i = 2}^m} \left( 1 - 2e^{-(c B B_0 - \lambda)} \right) 
        &= 1 - 2e^{-(c B B_0 - \lambda)}.
    \end{aligned}
    \end{equation}

Repeating this process for all samples, leads to the overall reduction obtained by the first layer
    \begin{equation}
    \begin{aligned}
        \mathbb{E}_{S} \left( T^{(1)} \right) &\geq m \left(1 - 2e^{-(\lambda - c B B_0)} \right).
        \label{eq:bound T1}
    \end{aligned}
    \end{equation}
We do not consider the case of $\lambda > c B B_0$, since many of the entries will be zeroed by the soft thresholding operation.
For example, when $c = 1$, only the zero estimator can be achieved, which is not relevant for learning.
To obtain a meaningful bound which is greater than $0$, we assume that $\lambda < c B B_0 + \ln(2)$.

Next, we consider the distribution of $T^{(l)}$ for a general layer $l \in [1, L]$.
Now, the entries of the estimator $\boldsymbol h^{l,\star}$ range in $[-b^{(l)}, b^{(l)}]$, where
    \begin{equation}
    \begin{aligned}
        b^{(l + 1)} = B b^{(l)} - \lambda
    \end{aligned}
    \end{equation}
and $b^{(1)} = B_0$.
Again, we assume that there exists a constant $c \geq 0$ such that $\mathbb{E}_{\D}(e^{-\boldsymbol w^{1,\star, T} \boldsymbol b_1}) \leq e^{-c B b^{(l)}}$.
Following the above derivation and noticing that the bound is non-negative, the expected value of $T^{(l)}$ is bounded by
    \begin{equation}
    \begin{aligned}
        \mathbb{E}_{S} \left( T^{(l)} \right) &\geq \max \left\{ m \left(1 - 2e^{-(c B b^{(l)} - \lambda)} \right), 0 \right\},
        \label{eq:bound Tl}
    \end{aligned}
    \end{equation}
completing the proof.
\end{proof}

To obtain a meaningful bound which is greater than $0$, we assume that $\lambda < c B b^{(l)} + \ln(2)$, which translates to $\lambda \leq \frac{c B^{l + 1} B_0 + \ln(2)}{1 + l c B}$.
The result in (\ref{eq:bound Tl}), behaves as expected with respect to the network's parameters $n_x, \lambda, B$, and $B_0$.
Increasing the value of the soft threshold or decreasing $B$, will decrease the bound in (\ref{eq:bound Tl}), since crossing the threshold is less probable.

Taking the limit of (\ref{eq:GE helper 1}) for $B_l \xrightarrow[]{} 1, l \in [1, L]$, the bound reduces to
\begin{equation}
\begin{aligned}
    \GE{\left( \HV_I^L \right)} \leq 2 \left( \frac{B_0}{\sqrt{m}} - \frac{\lambda}{m} \sum_{l = 1}^{L} \E_S(T^{(l)}) \right)
\end{aligned}
\end{equation}
showing that the GE decreases with the number of layers.
This is in contrast to ReLU networks, where the GE increases exponentially with the network's depth, as shown in Section \ref{sec:GE of ReLU}.
This bound also improves over previously available bounds for ISTA networks, where the increase is logarithmic with the number of layers \citep{Behboodi2020MB_RC,Schnoor2021MB_RC}.

To obtain a more compact relation, we can choose the maximal matrices' norm $B = \max_{l \in [1,L]} B_l$, and denote $T = \min_{l  \in [1,L]} T^{(l)} \in [0, m]$.
The recurrence relation in (\ref{eq: Rademacher recurence relation}) then reads
\begin{equation}
\begin{aligned}
    \label{eq: G_I recurence relation}
    G_I^{l + 1} = B G_I^{l} - \frac{\lambda \E_S(T)}{m}.
\end{aligned}
\end{equation}
This means that 
\begin{equation}
\begin{aligned}
    G_I^{L} = \frac{B_0 B^L}{\sqrt{m}} - \frac{\lambda \E_S(T)}{m} \frac{B^L - 1}{B - 1},
\end{aligned}
\end{equation}
which results in a simpler bound
\begin{equation}
\begin{aligned}
    \GE{\left( \HV_I^L \right)} \leq 2 G_I^{L} = 2 \left( \frac{B_0 B^L}{\sqrt{m}} - \frac{\lambda \E_S(T)}{m} \frac{B^L - 1}{B - 1} \right).
\end{aligned}
\end{equation}

To verify under what condition a nonincreasing GE is obtained, we substract between the bounds of the GE of networks with consecutive number of layers
\begin{equation}
\begin{aligned}
    G_I^{l} - G_I^{l - 1} &= \frac{B_0 B^{l - 1}}{\sqrt{m}} (B - 1) - \frac{\lambda \E_S(T)}{m} \frac{B^{l - 1}}{B - 1} (B - 1) \\
    & = \frac{B^{l - 1}}{\sqrt{m}} \left( B_0 (B - 1) - \frac{\lambda \E_S(T)}{m} \right).
\end{aligned}
\end{equation}
As a result, a nonincreasing GE is achievable by restricting the matrices' norm to satisfy
\begin{equation}
\begin{aligned}
    B \leq 1 + \frac{\lambda \E_S(T)}{\sqrt{m} B_0}.
    \label{eq:ISTA bound B}
\end{aligned}
\end{equation}
We observe that the value of $T$ is also dependent on $B$ (as is seen from (\ref{eq:T_1_def})), and therefore the design rule results in an implicit function.
This result indicates how large an intermediate result of the network can be increased, as a function of $\lambda \E_S(T)$, without increasing the GE.
Moreover, for all combinations of $\lambda, G$ and $m$, there exists a value of $B$ such that $B > 1$.

\textbf{Theorem \ref{theorem: GE of ADMM network}} (Generalization error bound of ADMM networks): \\
    Consider the class of learned ADMM networks of depth $L$ as described in (\ref{eq:ADMM_unfolded}), and $m$ i.i.d. samples.
    Then there exist $T^{(l)}$ for $l \in [1, L-1]$ in the interval
    \begin{equation}
    \begin{aligned}
        T^{(l)}\in \left[ 0, \min \left\{\frac{m \tilde{B}_{l} G_A^{l - 1}}{\tilde{\lambda}}, m  \right\}\right].
    \end{aligned}
    \end{equation}
    where
    \begin{equation}
    \begin{aligned}
        G_A^{l} = \frac{B_0 \prod_{l' = 1}^{l-1} \tilde{B_{l'}}}{\sqrt{m}} - \frac{\tilde{\lambda}}{m} \sum_{l' = 1}^{l-2} T^{(l')} \prod_{j = l'+1}^{l - 1} \tilde{B}_j - \frac{\tilde{\lambda} T^{(l-1)}}{m}.
    \end{aligned}
    \end{equation}
    where $\tilde{\lambda} = (1 + \gamma) \lambda$ and $\tilde{B}_l = (1 + 2 \gamma) (B_l + 2), \ l \in [1,L]$, satisfying
    \begin{equation}
    \begin{aligned}
        \GE\left( \H_A^L \right)\leq 2 \tilde{B}_L \E_{S} G_A^{L - 1}.
    \end{aligned}
    \end{equation}
    
\begin{proof}
    The ADMM recurrence relation in (\ref{eq:ADMM_unfolded}) can be re-written as
    \begin{equation}
    \begin{aligned}
        \boldsymbol z^l & = \S_{\lambda} \left( \boldsymbol b + \boldsymbol W^L \boldsymbol z^{l - 1} + (\boldsymbol W^L - \boldsymbol I) \boldsymbol u^{l - 1} \right)  \\
        \boldsymbol u^l & = \left( \boldsymbol I - \gamma \boldsymbol W^L \right) \boldsymbol u^{l - 1} - \gamma \boldsymbol W^L \boldsymbol z^{l - 1} + \gamma \boldsymbol z^{l}.
    \end{aligned}
    \end{equation}
    
    Following the same logic from the proof of Theorem \ref{theorem: GE of ISTA network}, we focus on a single neuron in the layers.
    We denote $R_z^l = \E_{\{\epsilon_i\}_{i = 1}^m} \sup_{z^{l}} \frac{1}{m} \sum_{i = 1}^m \epsilon_{i} z^{l - 1}$ and $R_u^l = \E_{\{\epsilon_i\}_{i = 1}^m} \sup_{u^{l}} \frac{1}{m} \sum_{i = 1}^m \epsilon_{i} u^{l}$, the RC of a single entry in the $\boldsymbol z^l$ and $\boldsymbol u^l$, respectively.
    Then
    \begin{equation}
    \begin{aligned}
        R_z^{l} & = \E_{\{\epsilon_i\}_{i = 1}^m} \sup_{\boldsymbol w^{l}, \boldsymbol z^{l-1}, \boldsymbol u^{l-1}} \frac{1}{m} \sum_{i = 1}^m \epsilon_{i}  \S_{\lambda} \left( \boldsymbol w^l \boldsymbol z^{l - 1} + (\boldsymbol w^l - \boldsymbol I) \boldsymbol u^{l - 1} +\boldsymbol b \right).
    \end{aligned}
    \end{equation}
    From Lemma \ref{lemma:ST}, the soft-thresholding leads to
    \begin{equation}
    \begin{aligned}      
        R_z^{l}& \leq \E_{\{\epsilon_i\}_{i = 1}^m} \sup_{\boldsymbol w^{l}, \boldsymbol z^{l-1}, \boldsymbol u^{l-1}} \frac{1}{m} \sum_{i = 1}^m \epsilon_{i}  \left( \boldsymbol w^l \boldsymbol z^{l - 1} + (\boldsymbol w^l - \boldsymbol I) \boldsymbol u^{l - 1} \right) - \frac{\lambda T^{(l)}}{m}.
    \end{aligned}
    \end{equation}
    By splitting the supremum and applying Lemma \ref{lemma:norm B}, the above reads
    \begin{equation}
    \begin{aligned}
        R_z^{l} & \leq B_l \E_{\{\epsilon_i\}_{i = 1}^m} \sup_{z^{l-1}} \frac{1}{m} \sum_{i = 1}^m \epsilon_{i} z^{l - 1} + (1 + B_l) \E_{\{\epsilon_i\}_{i = 1}^m} \sup_{u^{l-1}} \frac{1}{m} \sum_{i = 1}^m \epsilon_{i} u^{l - 1} - \frac{\lambda T^{(l)}}{m} \\
        &\leq B_l R_z^{l - 1} + (1 + B_l) R_u^{l - 1} - \frac{\lambda T^{(l)}}{m}.
    \end{aligned}
    \end{equation}
    Similarly for $R_u^{l}$ we get,
    \begin{equation}
    \begin{aligned}
        R_u^{l} & \leq (1 + \gamma B_l) R_u^{l - 1} + \gamma B_l R_z^{l - 1} + \gamma R_z^{l} \\
        & \leq \left( 1 + \gamma (2B_l + 1) \right) R_u^{l - 1} + 2 \gamma B_l R_z^{l - 1} - \frac{\gamma \lambda T^{(l)}}{m}.
    \end{aligned}
    \end{equation}
    
    Applying Lemma \ref{lemma:ST}, and using the fact that the RC of a sum can be bounded by the sum of the individual RCs \citep{Bartlett2002Rademacher}, results in
    \begin{equation}
    \begin{aligned}
        R_z^{l} & \leq B_l R_z^{l - 1} + (1 + B_l) R_u^{l - 1} - \frac{\lambda T^{(l)}}{m} \\
        R_u^{l} & \leq (1 + \gamma B_l) R_u^{l - 1} + \gamma B_l R_z^{l - 1} + \gamma R_z^{l} \\
        & \leq \left( 1 + \gamma (2B_l + 1) \right) R_u^{l - 1} + 2 \gamma B_l R_z^{l - 1} - \frac{\gamma \lambda T^{(l)}}{m}.
    \end{aligned}
    \end{equation}
    As a result,
    \begin{equation}
    \begin{aligned}
        R_z^{l} + R_u^{l}
        & \leq B_l R_z^{l - 1} + (1 + B_l) R_u^{l - 1} - \frac{\lambda T^{(l)}}{m} + \left( 1 + \gamma (2B_l + 1) \right) R_u^{l - 1} + 2 \gamma B_l R_z^{l - 1} - \frac{\gamma \lambda T^{(l)}}{m} \\
        & = \left( 1 + 2 \gamma \right) B_l R_z^{l - 1} + \left( \left( 1 + 2 \gamma \right) B_l + 2 + \gamma\right) R_u^{l - 1} - \frac{(1 + \gamma) \lambda T^{(l)}}{m} \\
        & \leq \left( 1 + 2 \gamma \right) B_l R_z^{l - 1} + \left( 1 + 2 \gamma \right) (B_l + 2) R_u^{l - 1} - \frac{(1 + \gamma) \lambda T^{(l)}}{m} \\
        & \leq \left( 1 + 2 \gamma \right) (B_l + 2) \left( R_z^{l - 1} + R_u^{l - 1} \right) - \frac{(1 + \gamma) \lambda T^{(l)}}{m}.
    \end{aligned}
    \end{equation}
    
    We obtain a recurrence relation on the sum of RCs $R_z^{l} + R_u^{l}$.
    Repeating the proof of Theorem \ref{theorem: GE of ISTA network} for ISTA networks, and replacing $\lambda$ by $\tilde{\lambda} = (1 + \gamma) \lambda$ and $\tilde{B}_l = (1 + 2 \gamma) (B_l + 2), \ \forall l \in [1,L]$, respectively, leads to
    \begin{equation}
    \begin{aligned}
        R_z^{L - 1} + R_u^{L - 1} \leq G_A^{L - 1} = \frac{B_0 \prod_{l = 1}^{L - 1} \tilde{B}_l}{\sqrt{m}} - \frac{\tilde{\lambda}}{m} \sum_{l = 1}^{L - 2} T^{(l)} \prod_{j = l+1}^{L - 1} \tilde{B}_j - \frac{\tilde{\lambda} T^{(L-1)}}{m}.
    \end{aligned}
    \end{equation}
    The value of $T^{(l)}$ are in the interval
    \begin{equation}
    \begin{aligned}
        T^{(l)}\in \left[ 0, \min \left\{\frac{m \tilde{B}_{l} G_A^{l - 1}}{\tilde{\lambda}}, m  \right\}\right].
    \end{aligned}
    \end{equation}
    Moreover, from the ADMM recurrence relation in (\ref{eq:ADMM_unfolded}), the following holds
    \begin{equation}
    \begin{aligned}
        R_A^{l} \leq  \tilde{B}_{l} \left( R_z^{l - 1} + R_u^{l - 1} \right) 
    \end{aligned}
    \end{equation}
    where $R_A^{l} := \Ra{\H_A^l}$ is the RC of $\H_A^l$, leading to
    \begin{equation}
    \begin{aligned}
        R_A^{L} \leq \tilde{B}_{L} \left(\frac{B_0 \prod_{l = 1}^{L - 1} \tilde{B}_l}{\sqrt{m}} - \frac{\tilde{\lambda}}{m} \sum_{l = 1}^{L - 2} T^{(l)} \prod_{j = l+1}^{L - 1} \tilde{B}_j - \frac{\tilde{\lambda} T^{(L-1)}}{m} \right).
    \end{aligned}
    \end{equation}
    Similarly to the proof of Theorem \ref{theorem: GE of ISTA network}, the bound on the GE is obtained by applying a $1$-Lipschitz loss on the network’s prediction, which concludes the proof.
\end{proof}

Similarly to Theorem \ref{theorem: GE of ISTA network}, Theorem \ref{theorem: GE of ADMM network} results in design rules for ADMM networks with low GE.
We observe that the RC bound of the ADMM network is obtained from the bound of the ISTA network, by replacing $\lambda$ and $B_l$ with $\tilde{\lambda}$ and $\tilde{B}_l,\ l \in [1, L]$.
The relation between $B_l$ and $\tilde{B}_l$ sheds light on the relation between the GE of learned ISTA and ADMM.
Since $\tilde{B}_l > B_l$, the GE bound on ISTA networks is potentially lower compared to ADMM networks with the same weight's norm $B_l$, indicating that ISTA networks have better generalization abilities compared to ADMM networks.
Depending on the behaviour of $T$, as the number of training samples $m$ increases, the difference between the bounds on the model-based and ReLU networks (presented in Theorem \ref{theorem: GE of ReLU network}) might be less significant.
In this case, the soft-thresholding nonlinearity is most valuable in the case of small number of training samples.

Similarly to the GE bound of ISTA networks, we define $\tilde{B} = \max_{l \in [1,L]} \tilde{B}_l$ and extract a more compact bound on the GE of ADMM networks
\begin{equation}
\begin{aligned}
    \GE\left( \H_A^L \right) \leq 2 \tilde{B} \left( \frac{B_0 \tilde{B}^{L-1}}{\sqrt{m}} - \frac{\tilde{\lambda} \E_S(T)}{m} \frac{\tilde{B}^{L - 1} - 1}{\tilde{B} - 1}\right).
    \label{eq:GE ADMM simplified}
\end{aligned}
\end{equation}

\subsection{Estimation error bounds}

Here we prove the EE bounds for ISTA, ADMM, and ReLU networks provided in Theorem \ref{theorem: EE of ISTA network}, derived with the LRC machinery, presented in the following theorem.

\begin{defn}
    A function $\psi: [0, \infty) \to [0, \infty)$ is sub-root if it is nonnegative, nondecreasing, and if $r \mapsto \psi(r) / \sqrt{r}\,$ is nonincreasing for $r>0$.
\label{def:subroot}
\end{defn}

\begin{theorem} (Local Rademacher complexity bound on the estimation error for vector estimators \citep{Yousefi2018LRCMTL})
\label{theorem:LRC MTL}
Let $\HV$ be a class of functions where each coordinate ranges in $[-1, 1]$ and let $\L$ be a loss function satisfying:
\begin{itemize}
    \item There exists an estimator $\boldsymbol h^* \in \HV$ satisfying $\E_{\D} \L \left (\boldsymbol h^* \right) = \inf_{\boldsymbol h \in \HV} \E_{\D} \L (\boldsymbol h )$.
    
    \item The loss $\L$ is an averaged of $1$-Lipschitz per-coordinate losses $\ell$
    \begin{equation}
    \begin{aligned}
        \L( \boldsymbol h) = \frac{1}{n_x} \sum_{j = 1}^{n_x} \ell (\boldsymbol h_j).
    \end{aligned}
    \end{equation}
    
    \item There exists a constant $C \geq 1$ such that for every probability distribution $\D$, and estimator $\boldsymbol h \in \HV$, such that
    \begin{equation}
    \begin{aligned}
        \E_{\D} \sum_{j = 1}^{n_x} (\boldsymbol h_j - \boldsymbol h^*_j )^2 \leq C \E_{\D} \sum_{j = 1}^{n_x} \left( \ell(\boldsymbol h_j) - \ell (\boldsymbol h^*_j ) \right).
    \end{aligned}
    \end{equation}

\end{itemize}
Let $\psi$ be a sub-root function with fixed point $r^*$ such that $\psi(r) \geq \mathfrak{R}_m^{\textit{vec}}(\HV_r), \forall r \geq r^*$ where
\begin{equation}
\begin{aligned}
    \mathfrak{R}_m^{\textit{vec}}(\HV_r) &:= \E_{\{\epsilon_{i,j}\}_{i=1, j=1}^{m, n_x}} \sup_{\boldsymbol h \in \HV_{r}} \frac{1}{m}\sum_{i = 1}^m \frac{1}{n_x}\sum_{j = 1}^{n_x} \epsilon_{i,j} \boldsymbol h_j (\boldsymbol y_i)
\end{aligned}
\end{equation}
where $\HV_r$ is defined in (\ref{def:Ar}).

Then for any $s > 0$, any $K > 1$, and any $r \geq \psi(r)$ with probability at least $1 - e^{-s}$
\begin{equation}
\begin{aligned}
    \EE(\HV) \leq 40 K r^* + \frac{16 C^2 K + 48 C}{m n_x} s.
    \label{eq:EE result LRC}
\end{aligned}
\end{equation}

\end{theorem}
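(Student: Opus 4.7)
The plan is to follow the standard local Rademacher complexity machinery of Bartlett, Bousquet, and Mendelson, adapted to the vector-valued setting as in Yousefi et al. Let $\L^*(\boldsymbol h) = \L(\boldsymbol h) - \L(\boldsymbol h^*)$ denote the excess loss. The Bernstein-type hypothesis together with $|\boldsymbol h_j|,|\boldsymbol h^*_j|\le 1$ and 1-Lipschitzness of $\ell$ yields $\E_\D[\L^*(\boldsymbol h)^2] \le C\,\E_\D \L^*(\boldsymbol h)$, so the excess-loss class has variance controlled by its mean. This is the single condition that turns a global Rademacher bound of order $m^{-1/2}$ into a local one of order $m^{-1}$.

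First, I would introduce the star-hull of the excess-loss class, $\mathrm{star}(\FH{\L^*}{\HV}) = \{\alpha f : f \in \FH{\L^*}{\HV},\ \alpha \in [0,1]\}$, so that the sub-root property can be exploited: for any sub-root $\psi$ with unique fixed point $r^*$ one has $\psi(r)/\sqrt{r}$ nonincreasing, hence $\psi(r) \le \sqrt{r/r^*}\,\psi(r^*) = \sqrt{r r^*}$ for $r \ge r^*$. Next, for each radius $r$ I would invoke Talagrand–Bousquet's concentration inequality for the empirical process indexed by $\{f \in \mathrm{star}(\FH{\L^*}{\HV}): \E_\D f \le r\}$: with probability at least $1-e^{-s}$, the uniform deviation is bounded by $2\mathfrak{R}_m^{\text{vec}}$ of the localized class plus $\sqrt{2rs/(mn_x)}+Cs/(mn_x)$ terms, using that the variance is at most $Cr$ and the range is $O(1)$. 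Then a contraction step (Talagrand's contraction lemma applied coordinatewise, as allowed by the averaged per-coordinate 1-Lipschitz loss) passes from the Rademacher complexity of the loss class to $\mathfrak{R}_m^{\text{vec}}(\HV_r)$, which is dominated by $\psi(r)$ by hypothesis.

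The core step is then the peeling / fixed-point argument. For any candidate $r \ge \psi(r)$, and in particular $r = r^*$, the localization inequality reads: with high probability, every $\boldsymbol h \in \HV$ satisfies
\begin{equation}
\E_\D \L^*(\boldsymbol h) \;\le\; \frac{K}{K-1}\,\bigl(L_S(\boldsymbol h) - L_S(\boldsymbol h^*)\bigr) + c_1 K r^* + \frac{c_2 C^2 K + c_3 C}{mn_x}\,s,
\end{equation}
for any $K > 1$, where the constants $c_1,c_2,c_3$ come from tracking the Talagrand bound through the peeling decomposition. Evaluating at the ERM $\hat{\boldsymbol h}$, the empirical excess $L_S(\hat{\boldsymbol h}) - L_S(\boldsymbol h^*) \le 0$, so the right-hand side collapses to the constants; optimizing the absolute constants (which is where the specific numbers $40, 16, 48$ arise) yields the stated bound.

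The main obstacle, in my view, is the peeling step combined with the star-hull reduction: one must carefully decompose $\HV$ into shells $\{\boldsymbol h : 2^{k-1} r \le \E_\D \L^*(\boldsymbol h) \le 2^k r\}$, apply Talagrand on each shell with the shell-dependent variance proxy $C\cdot 2^k r$, union-bound over the $O(\log m)$ shells, and then absorb the slack into the constants using the sub-root inequality $\psi(2^k r^*) \le 2^{k/2}\psi(r^*) = 2^{k/2} r^*$. Getting the numerical constants tight enough for the factor $40K$ requires exact bookkeeping of the Talagrand-Bousquet constants and a careful application of the inequality $2\sqrt{ab}\le a/K + Kb$ to trade the $\sqrt{r\cdot r^*}$ cross-term against the linear-in-$r$ term, but no genuinely new idea beyond the Bartlett–Bousquet–Mendelson template is required.
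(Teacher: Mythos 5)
The paper does not actually prove this theorem; it is imported as a black box from Yousefi et al.\ (2018), whose argument is precisely the Bartlett--Bousquet--Mendelson localization template you describe: derive the Bernstein condition $\E_{\D}[\L^*(\boldsymbol h)^2] \leq C\,\E_{\D}\L^*(\boldsymbol h)$ for the excess-loss class from the third hypothesis via Lipschitzness and Jensen, pass to the star-hull so the sub-root inequality $\psi(r) \leq \sqrt{r r^*}$ applies, localize with Talagrand--Bousquet concentration and peeling at effective sample size $m n_x$, contract back to $\mathfrak{R}_m^{\textit{vec}}(\HV_r)$, and evaluate at the ERM where the empirical excess is nonpositive. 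Your outline is the correct strategy and identifies all the key reductions, so up to the constant bookkeeping you already flag as routine, it matches the proof of the cited source rather than anything contained in this paper.
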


In the case of proper learning (such that the target vector belongs to the class of estimators $\H$, implying that $\inf_{h \in \H} L_{\D} (h) = 0$), the requirements on the loss are satisfied for losses given by the $L_p$ norm ($p \geq 1$), as discussed in \citep{Mendelson2002SampleComplexity}.
The value of $C$ in Theorem \ref{theorem:LRC MTL}, depends on the $L_p$ norm being used as a loss.

We derive the following theorem on the EE of ISTA, ADMM, and ReLU networks.

\textbf{Theorem \ref{theorem: EE of ISTA network}} (Estimation error bound of ISTA, ADMM, and ReLU  networks):
    Consider the class of functions represented by depth-$L$ ISTA networks $\HV_I^L$ as detailed in Section \ref{sec:Network_Architecture}, $m$ training samples, and a loss satisfying Assumption \ref{assumption: loss} with a constant $C$.
    If $|| \boldsymbol W^{l} -\boldsymbol W^{l,*} ||_{\infty} \leq \alpha \sqrt{r}$ for some $\alpha > 0$. Moreover, $B \geq \max\{\alpha \sqrt{r}, 1\}$. Then there exists $T$ in the interval
    \begin{equation}
    \begin{aligned}
        T \in \left[ 0, \min \left\{ \frac{\sqrt{m} B_0 B^{L - 1} 2^L }{\lambda \eta}, m\right\} \right]
    \end{aligned}
    \end{equation}
    where $\eta = \frac{L B^{L-1} (B-1) - B^L + 1}{(B-1)^2}$, such that for any $s > 0$ with probability at least $1 - e^{-s}$
    \begin{equation}
    \begin{aligned}
        \EE{\left( \HV_I^L \right)} \leq 41 r^* + \frac{17 C^2 + 48 C}{m n_x} s
    \end{aligned}
    \end{equation}
    where
    \begin{equation}
    \begin{aligned}
        r^* &= C^2 \alpha^2 \left(\frac{B_0 B^{L-1}2^L}{\sqrt{m}} - \frac{\lambda T}{m} \eta \right)^2.
    \end{aligned}
    \end{equation}
    For the class of functions represented by depth-$L$ ADMM networks $\HV_A^L$, the bound is satisfied with
    \begin{equation}
    \begin{aligned}
        r^* &= C^2 \alpha^2 \left(\frac{B_0  \tilde{B}^{L-2}2^{L-1}}{\sqrt{m}} - \frac{\tilde{\lambda} T}{m} \tilde{\eta} \right)^2
    \end{aligned}
    \end{equation}
    where $\tilde{\lambda} = (1 + \gamma) \lambda$, $\tilde{B} = (1 + 2 \gamma) (B + 2)$, and $\tilde{\eta} = \frac{(L-1) \tilde{B}^{L-2} (\tilde{B}-1) - \tilde{B}^{L-1} + 1}{(\tilde{B}-1)^2}$.
    The bound is also satisfied for the class of functions represented by depth-$L$ ReLU networks $\HV_R^L$, with
    \begin{equation}
    \begin{aligned}
        r^* &=  C^2 \alpha^2 \, \left(\frac{B_0 B^{L-1}2^L}{\sqrt{m}}\right)^2.
    \end{aligned}
    \end{equation}

\begin{proof}

To derive the upper bound on the EE, we will use the LRC framework developed in \citep{Bartlett2005LRC}, and rely on Theorem \ref{theorem:LRC MTL} derived in \citep{Yousefi2018LRCMTL}, assuming that the loss function satisfies Assumption \ref{assumption: loss}.

We define $\H_{I,r}^{l,j}$ to be the class of scalar functions that represent the $j$th neuron (coordinate) at the $l$th output layer
\begin{equation}
\begin{aligned}
    \H_{I,r}^{l,j} = \left\{ \boldsymbol h_j: \boldsymbol h \in \HV_{I,r}^l \right\}.
    \label{def:Ar_scalar}
\end{aligned}
\end{equation}

In order to apply the above theorem, one needs to bound the RC of the class of functions $\H_{I,r}^{l,j}$, defined in (\ref{def:Ar_scalar}).
The weight matrices of $\boldsymbol h^*$ and $\boldsymbol h \in \HV_{I,r}^{l}$ are denoted by $\{ \boldsymbol W^{l, *} \}_{l = 1}^{L }$ and $\{ \boldsymbol W^{l}\}_{l = 1}^{L }$, respectively.
The weights differences is defined as $\boldsymbol {\Delta W}^l = \boldsymbol W^{l} - \boldsymbol W^{l, *}$, satisfying $||\boldsymbol {\Delta W}^l||_{\infty} \leq \alpha \sqrt{r} := \Delta B$.
We can now bound the RC of $\H_{I,r}^{l,j}$:
\begin{equation}
\begin{aligned}
    \Ra{\H_{I,r}^{l,j}} &= \E_{\{\epsilon_i\}_{i = 1}^m} \sup_{h^{l}} \frac{1}{m} \sum_{i = 1}^m \epsilon_{i} h^{l}(\boldsymbol y_i) \\
    &= \E_{\{\epsilon_i\}_{i = 1}^m} \sup_{\boldsymbol W^{l}, h^{l-1}} \frac{1}{m} \sum_{i = 1}^m \epsilon_{i} \S_{\lambda} \left(\boldsymbol W^{l } \boldsymbol h^{l - 1}(\boldsymbol y_i) + \boldsymbol b \right).
\end{aligned}
\end{equation}
Applying Lemma \ref{lemma:ST} results in
\begin{equation}
\begin{aligned}
    \Ra{\H_{I,r}^{l,j}} & \leq \E_{\{\epsilon_i\}_{i = 1}^m} \sup_{\boldsymbol W^{l}, h^{l - 1}} \frac{1}{m} \sum_{i = 1}^m \epsilon_{i} \boldsymbol W^{l} \boldsymbol h^{l - 1}(\boldsymbol y_i) - \frac{\lambda T^{(l)}}{m} \\
    & = \E_{\{\epsilon_i\}_{i = 1}^m} \sup_{\boldsymbol{\Delta W}^{l}, h^{l - 1}} \frac{1}{m} \sum_{i = 1}^m \epsilon_{i} \left( \boldsymbol W^{l, *} + \boldsymbol {\Delta W}^{l } \right) \boldsymbol h^{l - 1}(\boldsymbol y_i) - \frac{\lambda T^{(l)}}{m} \\
    & \leq \left( B + \Delta B \right) \Ra{\H_r^{l - 1}}  - \frac{\lambda T^{(l)}}{m}
\end{aligned}
\end{equation}
where the last inequality holds, by applying Lemma \ref{lemma:norm B} and taking into account that $|| \boldsymbol W^{l, *} + \boldsymbol {\Delta W}^{l}  ||_{\infty} \leq || \boldsymbol W^{l, *}||_{\infty} + ||\boldsymbol {\Delta W}^{l }  ||_{\infty} \leq B + \Delta B$.
Accumulating the contributions from all $L$ layers, we obtain
\begin{equation}
\begin{aligned}
    \Ra{\H_{I,r}^{L,j}} &\leq \frac{B_0 \left( \left(B + \Delta B \right)^L - B^L\right)}{\sqrt{m}} - \frac{\lambda T}{m} \frac{(B+\Delta B)^L - 1}{(B+\Delta B) - 1},
    \label{eq: helper EE 1}
\end{aligned}
\end{equation}
where $T = \min_{l \in [1, L]} T^{(l)}$. In addition, the subtraction of the factor $B^L$ results from the following term $\boldsymbol W^{L,*} \boldsymbol W^{L - 1,*} \boldsymbol W^{L - 2,*} \ldots \boldsymbol W^{1,*} \boldsymbol x_i$, which does not contribute to the complexity, since these are fixed matrices.

Next, we bound the expression in (\ref{eq: helper EE 1}) with a term linearly dependent on $\Delta B$.
Applying Newton's binomial formula, reads
\begin{equation}
\begin{aligned}
    \Ra{\H_{I,r}^{L,j}} &\leq \frac{B_0 \left( \sum_{k=0}^{L}{L\choose k}B^k \Delta B^{L-k} - B^L\right)}{\sqrt{m}} - \frac{\lambda T}{m} \frac{(B+\Delta B)^L - 1}{(B+\Delta B) - 1}\\
    &= \frac{B_0 \Delta B \left( \sum_{k=0}^{L-1}{L\choose k}B^k \Delta B^{L-k-1}\right)}{\sqrt{m}} - \frac{\lambda T}{m} \frac{(B+\Delta B)^L - 1}{(B+\Delta B) - 1}.
\end{aligned}
\end{equation}
Assuming that $\Delta B < B$ the above is bounded by
\begin{equation}
\begin{aligned}
    \Ra{\H_{I,r}^{L,j}} & \leq \frac{B_0 \Delta B \left( \sum_{k=0}^{L-1}{L\choose k}B^{L-1} \right)}{\sqrt{m}} - \frac{\lambda T}{m} \frac{(B+\Delta B)^L - 1}{(B+\Delta B) - 1}.
\end{aligned}
\end{equation}
Using the known relation $\sum_{k=0}^{L}{L\choose k} = 2^L$, leads to
\begin{equation}
\begin{aligned}
    \Ra{\H_{I,r}^{L,j}} & \leq \frac{B_0 \Delta B B^{L-1}2^L}{\sqrt{m}} - \frac{\lambda T}{m} \sum_{l=0}^{L-1} (B + \Delta B)^l\\
    & \leq \frac{B_0 \Delta B B^{L-1}2^L}{\sqrt{m}} - \frac{\lambda T}{m} \sum_{l=1}^{L-1} (B + \Delta B)^l \\
    & \leq \frac{B_0 \Delta B B^{L-1}2^L}{\sqrt{m}} - \frac{\lambda T}{m} \sum_{l=1}^{L-1} \sum_{k=0}^{l} {l\choose k}\Delta B^k B^{l-k}
\end{aligned}
\end{equation}
where we used Newton's binomial formula.
Neglecting all terms with $k \neq 1$, reduces the above to
\begin{equation}
\begin{aligned}
    \Ra{\H_{I,r}^{L,j}} & \leq \Delta B \left( \frac{B_0  B^{L-1}2^L}{\sqrt{m}} - \frac{\lambda T}{m} \sum_{l=1}^{L-1} {l\choose 1} B^{l-1} \right)\\
    & = \Delta B \left( \frac{B_0  B^{L-1}2^L}{\sqrt{m}} - \frac{\lambda T}{m} \frac{L B^{L-1} (B-1) - B^L + 1}{(B-1)^2} \right).
\end{aligned}
\end{equation}
Substituting $\Delta B = \alpha \sqrt{r}$, we have
\begin{equation}
\begin{aligned}
    \Ra{\H_{I,r}^{L,j}} & \leq \alpha \sqrt{r} \left( \frac{B_0  B^{L-1}2^L}{\sqrt{m}} - \frac{\lambda T}{m} \frac{L B^{L-1} (B-1) - B^L + 1}{(B-1)^2} \right).
\end{aligned}
\end{equation}
Since the RC is nonnegative, the value of $T$ is restricted to the interval
\begin{equation}
\begin{aligned}
    T \in \left[ 0, \min \left\{ \frac{\sqrt{m} B_0 B^{L - 1} 2^L }{\lambda \eta}, m\right\} \right]
\end{aligned}
\end{equation}
where $\eta = \frac{L B^{L-1} (B-1) - B^L + 1}{(B-1)^2}$.

To simplify our notation, let us denote 
\begin{equation}
\begin{aligned}
    \beta := \left(\frac{B_0  B^{L-1}2^L}{\sqrt{m}} - \frac{\lambda T}{m} \eta \right).
    \label{eq:beta}
\end{aligned}
\end{equation}
Then, for $j \in [1, n_x]$
\begin{equation}
\begin{aligned}
    \Ra{\H_{I,r}^{L, j}} & \leq \alpha \sqrt{r} \beta.
\end{aligned}
\end{equation}

To apply Theorem \ref{theorem:LRC MTL}, we observe that from Jensen's inequality $\mathfrak{R}_m^{\textit{vec}}(\HV_r)$ satisfies
\begin{equation}
\begin{aligned}
    \mathfrak{R}_m^{\textit{vec}}(\HV_r) &= \E_{\{\epsilon_{i,j}\}_{i=1, j=1}^{m, n_x}} \sup_{\boldsymbol h \in \HV_{I,r}^L} \frac{1}{m}\sum_{i = 1}^m \frac{1}{n_x}\sum_{j = 1}^{n_x} \epsilon_{i,j} \boldsymbol h_j (\boldsymbol y_i) \\
    & \leq \frac{1}{n_x}\sum_{j = 1}^{n_x} \E_{\{\epsilon_{i,j}\}_{i=1, j=1}^{m, n_x}} \sup_{\boldsymbol h \in \HV_{I,r}^L} \frac{1}{m}\sum_{i = 1}^m \epsilon_{i,j} \boldsymbol h_j (\boldsymbol y_i) \\
    & = \frac{1}{n_x}\sum_{j = 1}^{n_x} \Ra{\H_{I,r}^{L,j}} \\
    & \leq \frac{1}{n_x}\sum_{j = 1}^{n_x} \alpha \sqrt{r} \beta \\
    & = \alpha \sqrt{r} \beta.
\end{aligned}
\end{equation}

This leads to the following sub-root function
\begin{equation}
\begin{aligned}
    \psi(r) &:= C \alpha \sqrt{r} \beta \geq C \mathfrak{R}_m^{\textit{vec}}(\HV_r).
    \label{eq:subrootpsi}
\end{aligned}
\end{equation}

Since $C,\,\beta>0$ we get that $\psi$ is a nonnegative and nondecreasing function over $r\geq0$.
In addition $\psi(r) / \sqrt{r} = C \alpha \beta $ which is a constant function and in particular is nonincreasing, implying that the conditions for Theorem \ref{theorem:LRC MTL} are satisfied.

The fixed-point of the function, such that $r^* = \psi(r^*)$, can be computed explicitly from
\begin{equation}
\begin{aligned}
    r^* &=C \alpha \sqrt{r^*}  \left(\frac{B_0  B^{L-1}2^L}{\sqrt{m}} - \frac{\lambda T}{m} \eta \right),
\end{aligned}
\end{equation}
which reads
\begin{equation}
\begin{aligned}
    r^* &= C^2 \alpha^2 \left(\frac{B_0  B^{L-1}2^L}{\sqrt{m}} - \frac{\lambda T}{m} \eta \right)^2.
\end{aligned}
\end{equation}
By applying Theorem \ref{theorem:LRC MTL}, we conclude that for any $r \geq r^*$, $K > 1$, and $s > 0$, with probability at least $1 - e^{-s}$
\begin{equation}
\begin{aligned}
    \EE(\HV_{I}^L) \leq 40 K r^* + \frac{16 K C^2 + 48 C}{m n_x} s.
    \label{eq:EE helper 2}
\end{aligned}
\end{equation}
Moreover, we choose $K = 41/40 > 1$, leading to
\begin{equation}
\begin{aligned}
    \EE(\HV_{I}^L) &\leq 41 r^* + \frac{\frac{16 * 41}{40} C^2 + 48 C}{m n_x} s \\
    &\leq 41 r^* + \frac{17 C^2 + 48 C}{m n_x} s.
    \label{eq:EE helper 3}
\end{aligned}
\end{equation}


    Following the observation on the RC of ADMM networks, the RC bounds on ISTA networks are satisfied for ADMM networks by replacing $\lambda$, $B$ by $\tilde{\lambda}$, $\tilde{B}$, respectively, as defined in Theorem \ref{theorem: GE of ADMM network}.
    
    Applying the same proof methodology as Theorem \ref{theorem: EE of ISTA network}, results in the following sub-root function
    \begin{equation}
    \begin{aligned}
        \psi_A(r) &:= C \alpha \sqrt{r} \, 
        \left( \frac{B_0  \tilde{B}^{L-2}2^{L-1}}{\sqrt{m}} - \frac{\tilde{\lambda} T}{m} \frac{(L-1) \tilde{B}^{L-2} (\tilde{B}-1) - \tilde{B}^{L-1} + 1}{(\tilde{B}-1)^2} \right),
        \label{eq:subrootpsi_admm}
    \end{aligned}
    \end{equation}
    leading to the fixed-point
    \begin{equation}
    \begin{aligned}
        r^* &= C^2 \alpha^2 \left(\frac{B_0  \tilde{B}^{L-2}2^{L-1}}{\sqrt{m}} - \frac{\tilde{\lambda} T}{m} \frac{(L-1) \tilde{B}^{L-2} (\tilde{B}-1) - \tilde{B}^{L-1} + 1}{(\tilde{B}-1)^2}\right)^2.
    \end{aligned}
    \end{equation}
    Applying Theorem \ref{theorem:LRC MTL} proves the bound.

    Finally, the same proof methodology applies for ReLU networks, with
    the sub-root function given by
    \begin{equation}
    \begin{aligned}
        \psi_R(r) & = 
        C \alpha \sqrt{r} \, \frac{B_0  B^{L-1}2^L}{\sqrt{m}}
        \label{eq:subrootpsi_relu}
    \end{aligned}
    \end{equation}
    leading to the fixed-point
    \begin{equation}
    \begin{aligned}
        r^* &= C^2 \alpha^2 \, \left(\frac{B_0 B^{L-1}2^L}{\sqrt{m}}\right)^2.
    \end{aligned}
    \end{equation}
    Applying Theorem \ref{theorem:LRC MTL} concludes the proof.
\end{proof}

In contrast to the GE bounds, the role of the weight's norm $B$ is less dominant in the EE bounds presented in Theorem \ref{theorem: EE of ISTA network}. 
Therefore, it is not clear from the EE bounds which model-based networks exhibit better generalization.

The parameter $B_0$ relates the bound to the sparsity level of the target vectors
\begin{equation}
\begin{aligned}
    B_0 = \norm{\boldsymbol b}_1 = \norm{\boldsymbol A^T \boldsymbol y}_1 = \norm{\boldsymbol A^T \boldsymbol A \boldsymbol x + \boldsymbol A^T \boldsymbol e}_1 \leq \norm{\boldsymbol A^T \boldsymbol A \boldsymbol x}_1 + \norm{\boldsymbol A^T \boldsymbol e}_1.
\end{aligned}
\end{equation}
The target vector $\boldsymbol x$ is sparse with sparsity rate of $\rho$, meaning that $\norm{\boldsymbol x}_1 \leq \rho n_x$, so that
\begin{equation}
\begin{aligned}
    B_0 \leq \norm{\boldsymbol A^T \boldsymbol A}_1 \rho n_x + \norm{\boldsymbol A^T \boldsymbol e}_1.
\end{aligned}
\end{equation}
Therefore, as the target vector is more sparse, $\rho$ decreases, which decreases $r^*$, and implies a lower bound on the EE.

\section{Additional simulation results}
\label{sec:Additional simulation results}

In this section, we provide additional results for the EE of ISTA and ReLU networks with varying number of layers.
We show that the EE and performance of the networks behave similarly to the depth-10 networks presented in Section \ref{sec:Simulations}.

In our work, we considered only one set of learned weight matrices. 
However, a second set of weight matrices can also be learned, as understood from the following relation between consecutive layers
$$ \boldsymbol h_I^{l} = \S_{\lambda} \left( \boldsymbol W_1^{l } \boldsymbol h_I^{l - 1} + \boldsymbol W_2^{l} \boldsymbol y\right), \ \ \ \boldsymbol h_I^0 = \S_{\lambda} (\boldsymbol y).$$
The provided comparisons show that the soft thresholding nonlinearity affects the EE of ISTA networks more significantly, compared to learned bias terms, obtained with the set of additional learned matrices.
This observation emphasizes that the performed analysis on the ISTA network with constant biases, could be applicable to additional variations of the ISTA networks.

\begin{figure}[!h]
        \centering
        \includegraphics[width = 4.3in]{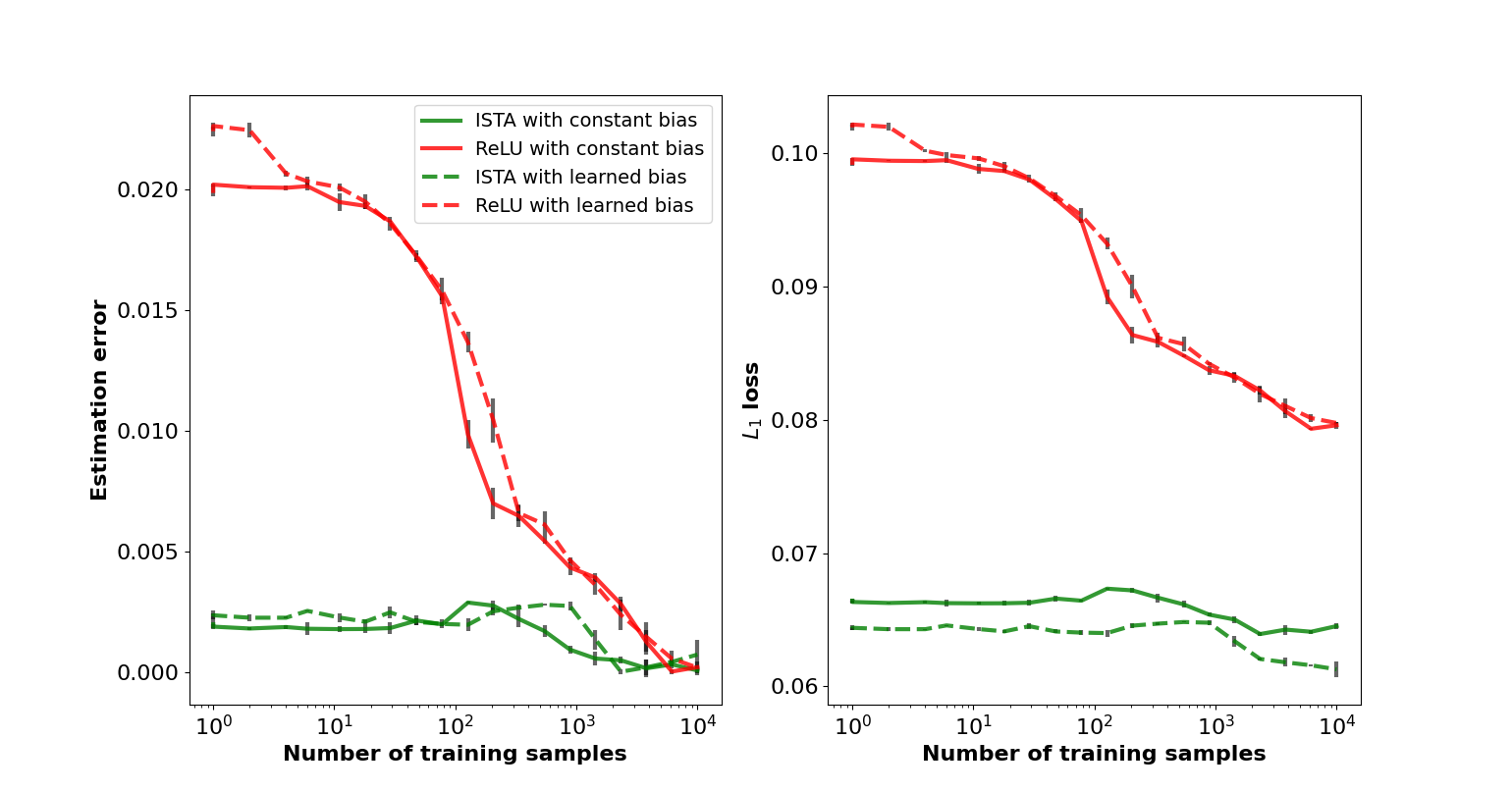}
        \caption{Comparing the EE of ISTA and ReLU networks with $2$ layers.}
    \label{fig:ISTA vs ReLU depth 2}
\end{figure}

\begin{figure}[!h]
        \centering
        \includegraphics[width = 4.3in]{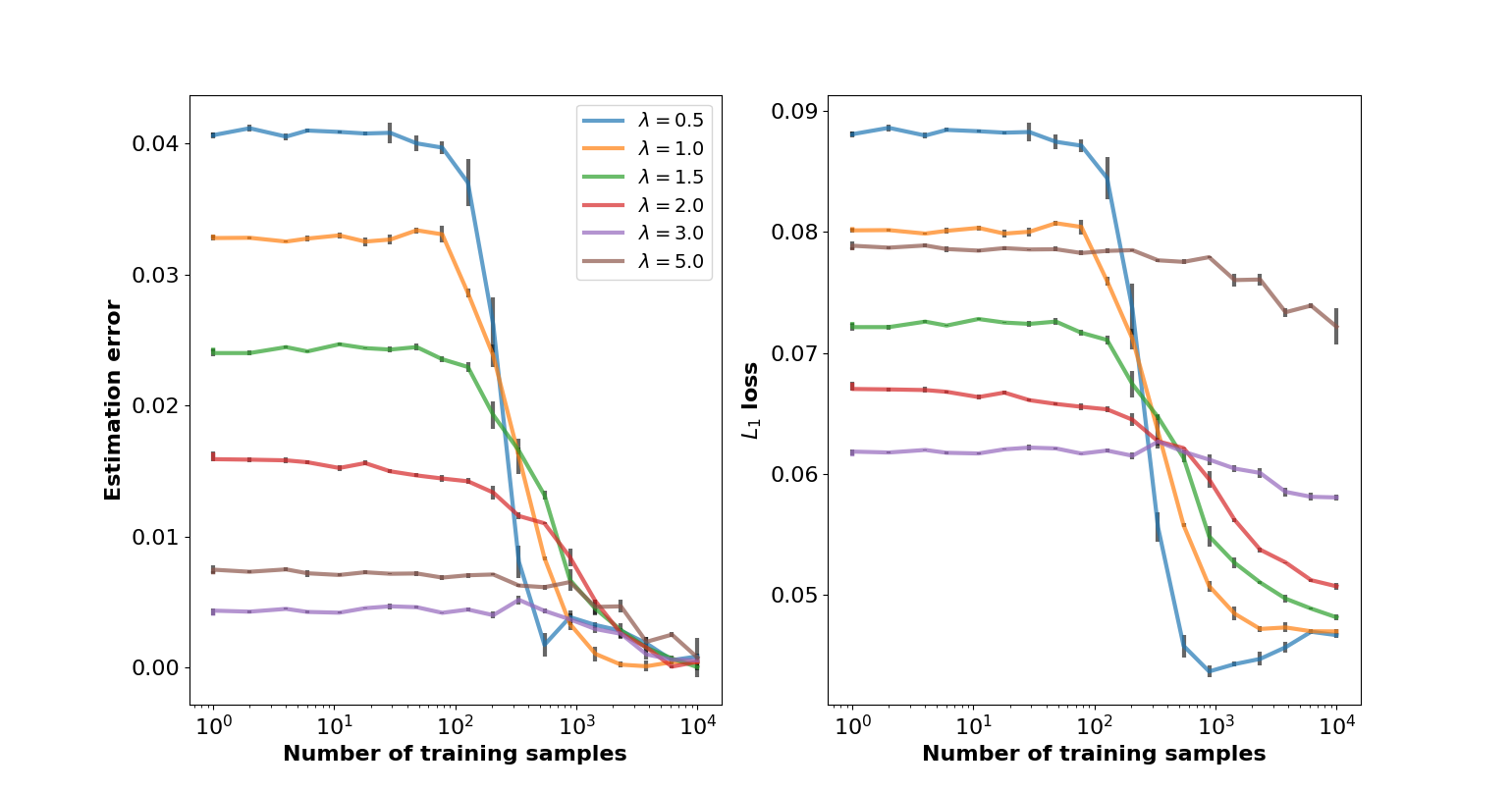}
        \caption{Estimation error and loss of ISTA networks with $2$ layers, as a function of the soft-threshold's value $\lambda$.}
    \label{fig:ISTA lambda depth 2}
\end{figure}

\begin{figure}[!h]
        \centering
        \includegraphics[width = 4.3in]{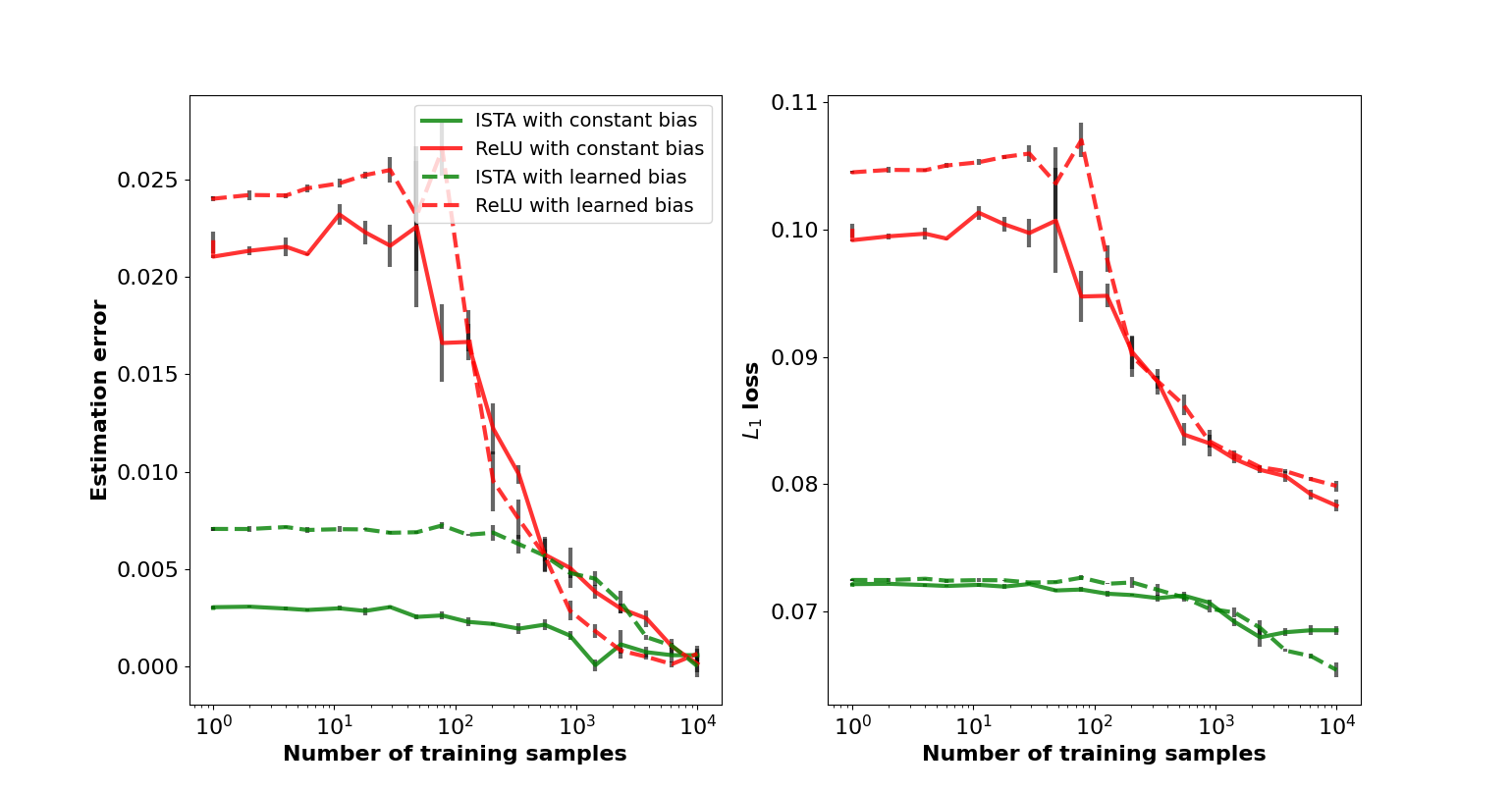}
        \caption{Comparing the EE of ISTA and ReLU networks with $4$ layers.}
    \label{fig:ISTA vs ReLU depth 4}
\end{figure}

\begin{figure}[!h]
        \centering
        \includegraphics[width = 4.3in]{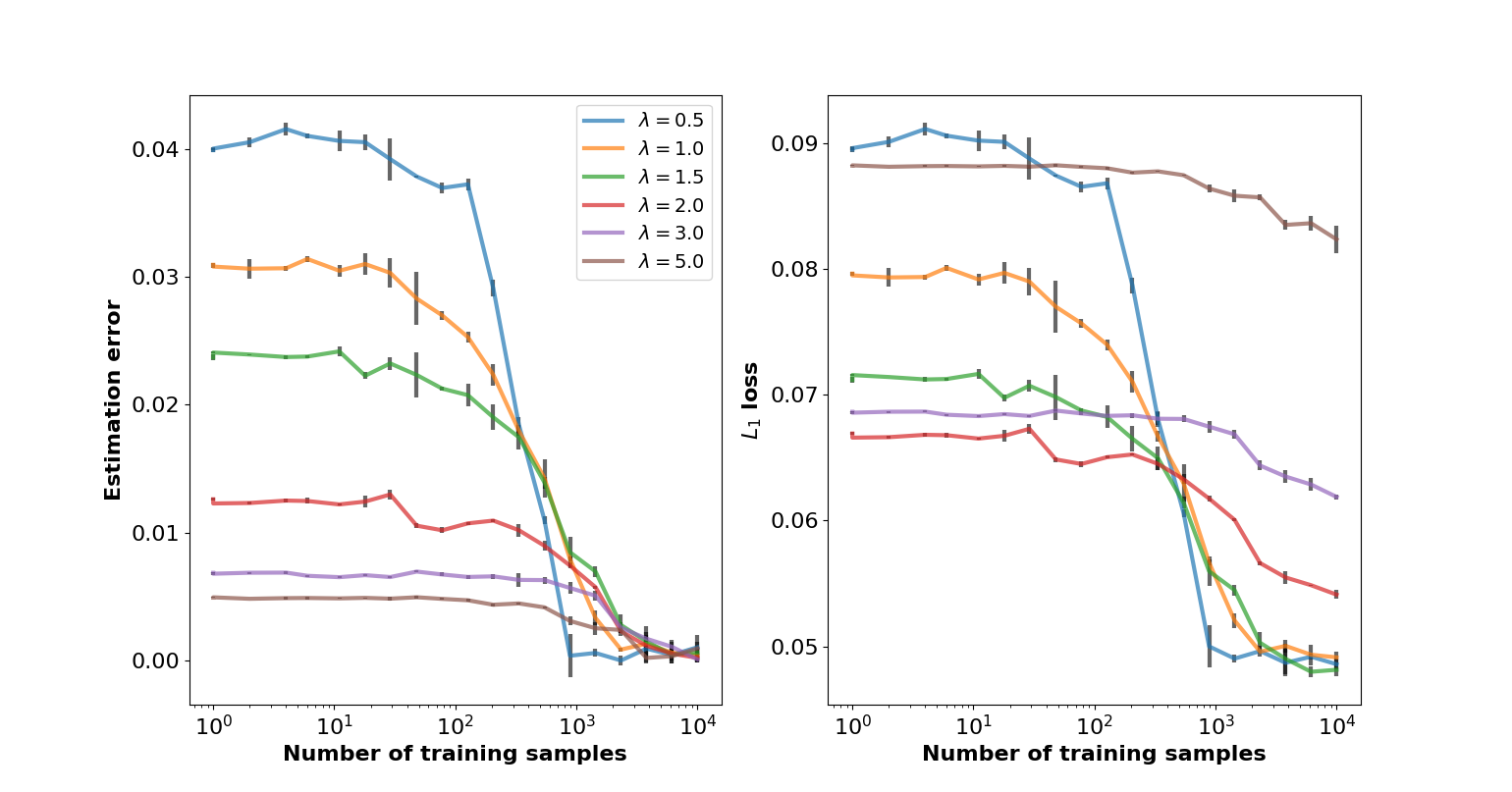}
        \caption{Estimation error and loss of ISTA networks with $4$ layers, as a function of the soft-threshold's value $\lambda$.}
    \label{fig:ISTA lambda depth 4}
\end{figure}

\begin{figure}[!h]
        \centering
        \includegraphics[width = 4.3in]{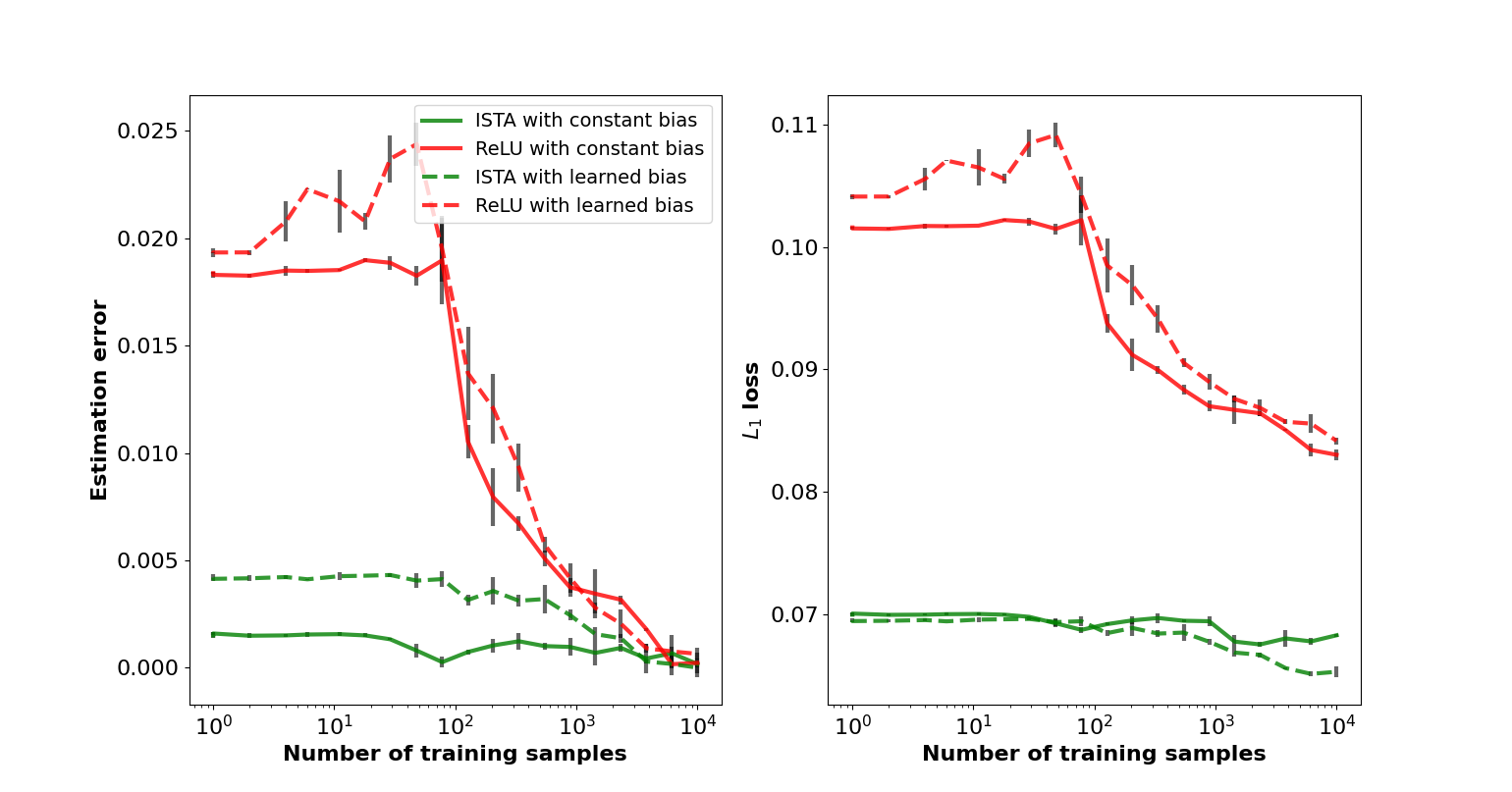}
        \caption{Comparing the EE of ISTA and ReLU networks with $6$ layers.}
    \label{fig:ISTA vs ReLU depth 6}
\end{figure}

\begin{figure}[!h]
        \centering
        \includegraphics[width = 4.3in]{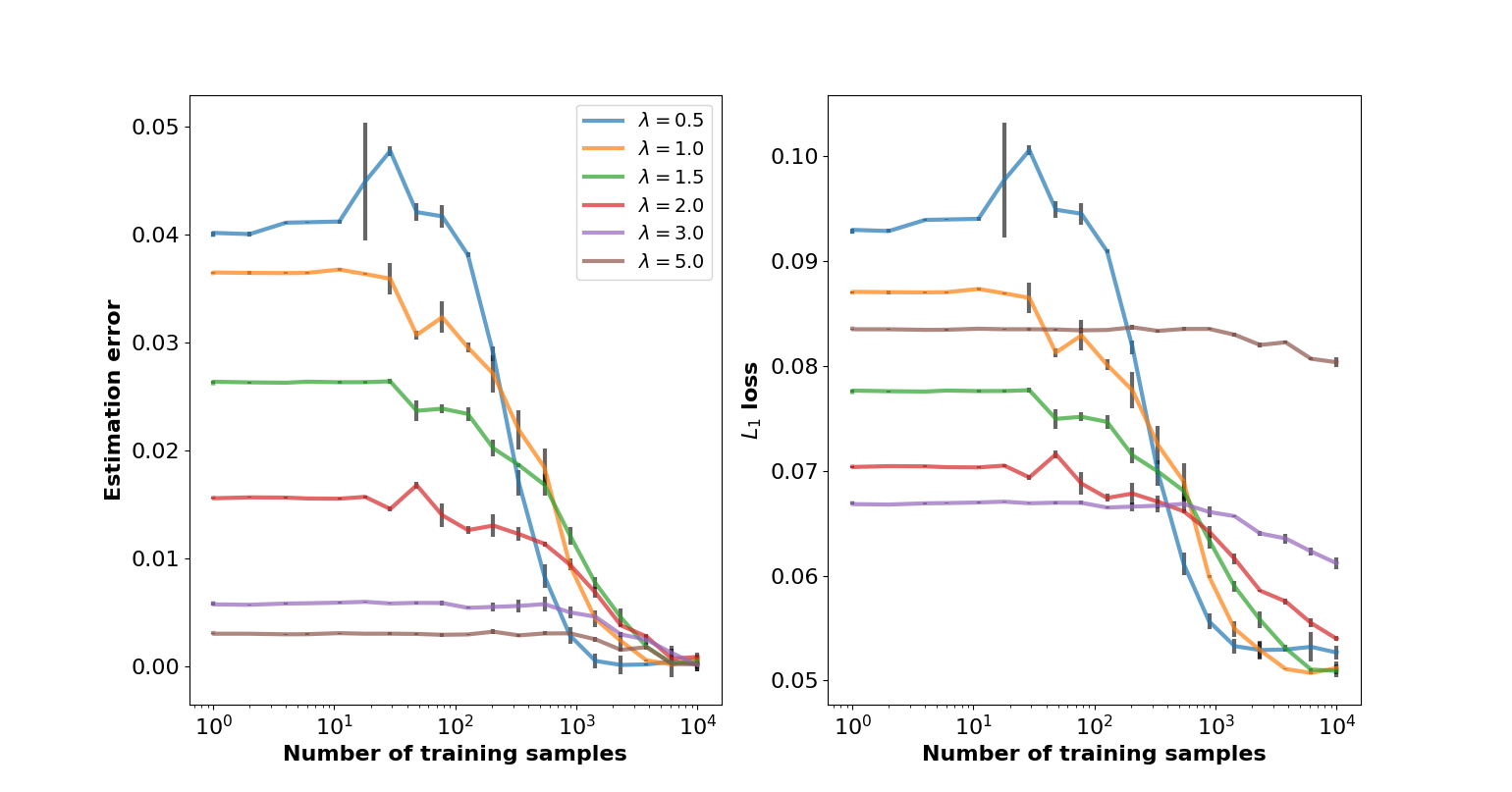}
        \caption{Estimation error and loss of ISTA networks with $6$ layers, as a function of the soft-threshold's value $\lambda$.}
    \label{fig:ISTA lambda depth 6}
\end{figure}

\begin{figure}[!h]
        \centering
        \includegraphics[width = 4.3in]{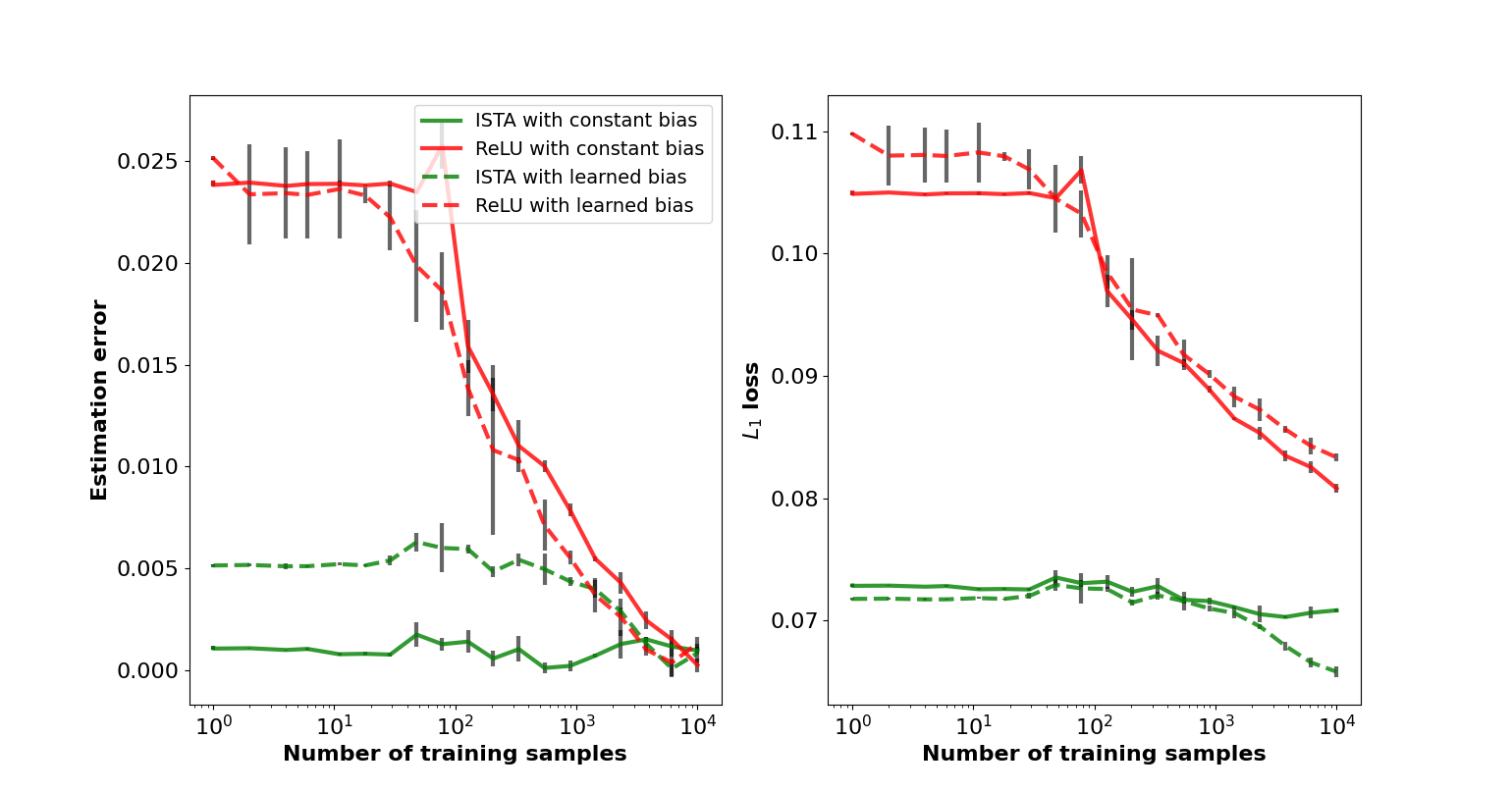}
        \caption{Comparing the EE of ISTA and ReLU networks with $8$ layers.}
    \label{fig:ISTA vs ReLU depth 8}
\end{figure}

\begin{figure}[!h]
        \centering
        \includegraphics[width = 4.3in]{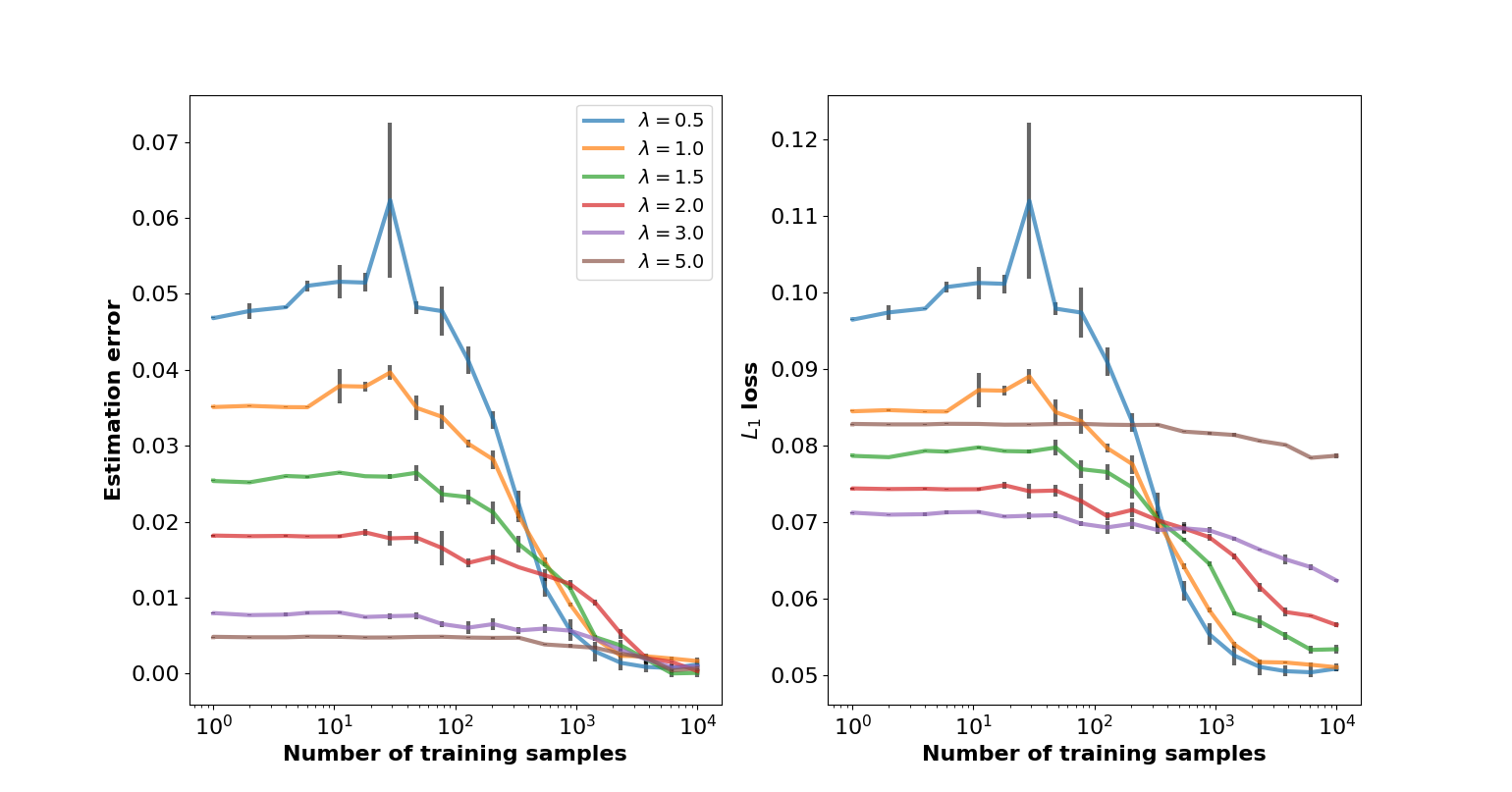}
        \caption{Estimation error and loss of ISTA networks with $8$ layers, as a function of the soft-threshold's value $\lambda$.}
    \label{fig:ISTA lambda depth 8}
\end{figure}

\end{document}